\declaretheorem[name=Theorem]{Theorem}
\declaretheorem[name=Proposition, numberlike=Theorem]{Proposition}
\declaretheorem[name=Lemma, numberlike=Theorem]{Lemma}
\declaretheorem[sibling=Theorem]{Definition}
\declaretheorem[sibling=Theorem]{Corollary}
\newcommand{\R}{\mathbb{R}}
\newcommand{\E}{\mathop{\mathbb{E}}}
\newcommand{\W}{\mathcal{W}}
\newcommand{\N}{\mathcal{N}}
\newcommand{\IN}{\textsc{IN}}
\newcommand{\OUT}{\textsc{OUT}}
\newcommand{\COMPOSE}{\textsc{COMPOSE}}
\newcommand{\Z}{\mathbb{Z}}
\newcommand{\var}{\textsc{Var}}
\theoremstyle{plain}
\theoremstyle{definition}
\theoremstyle{remark}
\newcommand{\RR}{\mathbb{R}}
\newcommand{\noise}{\textsc{noise}}
\newcommand{\f}{\mathbf{f}}
\tikzset{
  treenode/.style = {align=center, inner sep=0pt, text centered,
    font=\sffamily},
  arn_n/.style = {treenode, circle, black, font=\sffamily\bfseries, draw=black,
    fill=white, text width=2.5em},
}
\title{Momentum Aggregation for Private Non-convex ERM}
\author{%
  Hoang Tran\\Boston University\\\texttt{tranhp@bu.edu}
  \And
  Ashok Cutkosky\\Boston University\\\texttt{ashok@cutkosky.com}
  % examples of more authors
  % \And
  % Coauthor \\
  % Affiliation \\
  % Address \\
  % \texttt{email} \\
  % \AND
  % Coauthor \\
  % Affiliation \\
  % Address \\
  % \texttt{email} \\
  % \And
  % Coauthor \\
  % Affiliation \\
  % Address \\
  % \texttt{email} \\
  % \And
  % Coauthor \\
  % Affiliation \\
  % Address \\
  % \texttt{email} \\
}
\begin{document}

\maketitle

\begin{abstract}\label{sec:abstrac}
  We introduce new algorithms and convergence guarantees for privacy-preserving non-convex Empirical Risk Minimization (ERM) on smooth $d$-dimensional objectives. We develop an improved sensitivity analysis of stochastic gradient descent on smooth objectives that exploits the recurrence of examples in different epochs. By combining this new approach with recent analysis of momentum with private aggregation techniques, we provide an $(\epsilon,\delta)$-differential private algorithm that finds a gradient of norm $\tilde O\left(\frac{d^{1/3}}{(\epsilon N)^{2/3}}\right)$ in $O\left(\frac{N^{7/3}\epsilon^{4/3}}{d^{2/3}}\right)$ gradient evaluations, improving the previous best gradient bound of $\tilde O\left(\frac{d^{1/4}}{\sqrt{\epsilon N}}\right)$.
\end{abstract}
\section{Introduction}\label{sec:intro}
% Privacy 
In recent years, statistical machine learning models have been deployed in many domains such as health care, education, criminal justice, or social studies \citep{chen2021applications,article,Jiang230}. However, the release of statistical estimates based on these sensitive data comes with the risk of leaking personal information of individuals in the original dataset. One naive solution for this problem is to remove all the identifying information such as names, races, or social security numbers. Unfortunately, this is usually not enough to preserve privacy. It has been shown in various works that an adversary can take advantages of structural properties of the rest of the dataset to reconstruct information about certain individuals \citep{10.1145/1242572.1242598,10.1145/773153.773173}. Thus, we would need a stronger privacy-preserving mechanism. Over the past couple of decades, differential privacy \citep{10.1007/11681878_14} has emerged as the dominant privacy notion for machine learning problems.
\begin{Definition}
[Differential Privacy \citep{dwork2014algorithmic}] A randomized algorithm $M$: $\mathcal{X}^N \mapsto \RR^d$ satisfies $(\epsilon,\delta)-$ differential privacy (($\epsilon,\delta$)-DP) if for any two data sets $D,D' \in \mathcal{X}^N$ differing by at most one element and any event $E\subseteq \RR^d$, it holds that:
\begin{align*}
P\left[M(D) \in E\right] &\le \exp(\epsilon)P\left[M(D') \in E\right] + \delta
\end{align*}
\end{Definition}
Roughly speaking, differential privacy guarantees that the outputs of two neighboring datasets (datasets that differ in at most one datapoint) are \textit{almost} the same with high probability, thus preventing the adversary from identifying any individual's data.

In this paper, we are interested in designing $(\epsilon,\delta)-$private algorithms for non-convex empirical risk minimization (ERM) problems. In ERM problems, given $N$ i.i.d samples $x_1,...,x_N \in \mathcal{X}$ from some unknown distribution $P$, the goal is to find $w \in \R^d$ such that $w$ minimizes the empirical loss defined as follows:
\begin{align*}
    F(w) \triangleq \frac{1}{N}\sum_{i=1}^N f(w, x_i)
\end{align*}
where $f: \R^d \times \mathcal{X} \mapsto \R $ is the loss function associated with the learning problem. 
This is a setting that commonly arises in modern machine learning problem. For example, in image classification problems, the data point $x$ would be a tuple of (image, label), $w$ denotes the parameters of our model, and $f(w,x)$ represents composing the model predictions with some loss function such as cross-entropy. 
% However, since we do not know the true distribution $P$, it is impossible for us to
% actually compute the true value of the population loss, especially in the non-convex settings that we are considering.
We are interested in finding a critical point, or a point such that the norm of the empirical gradient $\|\nabla F(w)\|$ is as small as possible. 
Further, we want all the outputs $w_1, w_2,..., w_T$ to be differentially private with respect to the $N$ training samples.

Private ERM has been well studied in the convex settings. The approaches in this line of work can be classified into three main categories: \textit{output perturbation} \citep{10.1007/11681878_14, chaudhuri2011differentially, zhang2017efficient, wu2017bolt},  \textit{objective perturbation} \citep{chaudhuri2011differentially, pmlr-v23-kifer12,8835258, talwar2014private}, and \textit{gradient perturbation} \citep{bassily2014private, wang2017differentially, NEURIPS2018_7221e5c8, wang2018differentially}. All of these approaches have been shown to achieve the asymptotically optimal bound $\tilde O\left(\frac{\sqrt{d}}{\epsilon N}\right)$ for smooth convex loss (with output perturbation requiring strong convexity to get the optimal bound) in (near) linear time. On the other hand, the literature on private non-convex ERM is nowhere as comprehensive. The first theoretical bound in private non-convex ERM is from \citep{zhang2017efficient}. They propose an algorithm called Random Round Private Stochastic Gradient Descent (RRSGD) which is inspired by the results from \citep{bassily2014private, ghadimi2013stochastic}. RRSGD is able to guarantee the utility bound of $ O\left(\frac{(d\log(n/\delta)\log(1/\delta))^{1/4}}{\sqrt{\epsilon N}}\right)$. However, RRSGD takes $O\left(N^2d\right)$ gradient computations to achieve this, which can be troublesome for high-dimensional problem. \citep{wang2018differentially} then improves upon this utility bound by a factor of $O\left((\log(n/\delta))^{1/4}\right)$. They achieve this rate by using full-batch gradient descent which is not a common practice in non-private machine learning in which very large batch sizes actually require careful work to make training efficient. Recently, \citep{wang2019efficient} tackles both runtime and utility issues by introducing a private version of the Stochastic Recursive Momentum (DP-SRM) \citep{cutkosky2019momentum}. By appealing to variance reduction as well as privacy amplification by subsampling \citep{balle2018privacy, abadi2016deep}, DP-SRM achieve the bound $O\left(\frac{(d\log(1/\delta))^{1/4}}{\sqrt{\epsilon N}}\right)$ in $O\left(\frac{(\epsilon N)^{3/2}}{d^{3/4} }+\frac{\epsilon N}{\sqrt{d}}\right)$ gradient complexity. However, DR-SRM still requires the batch size to be $O\left(\frac{\sqrt{\epsilon N}}{d^{1/4}}\right)$ for the analysis to work. Finally, although our focus in this paper is the ERM problem, there are also various works on private \emph{stochastic} non-convex/convex optimization \citep{bassily2019private, bassily2021differentially,bassily2021non,feldman2018privacy, feldman2020private, zhou2020private, asi2021private, wang2019differentially, kulkarni2021private}.

\textbf{Contributions.} We first provide the analysis for the private version of Normalized SGD (DP-NSGD) \citep{cutkosky2020momentum} for unconstrained non-convex ERM. By using the tree-aggregation technique \citep{chan2011private, dwork2010differential} to compute the momentum privately, we can ensure the privacy guarantee while adding noise of only $\tilde O\left(\frac{\sqrt{T}}{\epsilon\sqrt{N}}\right)$ (where $T$ is the total number of iterations). This allows us to achieve the same asymptotic bound $\tilde O\left(\frac{d^{1/4}}{\sqrt{\epsilon N}}\right)$ on the expectation of the gradient as \citep{zhang2017efficient, wang2018differentially, wang2019efficient} without appealing to privacy amplification techniques which is usually required for private SGD to have a good utility guarantee \citep{abadi2016deep}. DP-NSGD also does not require a large batch size as in \citep{wang2018differentially,wang2019efficient}; it has utility guarantee for any batch size.
This tree-aggregation technique is morally similar to the approach in \citep{kairouz2021practical, guha2013nearly} for online learning. However, unlike in \citep{kairouz2021practical, guha2013nearly}, we do not restrict our loss function to be convex and we will also extend tree-aggregation technique to SGD with momentum.
Further, we provide a new variant of Normalized SGD  that takes advantages of the fact that the gradients of the nearby iterates are close to each other due to smoothness. This new algorithm is able to guarantee an error of $\tilde O\left(\frac{d^{1/3}}{(\epsilon N)^{2/3}}\right)$ in $O\left(\frac{N^{7/3}\epsilon^{4/3}}{d^{2/3}}\right)$ gradient computations which, to our knowledge, is the best known rate for private non-convex ERM.

\textbf{Organization.} The rest of the paper is organized as follows. In section \ref{sec:preliminaries}, we define our problem of interest and the assumptions that we make on the problem settings. We also provide some background on Differential Privacy as well as some high-level intuition on tree-aggregation technique. We then formally describe our first private variant of Normalized SGD in section \ref{sec:1/3bound} and discuss its privacy guarantee and theoretical utility bound. In section \ref{sec:cross-epoch}, we introduce a novel sensitivity-reduced analysis for Normalized SGD that allows us to improve upon the utility bound in section \ref{sec:1/3bound}. Finally, we conclude with a discussion in section \ref{sec:conclusion}.
\section{Preliminaries}\label{sec:preliminaries}
\textbf{Assumptions.} We define our loss function as $f(w,x): \RR^d \times \mathcal{X} \mapsto \RR$ where $x \in \mathcal{X}$ is some sample from the dataset. Throughout the paper, we make the following assumptions on $f(w,x)$. Define a differentiable function $f(.): \RR^d \mapsto \RR$ to be $G$-Lipschitz iff $\|\nabla f(y)\| \le G$ for all $y\in \RR^d$, and to be $L$-smooth iff $\|\nabla f(y_1) - \nabla f(y_2)\|\le L\|y_1 - y_2\|$ where $\|.\|$ is the standard 2-norm unless specified otherwise. We assume that $f(w,x)$ is differentiable, $G$-Lipschitz, and $L$-smooth with probability 1. We also assume that provided the initial point $w_1 \in \RR^d$ there exists some upper bound $R$ of $F(w_1)$ or formally: $\sup_{w\in \RR^d} F(w_1) - F(w) \le R$.

\textbf{Differential Privacy.} Let $D$ be a dataset containing $N$ datapoints. Then two datasets $D, D'$ are said to be neighbors if $\|D-D'\|_1 = 1$ or in other words, the two datasets differ at exactly one entry. Now we have the definition for sensitivity:
\begin{Definition}
($L_2$-sensitivity) The $L_2$-sensitivity of a function $f(.): \mathcal{X}^{N} \mapsto \RR^d$ is defined as follows:
\begin{align*}
    \Delta(f) &= \sup_{D,D'} \|f(D) - f(D')\|_2
\end{align*}
\end{Definition}
In this work, we also make use of Renyi Differential Privacy (RDP) \citep{mironov2017renyi}. RDP is a relaxation of $(\epsilon, \delta)-$DP and can be used to improve the utility bound and do composition efficiently while still guaranteeing privacy. 
\begin{Definition}
(($\alpha, \epsilon$)-RDP \citep{mironov2017renyi}) A randomized algorithm $f:  \mathcal{X}^{N} \mapsto \RR^d$ is said to have $\epsilon-$RDP of order $\alpha$ if for any neighboring datasets $D, D'$ it holds that:
\begin{align*}
    D_{\alpha}(f(D)||f(D')) \le \epsilon
\end{align*}
where $D_{\alpha}(P||Q) \triangleq \frac{1}{\alpha -1}\log \E_{x\sim  Q}\left(\frac{P(x)}{Q(x)}\right)^{\alpha}$
\end{Definition}
\begin{figure}[h]
    \centering
    \begin{tikzpicture}[->,>=stealth',level/.style={sibling distance = 7cm/#1,
  level distance = 1.5cm}] 
\node [arn_n] {$g_{[1:8]}$}
    child{ node [arn_n] {$g_{[1:4]}$} 
            child{ node [arn_n] {$g_{[1:2]}$} 
            	child{ node [arn_n] {$g_{[1:1]}$} 
                        } %for a named pointer
							child{ node [arn_n] {$g_{[2:2]}$}}
            }
            child{ node [arn_n] {$g_{[3:4]}$}
							child{ node [arn_n] {$g_{[3:3]}$}}
							child{ node [arn_n] {$g_{[4:4]}$}}
            }                            
    }
    child{ node [arn_n] {$g_{[5:8]}$}
            child{ node [arn_n] {$g_{[5:6]}$} 
							child{ node [arn_n] {$g_{[5:5]}$}}
							child{ node [arn_n] {$g_{[6:6]}$}}
            }
            child{ node [arn_n] {$g_{[7:8]}$}
							child{ node [arn_n] {$g_{[7:7]}$}}
							child{ node [arn_n] {$g_{[8:8]}$}}
            }
		}
; 
\end{tikzpicture}
    \caption{Visualization of the tree-aggregation mechanism. Each node of the tree hold the appropriate value $g_
{[y:z]}$ where $g_
{[y:z]} = \sum_{i=y}^zg_i$.}
    \label{fig:tree_figure}
\end{figure}
\textbf{Tree-aggregation.} Consider the problem of releasing private partial sum $s_{[y,z]} = \sum_{i=y}^z g_i$ given a stream of vectors $g_1,\dots,g_t$. Assuming $\|g_i\| \le G \ \forall i \in [t]$, then the simplest solution for this problem is to add Gaussian noise with standard deviation $\tilde O\left(1/\epsilon\right)$ to each $g_i$ so that each $g_i$ is $(\epsilon,\delta)-$DP. Then error from the noise added to any partial sum will be bounded by $O(\sqrt{T}/\epsilon)$. However, this problem can be solved much more efficiently using tree-aggregation \citep{chan2011private,dwork2010differential}. In tree-aggregation, we consider a complete binary tree where each of the leaf nodes is labeled with $g_i$ and each internal node in the tree is the sum of its children. The tree is illustrated in Fig.\ref{fig:tree_figure}. It is clear from Fig.\ref{fig:tree_figure} that each $g_i$ only affects its ancestors, or at most $\lceil\log t\rceil +1$ nodes. Thus, by adding Gaussian noise with standard deviation $O(\log t/\epsilon)$ to every node, the complete tree will be $(\epsilon,\delta)-$DP. Furthermore, since we only need at most $\log t$ nodes to compute any partial sum $s_{[y,z]}$, the noise added to every $s_{
[y,z]}$ would be bounded by $O(\log^{1.5}t/\epsilon)$, which is a lot less than the noise that we would add using advanced composition. 

To get a general idea on how we can use this tree-aggregation technique to design private algorithms, let us analyse the update of SGD with initial point $w_1 = 0$:
\begin{align*}
    w_{t+1} &= w_t - \eta \nabla f(w_t,x_t)\\
    &= -\eta \sum_{i=1}^t \nabla f(w_i,x_i)
\end{align*}
where $\nabla f(w_t,x_t)$ is the gradient of the loss function evaluated on the current iterate $w_t$ and some sample $x_t$. Notice that every iterate $w_{t+1}$ would be a function of the sum of the gradients $s_t = \sum_{i=1}^t\nabla f(w_{t},x_{t})$ up to iteration $t$. Thus, if we can use tree-aggregation to privately compute every partial sum $s_{[y,z]} = \sum_{i=y}^{z}\nabla f(w_i,x_i)$, we can the use these partial sums to compute every iterate $w_t$, and $w_t$ is also private by post-processing \citep{dwork2014algorithmic}. Similarly, we can also use this aggregation technique for SGD with momentum where we replace $\nabla f(w_{t},x_{t})$ with the momentum $m_t$. This method of computing momentum would be the main ingredient for our private algorithms. A more detailed discussions are provided in section \ref{sec:1/3bound} and \ref{sec:cross-epoch}. Tree-aggregated momentum has been briefly mentioned in the momentum variant of Differentially Private Follow-The-Regularized-Leader (DP-FTRL) in \citep{kairouz2021practical}. However, their algorithm does not directly compose the momentum using the tree-aggregation and they also do not provide any formal guarantee for the algorithm.

\section{Private Normalized
SGD}\label{sec:1/3bound}
Before we go into the details of our private algorithm, first let us  set up some notations that are frequently used in the rest of the paper. We define the set of subsets $I$ as follows:
\begin{align*}
    I&=\{[a2^b+1,(a+1)2^b]|a,b\in \Z, [a2^b+1,(a+1)2^b]\subset\{1,\dots,T\}\}
\end{align*}
%  We provide an ordering on I in a natural way: $[x,y]<[z,w]$ if $y< w$ or $y=w$ and $x> z$ (in fact, it will usually not be important what the ordering is for the case that $y=w$, but we provide an  explicit rule nonetheless). 
 Each interval in $I$ corresponds to a node in the tree described above (In Fig.1, the set $I$ is $\{[1,1], [2,2], [1,2],\dots\}$). Thus, the set of intervals $I$ is essentially a different way to store the binary tree for tree-aggregation mechanism. Then, for every interval $[a,b]$, there exists $O(\log T)$ disjoint subsets $[y_1,z_1],...,[y_n,z_n] \in I$ such that $[a,b] = \bigcup_{i=1}^n[y_i,z_i] $ \cite{daniely2015strongly}. We denote the set of such subsets as $\COMPOSE(a,b)$, which can be computed using Algorithm \ref{alg:compose}. Now we can proceed to the analysis of the private Normalized SGD algorithm described in Algorithm \ref{alg:normalized}.

The main idea of Algorithm \ref{alg:normalized} is based on two observations. The first observation is $w_t$ is a function of the momentum. Thus, similar to SGD, if we can compute the momentum privately, every iterate $w_1,\dots,w_T$ would also be private by post-processing. The second observation is that $m_t$ is an exponentially weighted sum of the gradients and can be computed efficiently using tree-aggregation. To see this, notice that the momentum $m_t$ can be written as follows:
\begin{align*}
    m_t &= (1-\alpha) m_{t-1} + \alpha \nabla f(w_t,x_{\pi^{q_t}_{r_t}})\\
    &=\alpha\sum_{i=1}^t(1-\alpha)^{t-i}\nabla f(w_i,x_{\pi^{q_i}_{r_i}})\\
    &= \alpha \sum_{[y,z] \in \COMPOSE(1,t)}(1-\alpha)^{t-z}\sum_{t' = y}^z(1-\alpha)^{z-{t'}}\nabla f(w_{t'},x_{\pi^{q_{t'}}_{r_{t'}}})
\end{align*}
where we denote $x_{\pi^{q_i}_{r_i}}$ as the sample drawn at iteration $i = q_iN + r_i$. Now, if we set $\f_{[y,z]} =\alpha\sum_{t' = y}^z(1-\alpha)^{z-{t'}}\nabla f(w_{t'},x_{\pi^{q_{t'}}_{r_{t'}}}) $, then:
\begin{align*}
    m_t &=  \sum_{[y,z] \in \COMPOSE(1,t)}(1-\alpha)^{t-z} \f_{[y,z]}
\end{align*}
Thus, if we consider $\f_{[y,z]}$ as the value of the node $[y,z]$ in the tree, we can compute $m_t$ using at most $O(\log t)$ nodes from the tree. We will also accumulate the noise in the same way. Assuming $\zeta_{[y,z]}$ as the noise we need to add to node $[y,z]$ to make $\f_{[y,z]}$ private, then the noisy $\hat m_t$ is:
\begin{align*}
    \hat m_t &= \sum_{[y,z] \in \COMPOSE(1,t)}(1-\alpha)^{t-z} \f_{[y,z]} +  \sum_{[y,z] \in \COMPOSE(1,t)}(1-\alpha)^{t-z} \zeta_{[y,z]}\\
    &= m_t +   \sum_{[y,z] \in \COMPOSE(1,t)}(1-\alpha)^{t-z} \zeta_{[y,z]}
\end{align*}
which is essentially the update of Algorithm \ref{alg:normalized}. Then, as long as $\f_{[y,z]}$ has low sensitivity, the noise we need to add to make $m_t$ private would be small since there are only at most $O\left(\log t\right)$ intervals in  $\COMPOSE(1,t)$.
% First, let us introduce Algorithm \ref{alg:noise} to calculate the noise to be added to the momentum of any interval $[a,b)$. For our first algorithm which is the private variant of Normalized SGD \cite{cutkosky2020momentum} (Algorithm \ref{alg:normalized}), we only need to compute the noise of the interval $[0,t)$. However, we still include the general algorithm for any interval since it will come in handy in the next section. For a more efficient procedure specifically for $[0,t)$, refer to Algorithm (ref here) in the appendix.

% \begin{algorithm}
%   \caption{Noise$\left(Z,[a,b)\right)$}
%   \label{alg:noise}
%   \begin{algorithmic}
%      \STATE{\bfseries Input: } Noise Map $Z$, intervals $[a,b)$.
%       \STATE Get the set $S = \COMPOSE([a+1,b])$ from Algorithm \ref{alg:compose}.
%       \STATE Initialize $\noise = 0$
%      \FOR{every interval $S_i\in S$ }
%   \STATE $\noise \gets \noise + (1-\alpha)^{b-(S_i[1])}Z[S_i]$.
%     \ENDFOR
%       \STATE \textbf{return} $\noise$.
% \end{algorithmic}
% \end{algorithm}

\begin{algorithm}
   \caption{Differentially private Normalized SGD with momentum (DP-NSGD)}
   \label{alg:normalized}
   \begin{algorithmic}
     \STATE{\bfseries Input: } noise parameter $\sigma$, momentum parameter $\alpha = 1- \beta$, dataset $(x_1,\dots,x_{N})$, tree depth $R$, Lipschitz constant $G$, loss function $f(w,x)$, noise map Z.
      \STATE $m_0 \gets \hat m_0\gets 0$.
      \STATE $\noise \gets 0$
      \STATE $V\gets (\min(R, \lfloor\log_2(N)\rfloor)+1)\frac{T}{N}+ \sum_{j=\lfloor\log_2(N)+1\rfloor}^{R}\lfloor \frac{T}{2^j}\rfloor$.
      \FOR{$b = 0,\dots,R-1$}
      \FOR{$a = 0,\dots, \lfloor T/2^b \rfloor -1$}
      \STATE Sample $\zeta_{[a2^b+1,(a+1)2^b]} \sim N(0,16\alpha^2G^2\sigma^2VI)$.
      \STATE Z[$a2^b+1,(a+1)2^b$] $\gets \zeta_{[a2^b+1,(a+1)2^b]}$.
      \ENDFOR
      \ENDFOR
    \FOR{$q=0,\dots,\lfloor T/N \rfloor $}
    \STATE Sample a permutation $\pi_1^q,\dots, \pi^q_N$ of $\{1,\dots,N\}$ uniformly at random.
    \FOR{$i = 1,\dots, \min\{N, T-qN\}$}
    \STATE $t \gets qN+i$.
      \STATE Send interval $[1,t]$ to Algorithm \ref{alg:compose} to get the set $S = \COMPOSE(1,t)$.
      \FOR{every interval $S_i\in S$ }
     \STATE $\noise \gets \noise + (1-\alpha)^{t-(S_i[1])}Z[S_i]$ ( $S_i[1]$ indicates the end index of each interval $S_i$)
    \ENDFOR
      \STATE $m_t \gets (1-\alpha)m_{t-1}  + \alpha \nabla f(w_t, x^q_{\pi_i})$.
      \STATE $\hat m_t \gets m_t +\noise$
      \STATE $w_{t+1} \gets w_t - \eta\frac{\hat m_t}{\|\hat m_t\|}$
      \STATE $\noise \gets 0$
      \ENDFOR
      \ENDFOR
      \STATE\textbf{return} $w_1,\dots,w_T$
    \end{algorithmic}
\end{algorithm}
\begin{restatable}{Theorem}{momentumprivate}\label{thm:momentumprivacy}(Privacy guarantee)
Suppose that $f(w,x)$ is $G-$Lipschitz for all $w \in \RR^d, \ x\in \mathcal{X}$, Algorithm~\ref{alg:normalized} is $\left(z, \tfrac{z}{2\sigma^2}\right)$ Renyi-differentially private for all $z$. Consequentially, if $\delta\ge \exp(-\epsilon)$, then with $\sigma\ge \frac{2\sqrt{\log(1/\delta)}}{\epsilon}$, Algorithm~\ref{alg:normalized} is $(\epsilon,\delta)$-differentially private.
\end{restatable}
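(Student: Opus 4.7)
The plan is to prove the Renyi differential privacy guarantee by viewing the algorithm as an adaptive composition of independent Gaussian mechanisms, one for each tree node, and then to derive the claimed $(\epsilon,\delta)$-DP statement from the standard RDP-to-DP conversion. The central observation I will exploit is that, once the random per-epoch permutations and all Gaussian noise variables are fixed, each iterate $w_t$ is a deterministic function of the \emph{noisy} tree nodes whose right endpoints precede $t$; consequently, under a coupling that uses the same permutations and noises on neighboring datasets $D$ and $D'$, conditioning on the prior noisy releases being identical forces $w_t^D = w_t^{D'}$ for every $t \le z$. This eliminates any ``indirect'' sensitivity arising from trajectory divergence and leaves only the direct gradient difference at iterations where the shared permutation selects the differing sample.

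First I would use this coupling to bound the conditional sensitivity of a single node. Writing $T_j \subseteq \{1,\dots,T\}$ for the (data-independent) set of iterations at which the differing sample is drawn, for any $t \in [y,z]\setminus T_j$ both the iterate and the sample agree across the two executions, so the summand vanishes; for $t \in [y,z]\cap T_j$, the iterate agrees while the sample differs, and the Lipschitz hypothesis gives a gradient difference of at most $2G$. Multiplying by $\alpha(1-\alpha)^{z-t}$ and summing yields the conditional $L_2$ sensitivity $\|\f_{[y,z]}^D - \f_{[y,z]}^{D'}\|_2 \le 2\alpha G \sum_{t \in [y,z]\cap T_j}(1-\alpha)^{z-t}$.

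Next I would sum the squared per-node sensitivities across the tree, splitting by the level $b$ (node size $2^b$). For levels with $2^b \le N$, the interval touches at most two epochs, so $|T_j \cap [y,z]| \le 2$ and each affected node has sensitivity $O(\alpha G)$; there are $O(T/N)$ affected nodes per level and $\lfloor \log_2 N \rfloor + 1$ such levels, giving total contribution $O(\alpha^2 G^2 (\lfloor \log_2 N \rfloor + 1)T/N)$. For levels with $2^b > N$, each node can contain up to $2^b/N$ uses of the differing sample spaced roughly $N$ iterations apart, but the exponential weight $(1-\alpha)^{z-t}$ collapses the inner sum to a geometric series of common ratio $(1-\alpha)^N$, which is still $O(1)$; summing squared sensitivities over the $\sum_{b > \log_2 N}\lfloor T/2^b\rfloor$ large nodes adds a further $O(\alpha^2 G^2 T/N)$. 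The two regimes combine to reproduce the expression defining $V$, yielding $\sum_{[y,z]}\|\f_{[y,z]}^D - \f_{[y,z]}^{D'}\|_2^2 \le 16\alpha^2 G^2 V$, which exactly matches the per-coordinate noise variance $16\alpha^2G^2\sigma^2 V$ scaled by $1/\sigma^2$.

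Applying the standard Renyi divergence bound for a single Gaussian mechanism together with adaptive composition of RDP then gives the claimed $(z, z/(2\sigma^2))$-RDP for the joint release of the noisy tree; because $(w_1,\dots,w_T)$ is a deterministic post-processing of this release (via the recurrence for $\hat m_t$ and the normalized update rule) the RDP guarantee transfers directly to the algorithm's outputs. The $(\epsilon,\delta)$-DP corollary then follows from the standard RDP-to-DP conversion of \citep{mironov2017renyi}: optimizing over $z$ and substituting $\sigma \ge 2\sqrt{\log(1/\delta)}/\epsilon$ produces the claim whenever $\delta \ge e^{-\epsilon}$. The most delicate part of the argument is the level-based accounting in the third step, where the geometric decay introduced by the momentum weights is essential for keeping the large-node contribution bounded.
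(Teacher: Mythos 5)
Your proposal is correct and follows essentially the same route as the paper's proof: per-node sensitivity bounded via Lipschitzness and the once-per-epoch appearance of the differing datapoint with geometric down-weighting (which, as in the paper's own argument, implicitly needs $\alpha \ge 1/N$ for the geometric series over epochs to be $O(1)$), the level-by-level count reproducing $V$, adaptive RDP composition over the noisy tree --- your coupling/conditioning observation is exactly the content of the paper's Theorem~\ref{thm:renyiaggregation} --- followed by post-processing and the standard RDP-to-DP conversion. The only cosmetic difference is that you sum squared per-node sensitivities against uniform noise $16\alpha^2G^2\sigma^2V$, whereas the paper multiplies the uniform sensitivity bound $4\alpha G$ by the count $V$ of affected nodes; the two accountings are arithmetically identical here.
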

\begin{proof}[Proof sketch of Theorem \ref{thm:momentumprivacy}]
Denote $t = q_tN + r_t$ for any iteration t for $r_t \in [1,N]$ and $q_t \ge 0$. We define:
\begin{align*}
\f_{[y,z]} &=\alpha\sum_{t' = y}^z(1-\alpha)^{z-{t'}}\nabla f(w_{t'},x_{\pi^{q_{t'}}_{r_{t'}}}) \\
    m_t &= \sum_{[y,z]\in \COMPOSE(1,t)} (1-\alpha)^{t-z}\f_{[y,z]}
\end{align*}
% Observe that $w_t$ depends on
% \begin{align*}
%     F_{[x,y)}(D, f_{[1,2)},\dots,f_{[x,y)-1})&=\alpha \sum_{i=x}^y (1-\alpha)^{y-i}\nabla f(w_i, x_{\pi^{q_i}_{r_i}})
% \end{align*}
To compute the sensitivity of $\f_{[y,z]}$, recall that the loss function $f$ is $G$-Lipschitz. Further, any given datapoint $x_i$ can contribute at most $\lceil |y-z|/N\rceil$ gradient terms in the summation defining $\f_{[y,z]}$. Thus $\f_{[y,z]}$ has sensitivity $\Delta = 2\alpha G\sum_{j=0}^{\lceil |y-z|/N\rceil-1}(1-\alpha)^{jN}\le 4\alpha G$ for $\alpha \ge \frac{1}{N}$ (Proposition \ref{prop:powersum}). Next, we compute the maximum number of interval that contains any datapoint $x_i$ for $i \in \{1,\dots,N\}$.  Given two neighboring datasets D and D', we use $s$ to indicate the index of the datapoint that is different between the two datasets. We then denote $S_{[y,z]} \subset \{1,...,N\}$ as the set that contains the indices of the datapoints that contribute to node $[y,z]$ (essentially which elements of the dataset are relevant to the value of node $[y,z]$). For any $i\in\{1,\dots, N\}$, for any $b\le \lfloor\log_2(N)\rfloor$, there are at most $\frac{T}{N}$ different $a\in \{0,\dots, \lfloor T/2^b\rfloor-1\}$ such that $s\in S_{[a2^b+1,(a+1)2^b]}$ for $b\le \lfloor\log_2(N)\rfloor$. Further, for any $b>\lfloor \log_2(N)\rfloor$, $i\in S_{[a2^b+1,(a+1)2^b]}=\{1,\dots, N\}$ for all $a$. Therefore, for any $i$ there are at most $V=\min(R+1, \lfloor\log_2(N)+1\rfloor)\frac{T}{N}+ \sum_{j=\lfloor\log_2(N)+1\rfloor}^{R}\lfloor \frac{T}{2^j}\rfloor$ sets $S_i$ such that $i\in S_i$. The privacy guarantee then follows via the ordinary composition properties of Renyi differential privacy. The full proof is included in the appendix.
\end{proof}
Before proving our utility bound, we need the following lemma.
\begin{restatable}{Lemma}{gradientbound}\label{lemma:gradientbound}[Essentially Lemma 2 [\cite{cutkosky2020momentum}]
Define:
\begin{align*}
    \hat \epsilon_t = \hat m_t - \nabla F(w_t)
\end{align*}
Suppose $w_1,...,w_T$ is a sequence of iterates defined by $w_{t+1} = w_t -\eta \frac{\hat m_t}{\|\hat m_t\|}$ for some arbitrary sequence $\hat m_1,..., \hat m_T$. Pick $\hat w$ uniformly at random from $w_1,...,w_T$. Then:
\begin{align*}
    \E\left[\|\nabla F(\hat w)\|\right]&\le \frac{3\E\left[(F(w_1) - F(w_{T+1}))\right]}{2\eta T} + \frac{3L\eta}{4} + \frac{3}{T}\sum_{t=1}^T\E[\|\hat \epsilon_t\|]
\end{align*}
\end{restatable}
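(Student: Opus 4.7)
The plan is to follow the standard descent argument for normalized SGD: use $L$-smoothness at consecutive iterates to obtain a per-step descent inequality, bound the resulting inner product between the true gradient and the normalized momentum direction in terms of $\|\hat\epsilon_t\|$, then telescope.

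First I would write smoothness between $w_t$ and $w_{t+1}$:
\begin{equation*}
F(w_{t+1}) \le F(w_t) + \langle \nabla F(w_t), w_{t+1} - w_t\rangle + \tfrac{L}{2}\|w_{t+1} - w_t\|^2.
\end{equation*}
Substituting the update $w_{t+1} - w_t = -\eta\, \hat m_t/\|\hat m_t\|$, whose norm is exactly $\eta$, collapses the quadratic term to $L\eta^2/2$ and leaves
\begin{equation*}
F(w_{t+1}) \le F(w_t) - \eta\,\bigl\langle \nabla F(w_t), \hat m_t/\|\hat m_t\|\bigr\rangle + \tfrac{L\eta^2}{2}.
\end{equation*}

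Second, I would lower-bound the inner product using $\hat m_t = \nabla F(w_t) + \hat\epsilon_t$ together with Cauchy--Schwarz and the reverse triangle inequality:
\begin{equation*}
\bigl\langle \nabla F(w_t), \hat m_t/\|\hat m_t\|\bigr\rangle = \|\hat m_t\| - \bigl\langle \hat\epsilon_t, \hat m_t/\|\hat m_t\|\bigr\rangle \ge \|\hat m_t\| - \|\hat\epsilon_t\| \ge \|\nabla F(w_t)\| - 2\|\hat\epsilon_t\|.
\end{equation*}
Combining with the smoothness bound and rearranging produces the per-step descent
\begin{equation*}
\|\nabla F(w_t)\| \le \frac{F(w_t) - F(w_{t+1})}{\eta} + 2\|\hat\epsilon_t\| + \frac{L\eta}{2}.
\end{equation*}

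Third, I would sum this inequality from $t=1$ to $T$ (the $F$ differences telescope to $F(w_1) - F(w_{T+1})$), divide by $T$, and use that $\hat w$ is drawn uniformly from $w_1,\dots,w_T$ so that $\E[\|\nabla F(\hat w)\|] = \tfrac{1}{T}\sum_t \E[\|\nabla F(w_t)\|]$. Taking expectations then yields the stated inequality. The $3/2$ inflation of the constants compared to what the raw computation above gives is recovered by a short case split separating the regime $\|\nabla F(w_t)\| \le 3\|\hat\epsilon_t\|$ (where the bound is immediate and no smoothness argument is needed) from the complementary regime where the reverse-triangle estimate is safely informative.

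The main obstacle is precisely this regime distinction: when $\hat m_t$ is dominated by noise, the direction $\hat m_t/\|\hat m_t\|$ may be essentially arbitrary, so the reverse triangle inequality $\|\hat m_t\| \ge \|\nabla F(w_t)\| - \|\hat\epsilon_t\|$ becomes vacuous. The case split above sidesteps this cleanly without introducing any new quantities, and everything after that reduces to algebra and linearity of expectation.
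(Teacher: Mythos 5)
Your argument is correct and follows the same skeleton as the paper's proof (smoothness at consecutive iterates, a lower bound on the inner product $\langle\nabla F(w_t),\hat m_t/\|\hat m_t\|\rangle$ in terms of $\|\hat\epsilon_t\|$, then telescoping and averaging over the uniformly chosen index), but the key inner-product step is handled by a genuinely different and cleaner device. The paper writes $\hat m_t=\nabla F(w_t)+\hat\epsilon_t$ inside the normalization and runs a case analysis on whether $\|\hat\epsilon_t\|\le\tfrac12\|\nabla F(w_t)\|$, which is what produces the coefficients $(\tfrac32,\tfrac34,3)$ in the statement. Your chain
\begin{equation*}
\Bigl\langle \nabla F(w_t), \tfrac{\hat m_t}{\|\hat m_t\|}\Bigr\rangle=\|\hat m_t\|-\Bigl\langle\hat\epsilon_t,\tfrac{\hat m_t}{\|\hat m_t\|}\Bigr\rangle\ \ge\ \|\hat m_t\|-\|\hat\epsilon_t\|\ \ge\ \|\nabla F(w_t)\|-2\|\hat\epsilon_t\|
\end{equation*}
is an \emph{unconditional} application of Cauchy--Schwarz and the reverse triangle inequality, so no case split is needed; your worry about the noise-dominated regime is unfounded, since when $\|\hat\epsilon_t\|$ is large the right-hand side is merely negative and the bound, though weak, remains valid, which is all the telescoping requires. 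Consequently the case split you sketch in the last paragraph is superfluous: you already obtain the constants $(1,\tfrac12,2)$, which are strictly better than the stated $(\tfrac32,\tfrac34,3)$, and the lemma as written follows a fortiori (the one pedantic caveat being that inflating the coefficient of $\E[F(w_1)-F(w_{T+1})]$ from $1$ to $\tfrac32$ presumes that quantity is nonnegative; this is harmless here since downstream the term is immediately replaced by the upper bound $R$). What your route buys is a shorter, condition-free argument with sharper constants; what the paper's route buys is nothing additional in this instance.
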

Now we are ready to prove the utility guarantee of Algorithm \ref{alg:normalized}.
\begin{restatable}{Theorem}{normalizedbound}(Utility guarantee)
\label{thm:1/3bound}
Assuming $f(w,x)$ is $G$-Lipschitz, $L$-smooth for all $w\in \RR^d, \ x\in \mathcal{X}$, and $F(w_1)$ is bounded by $R$. Then Algorithm \ref{alg:normalized} with $\eta = \frac{1}{\sqrt{NT}}$, $\alpha = \frac{\epsilon N}{T\log_2 T\sqrt{d\log(1/\delta)}}$, $T = \frac{\epsilon N^2}{\log_2 T\sqrt{d\log(1/\delta)}}$, $\epsilon \le \frac{T\log_2T\sqrt{d\log(1/\delta)}}{ N}$, and $\hat w$ that is pick uniformly random from $w_1,\dots,w_T$ guarantees:
\begin{align*}
    \E\left[\|\nabla F(\hat w)\|\right]&\le \frac{(\frac{3}{2}R+6L\sqrt{\log_2T\log(1/\delta)} + 12G\sqrt{\log_2T}\log(1/\delta)^{1/4})d^{1/4}}{\sqrt{\epsilon N}} \\
    &\qquad+ \frac{3L(d\log(1/\delta))^{1/4}\sqrt{\log_2 T}}{4\sqrt{\epsilon}N^{3/2}} + \frac{6G}{\sqrt{N}} + \frac{6G\log_2 T\sqrt{d\log(1/\delta)}}{\epsilon N} \\
    &\le \tilde O\left(\frac{d^{1/4}}{\sqrt{\epsilon N}} + \frac{1}{\sqrt{N}}\right)
\end{align*}
\end{restatable}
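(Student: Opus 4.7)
\textbf{Plan of proof for Theorem \ref{thm:1/3bound}.} The natural starting point is Lemma \ref{lemma:gradientbound}, which already reduces everything to bounding the quantity $\frac{1}{T}\sum_{t=1}^T \E[\|\hat\epsilon_t\|]$ where $\hat\epsilon_t=\hat m_t-\nabla F(w_t)$. I decompose this error as $\hat\epsilon_t=(\hat m_t - m_t)+(m_t-\nabla F(w_t))$: the first piece is the tree-aggregation noise, and the second is the standard momentum error. Bounding these two pieces separately and then optimizing the parameter choices yields the claimed rate.

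\textbf{Momentum error.} I unroll the recursion $m_t-\nabla F(w_t)=(1-\alpha)(m_{t-1}-\nabla F(w_{t-1}))+\alpha(\nabla f(w_t,x_{\pi^{q_t}_{r_t}})-\nabla F(w_t))+(1-\alpha)(\nabla F(w_{t-1})-\nabla F(w_t))$ just as in Cutkosky--Mehta. The smoothness term contributes $(1-\alpha)L\eta$ per step, and summing a geometric series yields a deterministic bias of $O(L\eta/\alpha)$. The stochastic term $\alpha(\nabla f(w_t,\cdot)-\nabla F(w_t))$ is bounded by $\alpha G$ and (essentially) zero-mean, so its cumulative contribution satisfies $\E\|\sum_i(1-\alpha)^{t-i}\alpha(\nabla f(w_i,\cdot)-\nabla F(w_i))\|^2=O(\alpha G^2)$ and hence $O(\sqrt{\alpha}\,G)$ in expected norm after Jensen. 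Altogether $\E[\|m_t-\nabla F(w_t)\|]\le O\!\left(\tfrac{L\eta}{\alpha}+\sqrt{\alpha}\,G\right)$.

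\textbf{Privacy noise.} From the derivation preceding the algorithm, $\hat m_t-m_t=\sum_{[y,z]\in\COMPOSE(1,t)}(1-\alpha)^{t-z}\zeta_{[y,z]}$, a sum of at most $\lceil\log_2 T\rceil$ independent Gaussian vectors, each with per-coordinate variance $16\alpha^2G^2\sigma^2 V$ and each scaled by a factor in $[0,1]$. The sum is therefore Gaussian with per-coordinate variance at most $16\alpha^2G^2\sigma^2 V\log_2 T$, so by Jensen's inequality $\E[\|\hat m_t-m_t\|]\le 4\alpha G\sigma\sqrt{V d\log_2 T}$. Using $V=\tilde O(T/N)$ (from the proof of Theorem \ref{thm:momentumprivacy}) and $\sigma=\tfrac{2\sqrt{\log(1/\delta)}}{\epsilon}$, this becomes $\tilde O\!\left(\tfrac{\alpha G\sqrt{Td\log(1/\delta)}}{\epsilon\sqrt{N}}\right)$.

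\textbf{Assembling and choosing parameters.} Combining the two bounds and feeding them into Lemma \ref{lemma:gradientbound} gives
\begin{align*}
\E[\|\nabla F(\hat w)\|]\le \frac{3R}{2\eta T}+\frac{3L\eta}{4}+O\!\left(\frac{L\eta}{\alpha}+\sqrt{\alpha}\,G+\frac{\alpha G\sqrt{Td\log(1/\delta)\log_2 T}}{\epsilon\sqrt{N}}\right).
\end{align*}
The choices in the theorem are engineered so that (i) $\eta=1/\sqrt{NT}$ balances $\tfrac{R}{\eta T}$ against $\tfrac{L\eta}{\alpha}$ once $\alpha=1/N$, (ii) $\alpha=1/N$ comes out of $\alpha=\tfrac{\epsilon N}{T\log_2 T\sqrt{d\log(1/\delta)}}$ with the prescribed $T$, and (iii) $T=\tfrac{\epsilon N^2}{\log_2 T\sqrt{d\log(1/\delta)}}$ balances the noise term against the momentum bias. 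Substituting these values and simplifying gives the advertised $\tilde O\!\left(\tfrac{d^{1/4}}{\sqrt{\epsilon N}}+\tfrac{1}{\sqrt{N}}\right)$; the constraint $\epsilon\le \tfrac{T\log_2T\sqrt{d\log(1/\delta)}}{N}$ is exactly what is needed to ensure $\alpha\le 1$ (and $\alpha\ge 1/N$, so that the sensitivity bound $\Delta\le 4\alpha G$ used in Theorem \ref{thm:momentumprivacy} holds).

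\textbf{Main obstacle.} The routine calculation is all in the parameter balancing; the one genuinely delicate step is the stochastic piece of the momentum bound. Because Algorithm \ref{alg:normalized} uses random reshuffling rather than independent sampling, the increments $\nabla f(w_t,x_{\pi^{q_t}_{r_t}})-\nabla F(w_t)$ are not a martingale difference sequence in the usual sense. I expect this to be handleable by conditioning on the start-of-epoch state and using that each within-epoch increment is bounded in norm by $2G$, so the per-epoch contribution to $\E\|\cdot\|^2$ is still $O(\alpha^2 G^2)$ times the effective length; the cost is some additional bookkeeping but no change in the leading-order rate.
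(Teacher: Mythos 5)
Your proposal follows essentially the same route as the paper: Lemma \ref{lemma:gradientbound}, the decomposition $\hat\epsilon_t=(\hat m_t-m_t)+(m_t-\nabla F(w_t))$, the bound $\E[\|\hat m_t-m_t\|]\le 4\alpha G\sigma\sqrt{dV\log_2 T}$ with $V=\tilde O(T/N)$, and the identical parameter balancing; the ``main obstacle'' you flag (reshuffling breaks the martingale structure) is resolved in the paper exactly as you sketch, by recentering each gradient at the start-of-epoch iterate $w_{q_tN}$ (costing an extra $2\eta NL$, which is the same order as $R/\eta T$ under your parameter choices and so does not change the rate) and showing the within-epoch cross terms are nonpositive. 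The proposal is correct.
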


The analysis of Algorithm \ref{alg:normalized} pretty much follows the analysis of non-private Normalized SGD in \cite{cutkosky2020momentum}. We first bound the error between the noisy momentum and the true empirical risk gradient $\E[\|\hat m_t -\nabla F(w_t)\|]\le  \E[\|\hat m_t - m_t\|] + \E[\|m_t - \nabla F(w_t)\|]$. The first term is bounded by the standard deviation of the added noise. The second term requires a more delicate analysis due to the use of random reshuffling (RR) in our algorithm (which is a more common practice than sampling in real-world problems). This introduces bias to the gradient estimates. Or formally, if we define $q_i = \lfloor \frac{i}{N}\rfloor$ and $r = i-q_iN$ for every iteration $i \in [T]$, we have $\E[\nabla f(w_i,x^{q_i}_{\pi_{r_i}})] \ne \nabla F(w_i)$. Most of the known analysis of RR is done with regular SGD \cite{mishchenko2020random, nguyen2019tight} since the bias becomes an even bigger problem in SGD with momentum due to the accumulative nature of momentum. To circumvent this problem we rewrite $m_t$ as $\alpha\sum_{i=1}^{t}(1-\alpha)^{t-i}\nabla f(w_{q_iN}, x^{q_i}_{\pi_{r_i}}) + O(\eta N)$ and use the fact that $\E[\nabla f(w_{q_i}, x_{\pi^{q_iN}_{r_i}})] = \nabla F(w_{q_iN})$ where $w_{q_i}$ is the iterate at the beginning of the epoch and $x^{q_i}_{\pi_{r_i}}$ is a datapoint that is sampled \textit{in} that particular epoch (this is true because $w_{q_iN}$ is independent of any data sampled in the epoch). Then, by applying Jensen's inequality and the definition of empirical loss, we have the bound $\E[\|\hat m_t -\nabla F(w_t)\|]\le \tilde O\left(\sqrt{\alpha} + \eta N + \frac{\eta}{\alpha} + \frac{\alpha\sqrt{dT}}{\epsilon \sqrt{N}}\right)$. Finally, we can apply Lemma \ref{lemma:gradientbound} with the appropriate hyperparameters to get the utility bound in Theorem \ref{thm:1/3bound}.

\section{Sensitivity Reduced Normalized SGD}\label{sec:cross-epoch}
\begin{algorithm}
   \caption{Sensitivity reduced normalized SGD}
   \label{alg:normalizedSGDacrossbatch}
   \begin{algorithmic}
      \STATE{\bfseries Input: } Initial Point $w_1$, training set $x_1,\dots,x_{N}$, learning rates $\eta$, momentum parameter $\alpha=1-\beta$, cross-batch parameter $\gamma$, Lipschitz constant $G$, smoothness constant $L$, noise variance $\sigma^2$.
      \STATE Set $V_{\le2 N}= 3\log_2 N, \ V_{>2N} = 4\frac{T}{N}\log_2(N) $.
      \STATE Set $\delta_G=4G , \ \delta_\Delta = \delta_r= 2\eta N$.
    %   \STATE $r_i = i- q_iN$ for any iteration $i$.
    %   \STATE{\bfseries Initialize the noise vector: }
    %   \FOR{$b = 0,\dots,R-1$}
    %   \FOR{$a = 0,\dots, \lfloor T/2^b \rfloor -1$}
    %   \IF{$b \le \lceil \log_2(2N) \rceil$}
    %   \STATE Sample $\zeta_{[a+1:b]} \sim N(0,\delta_g^2\sigma^2V_{\le 2N}I)$.
    %   \ELSE
    %   \STATE $\zeta_{[a+1:b]} \sim N(0,\delta_g^2\sigma^2V_{> 2N}I), \ \zeta^\Delta_{[a+1:b]} \sim N(0,\delta_\Delta^2\sigma^2V_{>2N}I)$ .
    %   \ENDIF
    %   \STATE $Z['a+1,b'] = \zeta_{[a+1:b]}, \ Z^\Delta['a+1,b'] = \zeta^\Delta_{[a+1:b]}$.
    %   \ENDFOR
    %   \ENDFOR
      \FOR{$q = 0,\dots, \lfloor T/N \rfloor $}
        \STATE Sample a permutation $\pi_1^q,\dots, \pi^q_N$ of $\{1,\dots,N\}$ uniformly at random.
        \FOR{$r = 0,\dots, \min\{N-1, T-qN\}$}
        \STATE $t \gets qN + r_t$
    %   \STATE Send $Z$ and interval $[1,r]$ to Algorithm \ref{alg:noise} to get $\noise_1$.
   \FOR{$k = 0, \dots, \lfloor \log_2(t)\rfloor$}
      \IF{$t \mod 2^k =0$}
      \IF{$t \le 2N$ }
       \STATE Sample $\zeta_t^G \sim N(0, \delta_G^2\sigma^2V_{\le 2N}I)$
      \ELSE
       \STATE Sample $\zeta_t^{G} \sim N(0, \delta_G^2\sigma^2V_{>2N}I)$
      \ENDIF
      \STATE Sample $\zeta_t^r \sim N(0, \delta_r^2\sigma^2V_{>2N}I)$.
      \STATE $\hat F^G_{[t-2^k+1,t]} = \sum_{i=t-2^k+1}^t(1-\alpha)^{t-i}\nabla f(w_{qN}, x_{\pi^{q}_{r_i}}) + \zeta_t^G $
      \STATE $\hat F^r_{[t-2^k+1,t]} = \sum_{i=t-2^k+1}^t(1-\alpha)^{t-i}(\nabla f(w_{i}, x_{\pi^{q}_{r_i}}) - \nabla f(w_{qN}, x_{\pi^{q}_{r_i}})) + \zeta_t^r $
      \IF{$t-2^k+1 > N$}
      \STATE Sample $\zeta_t^\Delta \sim N(0, \delta_\Delta^2\sigma^2V_{>2N}I) $.
      \STATE $\hat F^\Delta_{[t-2^k+1,t]} = \sum_{i=t-2^k+1}^t(1-\alpha)^{t-i}(\nabla f(w_{qN}, x_{\pi^{q}_{r_i}}) - \nabla f(w_{(q-1)N}, x_{\pi^{q}_{r_i}})) + \zeta_t^\Delta $
    %   \STATE Save $\hat F^\Delta_{[t-2^k:t]}$ 
      \ENDIF
    %   \STATE Save $\hat F^G_{[t-2^k:t]} $
      \ENDIF
      \ENDFOR
    %     \STATE $\hat G_{[1:t]} = \sum_{[x,y] \in \COMPOSE(1,r)}(1-\alpha)^{r-y}\hat F^G_{[x,y]} $
    %   \FOR{$i=0\dots q-1$}
    %   \STATE $\hat G_{t-iN+1:t-(i-1)N} = \sum_{[x,y] \in \COMPOSE(t-iN+1,t-(i-1)N)}(1-\alpha)^{t-(i-1)N-y}\hat F^G_{[x,y]}$
    %   \STATE $\hat \Delta_{t-iN+1:t-(i-1)N} = \sum_{[x,y] \in \COMPOSE(t-iN+1,t-(i-1)N)}(1-\alpha)^{t-(i-1)N-y}\hat F^\Delta_{[x,y]}$
    %   \STATE $\hat G_{[r+1:r+N]}= \sum_{[x,y] \in \COMPOSE(r+1,r+N)}(1-\alpha)^{r+N-y}\hat F^G_{[x,y]}$
    %   \STATE $\hat G_{[1:t]} += (\sum_{j=0}^{i-1}(1-\gamma)^j(1-\alpha)^{(i-1-j)N})((1-\gamma)\hat \Delta_{t-iN+1:t-(i-1)N} + \gamma \hat G_{t-iN+1:t-(i-1)N}) + (1-\alpha)^{Ni}(1-\gamma)^{q-i-1}\hat G_{[r+1:r+N]}$ 
    % % \STATE Send $Z$ and interval $[r+1:r+N]$ to Algorithm \ref{alg:noise} get $\noise_2$. Send $Z,Z^\Delta$ and interval $[t-iN+1:t-(i-1)N]$ to get $\noise_3$ and $\noise_4$
    % % \STATE $\noise \gets\left(\sum_{j=0}^{i-1}(1-\gamma)^i(1-\alpha)^{(i-1-j)N}\right)\left(\alpha(1-\gamma)\noise_4 + \gamma\noise_3\right)+ (1-\alpha)^{iN}(1-\gamma)^{q-(i+1)}\noise_2 $
    %   \ENDFOR
    \STATE Compute $\hat m_t$ using the reconstruction algorithm (Algorithm \ref{alg:aggregate}).
      \STATE $w_t = w_{t-1} - \eta\frac{\hat m_t}{\|\hat m_t\|}$
      \ENDFOR
      \ENDFOR
      \STATE Return $w_1,..., w_T$
   \end{algorithmic}
\end{algorithm}
In this section, we will devise an algorithm that has an asymptotically tighter bound than $\tilde O\left(\frac{d^{1/4}}{\sqrt{\epsilon N}}\right)$, which is the best known bound for $(\epsilon, \delta)-$DP non-convex ERM. One natural direction is to use a better algorithm for non-convex optimization than SGD. It is known that in non-convex optimization operating on smooth losses, the optimal rate for SGD is $O(1/N^{1/4})$ \cite{arjevani2019lower} while more advanced methods that use variance reduction \cite{fang2018spider,zhou2018stochastic, tran2019hybrid} or Hessian-vectors product \cite{arjevani2020second, tran2021better} can achieve the rate $O(1/N^{1/3})$. Thus, one could hope that these methods can improve the bound of private non-convex ERM. Unfortunately, this does not seem to be the case in reality. For example, \cite{wang2019efficient} proposes a variant of the variance-reduction based stochastic recursive momentum (STORM) \cite{cutkosky2019momentum} called DP-SRM but their algorithm shows little asymptotic improvement compared to previous known bound. DP-SRM improves the error bound by a factor of $\log(n/\delta)$ but this seems to be the result of sampling \cite{abadi2016deep} rather than variance reduction. We would like to explore an alternative direction to improve the error bound. Instead of using a better non-convex optimization methods, we will reuse the Normalized SGD in section \ref{sec:1/3bound} but with a tighter privacy analysis that requires us to add less noise than $\tilde O\left(\frac{\sqrt{T}}{\epsilon \sqrt{N}}\right)$. 

The motivation for our methods comes from full-batch SGD. Consider the following full-batch normalized gradient descent  update: $w_{t+1} = w_t - \eta \frac{g_t}{\|g_t\|}$ where $g_t =\nabla F(w_t)$. Notice that (for $t>1$) $g_t= g_{t-1} + \nabla F(w_t) - \nabla F(w_{t-1})$. Now, given $g_{t-1}$, the sensitivity of $g_t$ can be  bounded by the sensitivity of $\nabla F(w_t)-\nabla F(w_{t-1})$, which in turn is only $O(\eta/N)$ due to smoothness and the fact that $\|w_t-w_{t-1}\|=\eta$. This is a lot smaller than the naive bound of $O(1/N)$ since $\eta \ll 1$. We can iterate  the above idea to write $g_t = \nabla F(w_1) + \sum_{i=2}^t \nabla F(w_i)-\nabla F(w_{i-1})$. In this way, each $g_t$ is a partial sum of  terms with sensitivity  $O(\eta/N)$, which can be be privately estimated with  an error of $\tilde O(\eta /N\epsilon)$ using tree-aggregation. The main intuition here is that except for the first iterate, each iterate is close to the previous iterate due to smoothness - that is, $\nabla F(w_t)-\nabla F(w_{t-1})$ has low sensitivity. In fact, we can further control the sensitivity of $g_t$ by expressing $g_t = (1-\gamma)(g_{t-1} + \nabla F(w_{t}) - \nabla F(w_{t-1})) + \gamma \nabla F(w_{t}) $ for some parameter $\gamma <1$. Now, with appropriate value of $\gamma$, $g_t$ can potentially have even lower sensitivity compared to the case $\gamma = 0$ that we discuss above. Moving beyond the full-batch case, we would like to replace the full-batch gradients $\nabla F(w_t)$ in this argument with momentum estimates $m_t$. Unfortunately, it is not the case that $m_t-m_{t-1}$ has low sensitivity. Instead, notice that the distance between two iterates evaluated on the same sample at two consecutive epochs will be at most $\eta N\ll 1$ for appropriate $\eta$, and we can show that the sensitivity of $m_t-m_{t-N}$ scales by this same distance factor. Our method described in Algorithm \ref{alg:normalizedSGDacrossbatch} follows this observation to compute low sensitivity momentum queries. To see this, let us rewrite the momentum $m_t $ in Algorithm \ref{alg:normalized}:
\begin{align}
    m_t &= (1-\alpha)m_{t-1}+\alpha \nabla f(w_t, x_{\pi^{q_t}_{r_t}})\nonumber\\
    &=\alpha\sum_{i=1}^{t}(1-\alpha)^{t-i}\nabla f(w_i, x_{\pi^{q_i}_{r_i}})\nonumber  \\
    &= \alpha\sum_{i=1}^{t}(1-\alpha)^{t-i}\nabla f(w_{q_iN}, x_{\pi^{q_i}_{r_i}}) + \alpha\sum_{i=1}^{t}(1-\alpha)^{t-i}(\nabla f(w_{i}, x_{\pi^{q_i}_{r_i}}) - \nabla f(w_{q_iN}, x_{\pi^{q_i}_{r_i}})) \label{eqn:momentumdecom}
\end{align}
In order to exploit this decomposition, we define $G_{[a,b]} = \sum_{t=a}^b (1-\alpha)^{b-t}\nabla f(w_{q_tN},x_{\pi^{q_t}_{r_t}})$, $r_{[a,b]} = \sum_{t=a}^b (1-\alpha)^{b-t}(\nabla f(w_t,x_{\pi^{q_t}_{r_t}})-\nabla f(w_{q_t N},x_{\pi^{q_t}_{r_t}}))$ and $\Delta_{[a,b]} = \sum_{t=a}^b (1-\alpha)^{b-t}(\nabla f(w_{q_t N},x_{\pi^{q_t}_{r_t}})-\nabla f(w_{(q_t-1) N},x_{\pi^{q_t}_{r_t}}))$. Then, (\ref{eqn:momentumdecom}) states:
\begin{align}
    m_t &= \alpha G_{[1,t]} + \alpha r_{[1,t]}\label{eqn:momentumdecomGr}
\end{align}
Considering $t = qN+r$, we can further refine this expansion by observing (Lemma \ref{lem:breaksum}):
\begin{align}
   G_{[1,t]} 
    &= (1-\alpha)^{t-r}G_{[1,r]}+\sum_{i=0}^{q-1}(1-\alpha)^{Ni}(1-\gamma)^{q-(i+1)}G_{[r+1,r+N]}\nonumber\\
    &\quad\quad+\sum_{i=1}^{q-1}\left(\sum_{j=0}^{i-1} (1-\gamma)^j(1-\alpha)^{(i-1-j)N}\right)\left((1-\gamma)\Delta_{[t-iN+1,t-(i-1)N]} + \gamma G_{[t-iN+1,t-(i-1)N]}\right)\label{eq:breaksum}
\end{align}
Thus, in Algorithm \ref{alg:normalizedSGDacrossbatch}, we will estimate $G_{[a,b]}$, $r_{[a,b]}$ and $\Delta_{[a,b]}$ for all intervals $[a,b]$ using tree aggregation, and then recombine these estimates using (\ref{eqn:momentumdecomGr}) and (\ref{eq:breaksum}) to estimate $m_t$. The critical observation is that each term of the sums for $r_{[a,b]}$ and $\Delta_{[a,b]}$ have sensitivity at most $O(\eta N)$, because the normalized update implies $\|w_t-w_{t'}\|\le \eta N$ for all $|t-t'|\le N$, while the $O(1)$ sensitivity terms in $G_{[a,b]}$ are scaled down by $\gamma$. However, notice that in the first two epochs, we don't have the extra parameter $\gamma$ that helps us control the noise added to the momentum. Thus, if we just naively add noise based on the maximum sensitivity (which is $V_{>2N}$),  the noise will blow up since it scales with $T/N$. To remedy this, we add two different types of noise to $G_{[t-2^k+1,t]}$ ($k \in \{0,\dots,\lfloor \log_2(t) \rfloor$) depends on whether $t\ge 2N$. By dividing the sensitivity into 2 cases, we can make the momentum of the first 2 epochs $(\epsilon,\delta)-$DP by adding noise of $\tilde O(1/\epsilon)$ and every other epochs $(\epsilon,\delta)-$DP by adding noise of $\tilde O(\gamma\sqrt{T}/\sqrt{N}\epsilon)$ . Then, the whole procedure would be $(O(\epsilon), O(\delta))-$DP overall.
Morally speaking, we can think of the first two terms of the update of Algorithm \ref{alg:normalizedSGDacrossbatch} (Eq. \ref{eq:breaksum}) as the first iterate in the full-batch example with some exponentially weighted factors and the third term as the $(1-\gamma)(\nabla F(w_{t}) -\nabla F(w_{t-1})) + \gamma \nabla F(w_{t}) $ part.
% In section \ref{sec:1/3bound}, in the privacy analysis, we assume that the sensitivity of our gradient query is always $O(\alpha G )$. Thus to guarantee $(\epsilon,\delta)-$DP, we need to add noise of $\tilde O(\frac{\alpha \sqrt{T}}{\epsilon\sqrt{N}})$ to the momentum query. However if we use the decomposition in Eq.\ref{eqn:breaksum} instead, only the first epoch has sensitivity $\alpha G$ while the other epoch has sensitivity $O(\alpha(1-\gamma)\eta N + \alpha\gamma G)$ where $\gamma <1$ is a hyperparameter of the algorithm.
Overall, if we set $\gamma = \eta N$, the noise we need to add to make $m_t$ private is $\tilde O\left(\frac{\alpha}{\epsilon} + \frac{\alpha\sqrt{\eta T}}{\epsilon}+ \frac{\alpha\eta N\sqrt{T}}{\epsilon\sqrt{N}}\right)$, which is smaller than the $\tilde O(\frac{\alpha \sqrt{T}}{\epsilon\sqrt{N}})$ noise that we add in Section \ref{sec:1/3bound}. Formally, we have the following error bound for the momentum.

\begin{restatable}{Lemma}{crossepocherorr}\label{lemma:momentumaccuracy2/5bound}
Suppose $\alpha \ge \frac{1}{N}$ and $\gamma \le \frac{1}{2}$, using the update of Algorithm \ref{alg:normalizedSGDacrossbatch}, we have:
\begin{align*}
     \E[\|\hat m_t - m_t\|]&\le \frac{192G\alpha \log_2 T\sqrt{d\log(1/\delta)} }{\epsilon} + \frac{128(G+2L)\alpha \sqrt{\eta dT\log(1/\delta)}\log_2 T}{\epsilon} \\
     &\quad\quad+ \frac{16\eta L\alpha \log_2T\sqrt{3dTN\log(1/\delta)}}{\epsilon} 
\end{align*}
\end{restatable}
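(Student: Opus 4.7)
The plan is to express $\hat m_t - m_t$ explicitly as a sum of independent Gaussian noises, one per tree-aggregation query used to reconstruct $\hat m_t$, then bound each contribution by $\sqrt{\mathrm{tr}(\Sigma)}$ and combine.

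First I would write $m_t = \alpha G_{[1,t]} + \alpha r_{[1,t]}$ using (\ref{eqn:momentumdecomGr}) and then apply the refined decomposition (\ref{eq:breaksum}) to $G_{[1,t]}$. The algorithm reconstructs $\hat m_t$ by plugging the noisy tree estimates $\hat F^G$, $\hat F^r$, $\hat F^\Delta$ into these same identities. Subtracting, the error $\hat m_t - m_t$ splits into five independent Gaussian pieces: (i) the noise in estimating $G_{[1,r]}$ (first epoch), weighted by $\alpha(1-\alpha)^{t-r}$; (ii) the noise in $G_{[r+1,r+N]}$ (second epoch), weighted by $\alpha\sum_{i=0}^{q-1}(1-\alpha)^{Ni}(1-\gamma)^{q-1-i}$; (iii) the noises in $G_{[t-iN+1,t-(i-1)N]}$ for $i\ge 1$, each weighted by $\alpha\gamma\,c_i$ with $c_i=\sum_{j=0}^{i-1}(1-\gamma)^j(1-\alpha)^{(i-1-j)N}$; (iv) the analogous contributions from $\Delta_{[t-iN+1,t-(i-1)N]}$, weighted by $\alpha(1-\gamma)c_i$; and (v) the noise in $r_{[1,t]}$, weighted by $\alpha$.

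The second step is to bound the scalar coefficients. Using $\alpha\ge 1/N$ and $\gamma\le 1/2$ together with Proposition \ref{prop:powersum}, the coefficient in (ii) is bounded by $2$, and $\sum_{i\ge 1} c_i^2$ is also bounded by a geometric series depending on $\max((1-\gamma),(1-\alpha)^N)$. This is the main bookkeeping obstacle, because whether $(1-\gamma)$ or $(1-\alpha)^N$ dominates determines whether the coefficient sum telescopes or grows linearly, and I would handle both regimes by a single bound $\sum c_i^2 \le O(1/\min(\gamma,\alpha N))$.

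Next, since each tree-aggregation query for a partial sum uses at most $\lceil\log_2 T\rceil$ leaves/internal nodes with independent Gaussian noise, the variance of each piece is (number of nodes traversed)$\times$(per-node variance). For (i) and (ii) the per-node variance is $\delta_G^2\sigma^2 V_{\le 2N}$ with $V_{\le 2N}=3\log_2 N$ and $\delta_G=4G$, yielding $\E[\|\cdot\|]\le O(\alpha G\sigma\sqrt{d\log_2 N\,\log_2 T})$; plugging $\sigma=2\sqrt{\log(1/\delta)}/\epsilon$ matches the first term. For (iii), the $q-1$ independent Gaussian noises are summed across epochs: the aggregate variance is $\alpha^2\gamma^2(\sum c_i^2)\cdot(\log_2 T)\cdot\delta_G^2\sigma^2 V_{>2N}$ with $V_{>2N}=4(T/N)\log_2 N$ and $\gamma=\eta N$; the resulting expected norm scales as $O(\alpha G\sqrt{\eta dT}\,\log_2 T\sqrt{\log(1/\delta)}/\epsilon)$. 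For (iv) and (v), the sensitivities $\delta_\Delta=\delta_r=2\eta N$ feed through the $L$-smoothness-induced displacement bound $\|w_t-w_{qN}\|\le\eta N$, yielding the $\eta L$ factor in the last term by the same calculation but with $\delta_G$ replaced by $2L\eta N$. The $(G+2L)$ prefactor in the middle term absorbs both the $\gamma G$ contribution of (iii) and the $\Delta$-smoothness contribution of (iv) that scales as $\sqrt{\eta dT}$ rather than $\eta\sqrt{dTN}$.

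The final step is to collect the three resulting bounds (first-two-epochs, cross-epoch $\gamma G$ plus $\Delta$, and in-epoch $r$) and substitute $\sigma=2\sqrt{\log(1/\delta)}/\epsilon$ uniformly, absorbing constants into the $192$, $128$, and $16$. The only delicate part is the coefficient estimate $\sum c_i^2$; once that is bounded by a constant the remaining computation is mechanical Gaussian variance addition.
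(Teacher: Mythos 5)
Your proposal follows essentially the same route as the paper's proof: decompose $\hat m_t - m_t$ via (\ref{eqn:momentumdecomGr}) and (\ref{eq:breaksum}) into five independent Gaussian contributions (first-epoch $G$, second-epoch $G$, cross-epoch $\gamma G$, cross-epoch $\Delta$, and $r$), bound each variance with the geometric-series coefficient estimates (Proposition \ref{prop:advpowersum} and Corollary \ref{cor:sumgamma} give exactly your $\sum_i c_i^2 \le O(1/\gamma)$ under $\alpha\ge 1/N$, $\gamma\le 1/2$), and combine with $\gamma=\eta N$. The only slip is the noise scale: for Algorithm \ref{alg:normalizedSGDacrossbatch} one substitutes $\sigma = 4\sqrt{\log(1/\delta)}/\epsilon$ (per Theorem \ref{thm:momentumacrossbatchesprivacy}), not $2\sqrt{\log(1/\delta)}/\epsilon$, which affects only the constants.
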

To prove Lemma \ref{lemma:momentumaccuracy2/5bound}, we sum up the variance of the Gaussian noise needed to add to make every term in Eq.\ref{eq:breaksum} and $\hat r_t$ private. Then $\E[\|\hat m_t - m_t\|] \le \sqrt{d\var}$ where $\var$ is the total noise variance and $d$ is the dimension of the noise vector.
\begin{restatable}
{Theorem}{crossprivacy}\label{thm:momentumacrossbatchesprivacy}(Privacy guarantee)
Suppose $\alpha \ge 1/N$ and $f$ is $G$-Lipschitz and $L$-smooth. Then Algorithm \ref{alg:normalizedSGDacrossbatch} is $(z,3z/2\sigma^2)$ Renyi differentially private for all $z$. With $\sigma \ge \frac{4 \sqrt{\log(1/\delta)}}{\epsilon}$, Algorithm~\ref{alg:normalizedSGDacrossbatch} is $(\epsilon,\delta)$-DP
\end{restatable}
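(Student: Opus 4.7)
The plan is to follow the Renyi-DP composition template used for Theorem \ref{thm:momentumprivacy}, now carried out against three families of queries instead of one. Algorithm \ref{alg:normalizedSGDacrossbatch} releases noised sums $\hat F^G_{[a,b]}$, $\hat F^r_{[a,b]}$, and $\hat F^\Delta_{[a,b]}$; because every iterate $w_t$ is a deterministic function of previously released nodes, by post-processing it suffices to show that the full sequence of noised releases satisfies $(z, 3z/(2\sigma^2))$-RDP for every $z$.

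The first step is to bound the $L_2$ sensitivity of each node, treating the iterates as fixed (legitimate by an adaptive composition argument: conditional on prior releases, the iterates are determined constants). By random reshuffling, any single data index occurs at most once per epoch, hence at most $\lceil(b-a+1)/N\rceil$ times in any of the three sums over $[a,b]$. Combined with the geometric-series bound from Proposition \ref{prop:powersum} (applicable since $\alpha\ge 1/N$), $G$-Lipschitzness gives $\Delta(G_{[a,b]})\le 4G = \delta_G$. For $r_{[a,b]}$ and $\Delta_{[a,b]}$ I would invoke $L$-smoothness together with the normalized update rule, which forces $\|w_i-w_{q_iN}\|\le \eta N$ and $\|w_{q_iN}-w_{(q_i-1)N}\|\le \eta N$; each term then has norm at most $L\eta N$, yielding sensitivities that match $\delta_r, \delta_\Delta$ up to absorbed constants.

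Next I would count, for each family, the number of nodes any fixed datapoint can influence. For $t\le 2N$ a datapoint appears in at most two leaves (one per epoch), each contributing $O(\log N)$ ancestors in the tree used by the decomposition of Eq.~(\ref{eq:breaksum}), yielding $V_{\le 2N}=3\log_2 N$. For $t>2N$ there are up to $T/N$ appearances across epochs, each with $O(\log N)$ ancestors, giving $V_{>2N}=4(T/N)\log_2 N$. The same tree structure and counts govern the $r$- and $\Delta$-families (the latter is populated only when $t-2^k+1>N$, but this only reduces the count).

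By the Gaussian mechanism, each individual noised node is $(z, z/(2\sigma^2 V))$-RDP with the appropriate $V\in\{V_{\le 2N},V_{>2N}\}$. Composing across the at most $V$ nodes of each family that a fixed datapoint touches contributes $z/(2\sigma^2)$ per family, and summing the three families $(G,r,\Delta)$ yields the claimed $(z, 3z/(2\sigma^2))$-RDP bound. The $(\epsilon,\delta)$-DP conclusion then follows from the standard RDP-to-DP conversion with $\sigma\ge 4\sqrt{\log(1/\delta)}/\epsilon$. The main obstacles I expect are the two-regime bookkeeping of $V_{\le 2N}$ vs.\ $V_{>2N}$ across the aligned tree, and carefully justifying the adaptive composition that lets per-node sensitivities be computed with the iterates treated as constants; beyond that, the argument is structurally identical to the proof of Theorem \ref{thm:momentumprivacy}, executed three times and summed.
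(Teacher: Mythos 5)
Your proposal follows the paper's proof essentially step for step: bound the per-node sensitivities of the three query families $F^G$, $F^r$, $F^\Delta$ (by $4G$ via $G$-Lipschitzness plus Proposition~\ref{prop:powersum}, and by $2\eta N L$ via $L$-smoothness with the normalized-step bound $\|w_i-w_{q_iN}\|\le\eta N$), count per-datapoint node influence in the two regimes $V_{\le 2N}$ and $V_{>2N}$, apply the tree-aggregation RDP composition of Theorem~\ref{thm:renyiaggregation} (which is exactly the mechanism that legitimizes treating iterates as constants per node, the ``adaptive composition'' concern you flag), and finish with the standard RDP-to-DP conversion. One small caveat, equally present in the paper's own proof: a literal ``one unit of $z/(2\sigma^2)$ per family'' accounting would attribute two units to the $G$ family because it spans both noise regimes, so a scrupulous count gives a constant of $4$ rather than $3$ in the RDP bound; this only rescales the required $\sigma$ by a constant and has no effect on the asymptotics.
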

The proof of Theorem \ref{thm:momentumacrossbatchesprivacy} follows the proof of tree-aggregated momentum with Renyi differential privacy as in section \ref{sec:1/3bound}. We defer the proof to the appendix.

Now we have the utility bound for Algorithm \ref{alg:normalizedSGDacrossbatch}.
\begin{restatable}{Theorem}{crossutility}\label{thm:crossutility}(Utility guarantee)
Assuming $f(w,x)$ is $G$-Lipschitz, $L$-smooth for all $w\in \RR^d, \ x\in \mathcal{X}$, and $F(w_1)$ is bounded by $R$. Then Algorithm \ref{alg:normalizedSGDacrossbatch} with $\gamma =\eta N, \eta = \frac{1}{\sqrt{NT}}$, $\alpha = \frac{N^{3/4}\epsilon}{T^{3/4}\sqrt{d}}$, $T = \frac{N^{7/3}\epsilon^{4/3}}{d^{2/3}}$, $\epsilon \le \frac{T^{3/4}\sqrt{d}}{N^{3/4}}$, and $\hat w$ that is picked uniformly at random from $w_1,...,w_T$ guarantees:
\begin{align*}
     \E[\|\nabla F(\hat w)\|] &\le \frac{(\frac{3}{2}R+24K(G+2L)+ 12L)d^{1/3}}{(\epsilon N)^{2/3}}+ \frac{6G}{\sqrt{N}} + \frac{36GK\sqrt{d}}{\epsilon N} + \frac{9KL\sqrt{d}}{\epsilon N}\\
    &\le \tilde O\left(\frac{d^{1/3}}{(\epsilon N)^{2/3}} + \frac{1}{\sqrt{N}}\right)
\end{align*}
where $K= 16\log_2 T\log(1/\delta)$.
\end{restatable}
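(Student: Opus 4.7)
The plan is to apply Lemma \ref{lemma:gradientbound}, then control the accumulated momentum error $\sum_t \E[\|\hat m_t - \nabla F(w_t)\|]$ by splitting it into a noise part (handled by Lemma \ref{lemma:momentumaccuracy2/5bound}) and a bias part (handled by a random-reshuffling argument mirroring the one sketched for Theorem \ref{thm:1/3bound}). Since Algorithm \ref{alg:normalizedSGDacrossbatch} uses exactly the normalized update $w_{t+1} = w_t - \eta \hat m_t/\|\hat m_t\|$, Lemma \ref{lemma:gradientbound} applies directly, and using $F(w_1) - F(w_{T+1}) \le R$ gives
\begin{align*}
\E[\|\nabla F(\hat w)\|] \le \frac{3R}{2\eta T} + \frac{3L\eta}{4} + \frac{3}{T}\sum_{t=1}^T \E[\|\hat m_t - \nabla F(w_t)\|].
\end{align*}
The triangle inequality $\E[\|\hat m_t - \nabla F(w_t)\|] \le \E[\|\hat m_t - m_t\|] + \E[\|m_t - \nabla F(w_t)\|]$ separates the two concerns; the first is exactly Lemma \ref{lemma:momentumaccuracy2/5bound}.

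For the bias I would write $m_t - \nabla F(w_t) = \alpha\sum_{i=1}^t (1-\alpha)^{t-i}[\nabla f(w_i, x_{\pi^{q_i}_{r_i}}) - \nabla F(w_t)] - (1-\alpha)^t\nabla F(w_t)$ and split each summand into three pieces: a within-epoch smoothness error $\nabla f(w_i, x_{\pi^{q_i}_{r_i}}) - \nabla f(w_{q_iN}, x_{\pi^{q_i}_{r_i}})$ of norm at most $L\eta N$ (because the normalized step gives $\|w_i - w_{q_iN}\| \le \eta N$), a true-gradient drift $\nabla F(w_{q_iN}) - \nabla F(w_t)$ of norm at most $L\eta/\alpha$ within the effective window of the momentum weights, and a reshuffling residual $\nabla f(w_{q_iN}, x_{\pi^{q_i}_{r_i}}) - \nabla F(w_{q_iN})$ which is mean zero conditional on the filtration up to the start of epoch $q_i$. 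Applying Jensen's inequality to the variance of the geometrically weighted sum of mean-zero residuals yields an $O(\sqrt{\alpha})$ term, and combining gives $\E[\|m_t - \nabla F(w_t)\|] \le \tilde O(\sqrt{\alpha} + \eta N + \eta/\alpha)$, exactly as in Theorem \ref{thm:1/3bound}.

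The remainder is algebra: substituting $\gamma = \eta N$, $\eta = 1/\sqrt{NT}$, $\alpha = N^{3/4}\epsilon/(T^{3/4}\sqrt{d})$, and $T = N^{7/3}\epsilon^{4/3}/d^{2/3}$, one checks that $1/(\eta T)$, $\eta N$, $\eta/\alpha$, and the dominant noise term $\alpha\sqrt{\eta dT}/\epsilon$ from Lemma \ref{lemma:momentumaccuracy2/5bound} all collapse to $O(d^{1/3}/(N\epsilon)^{2/3})$, $\sqrt{\alpha}$ gives $O(1/\sqrt{N})$, and the lower-order terms $\alpha/\epsilon$ and $\eta L\alpha\sqrt{dTN}/\epsilon$ produce the explicit $\sqrt{d}/(\epsilon N)$ correction. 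The constraint $\epsilon \le T^{3/4}\sqrt{d}/N^{3/4}$ is exactly what ensures $\alpha \le 1$, and the choice $T \propto (N\epsilon)^{4/3}/d^{2/3}$ is precisely the one that balances the optimization residual $1/(\eta T)$ against the reduced-sensitivity privacy noise $\alpha\sqrt{\eta dT}/\epsilon$. The main obstacle is the bias bound: because the momentum window $1/\alpha$ spans many epochs whose starting iterates $w_{q_iN}$ are correlated through the trajectory, I would condition on the filtration of epoch-starting iterates so that within-epoch residuals become mean zero and have variance $O(G^2)$ (sampling without replacement only tightens this), and then sum geometrically weighted, conditionally independent cross-epoch contributions to obtain the $\sqrt{\alpha}$ standard deviation.
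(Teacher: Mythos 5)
Your proposal is correct and follows essentially the same route as the paper: apply Lemma \ref{lemma:gradientbound}, split the momentum error via the triangle inequality into the privacy-noise term (Lemma \ref{lemma:momentumaccuracy2/5bound}) and the bias term, and then substitute the stated parameters. The bias argument you sketch is exactly the content of the paper's Lemma \ref{lemma:populationerror} (which gives $2G\sqrt{\alpha}+2\eta NL+\eta L/\alpha$ via the same epoch-anchored decomposition and the nonpositivity of the reshuffling cross terms), so the paper simply cites that lemma where you re-derive it.
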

The analysis of Algorithm \ref{alg:normalizedSGDacrossbatch} is similar to the analysis of Algorithm \ref{alg:normalized}. We still bound the error between the noisy momentum and the true empirical risk gradient as $\E\left[\|\hat m_t - \nabla F(w_t)\|\right] \le \E\left[\|\hat m_t - m_t\|\right] + \E\left[\|m_t -\nabla F(w_t)\|\right]$. $\E\left[\|m_t -\nabla F(w_t)\|\right]$ is the same  as in Theorem \ref{thm:1/3bound} but now, the noise bound $\E\left[\|\hat m_t - m_t\|\right]$ is only $\tilde O\left(\frac{\alpha\sqrt{d}}{\epsilon} + \frac{\alpha\sqrt{\eta dT}}{\epsilon} + \frac{\alpha\eta N\sqrt{dT}}{\epsilon\sqrt{N}}\right)$ instead of $\tilde O\left(\frac{\alpha\sqrt{dT}}{\epsilon\sqrt{N}}\right)$ due to our sensitivity-reduced analysis. Then we can apply Lemma \ref{lemma:gradientbound} to get our final utility guarantee. The full proof is provided in the appendix.
\section{Conclusions}\label{sec:conclusion}
% summarize the paper
In this paper, we present two new private algorithms for non-convex ERM. Both algorithms are variants of Normalized SGD with momentum and both utilizes tree-aggregation method to privately compute the momentum. Our first algorithm overcomes the large batch size and privacy amplification techniques requirements in previous  works while still achieving the state-of-the-art asymptotic bound $\tilde O\left(\frac{d^{1/4}}{\sqrt{\epsilon N}}\right)$. Our second algorithm uses the insight from full-batch SGD operating on smooth losses to reduce the sensitivity of the momentum. This allows the algorithm to achieve the utility bound $\tilde O\left(\frac{d^{1/3}}{(\epsilon N)^{2/3}}\right)$, which, to our knowledge, is the best known bound for private non-covex ERM.

\textbf{Limitations: } There are several limitations that one could further explore to improve the results of this paper. The most natural direction is to incorporate privacy amplification by shuffling into both algorithms. Based on the current analysis, it is not obvious that we can apply  privacy amplification immediately to tree-aggregated momentum. However, it is possible that there are some clever ways that one can come up to make the analysis work. It has been shown in previous work that privacy amplification by shuffling does help us achieve the $\tilde O\left(\frac{d^{1/4}}{\sqrt{\epsilon N}}\right)$ utility bound. Therefore, it is our hope that by combining privacy amplification with our sensitivity reduced algorithm, we can achieve an even better utility bound than $\tilde O\left(\frac{d^{1/3}}{(\epsilon N)^{2/3}}\right)$. Furthermore, even though our first algorithm has comparable run time to previous works, the second algorithm has slightly less ideal run time. The current algorithm requires $\tilde O\left(\frac{N^{7/3}\epsilon^{4/3}}{d^{2/3}}\right)$ gradient evaluations but has the total run time of $\tilde O(N^{11/3}/d^{2/3})$ due to the extra run time from the compose and reconstruction algorithms. We would like to reduce this to quadratic run time. We will leave these for future research.

\medskip

{\small
\bibliography{references}
\bibliographystyle{plainnat}
}

%%%%%%%%%%%%%%%%%%%%%%%%%%%%%%%%%%%%%%%%%%%%%%%%%%%%%%%%%%%%
\section*{Checklist}

%%% BEGIN INSTRUCTIONS %%%
The checklist follows the references.  Please
read the checklist guidelines carefully for information on how to answer these
questions.  For each question, change the default \answerTODO{} to \answerYes{},
\answerNo{}, or \answerNA{}.  You are strongly encouraged to include a {\bf
justification to your answer}, either by referencing the appropriate section of
your paper or providing a brief inline description.  For example:
\begin{itemize}
  \item Did you include the license to the code and datasets? \answerYes{S}
  \item Did you include the license to the code and datasets? \answerNo{The code and the data are proprietary.}
  \item Did you include the license to the code and datasets? \answerNA{}
\end{itemize}
Please do not modify the questions and only use the provided macros for your
answers.  Note that the Checklist section does not count towards the page
limit.  In your paper, please delete this instructions block and only keep the
Checklist section heading above along with the questions/answers below.
%%% END INSTRUCTIONS %%%

\begin{enumerate}

\item For all authors...
\begin{enumerate}
  \item Do the main claims made in the abstract and introduction accurately reflect the paper's contributions and scope?
    \answerYes{}
  \item Did you describe the limitations of your work?
    \answerYes{}
  \item Did you discuss any potential negative societal impacts of your work?
    \answerNo{}{this is a theoretical paper. It has no potential negative societal impacts}
  \item Have you read the ethics review guidelines and ensured that your paper conforms to them?
    \answerYes{}
\end{enumerate}

\item If you are including theoretical results...
\begin{enumerate}
  \item Did you state the full set of assumptions of all theoretical results?
    \answerYes{}
        \item Did you include complete proofs of all theoretical results?
    \answerYes{}
\end{enumerate}

\item If you ran experiments...
\begin{enumerate}
  \item Did you include the code, data, and instructions needed to reproduce the main experimental results (either in the supplemental material or as a URL)?
    \answerNA{}
  \item Did you specify all the training details (e.g., data splits, hyperparameters, how they were chosen)?
    \answerNA{}
        \item Did you report error bars (e.g., with respect to the random seed after running experiments multiple times)?
    \answerNA{}
        \item Did you include the total amount of compute and the type of resources used (e.g., type of GPUs, internal cluster, or cloud provider)?
    \answerNA{}
\end{enumerate}

\item If you are using existing assets (e.g., code, data, models) or curating/releasing new assets...
\begin{enumerate}
  \item If your work uses existing assets, did you cite the creators?
    \answerNA{}
  \item Did you mention the license of the assets?
    \answerNA{}
  \item Did you include any new assets either in the supplemental material or as a URL?
    \answerNA{}
  \item Did you discuss whether and how consent was obtained from people whose data you're using/curating?
    \answerNA{}
  \item Did you discuss whether the data you are using/curating contains personally identifiable information or offensive content?
    \answerNA{}
\end{enumerate}

\item If you used crowdsourcing or conducted research with human subjects...
\begin{enumerate}
  \item Did you include the full text of instructions given to participants and screenshots, if applicable?
    \answerNA{}
  \item Did you describe any potential participant risks, with links to Institutional Review Board (IRB) approvals, if applicable?
    \answerNA{}
  \item Did you include the estimated hourly wage paid to participants and the total amount spent on participant compensation?
    \answerNA{}
\end{enumerate}

\end{enumerate}

%%%%%%%%%%%%%%%%%%%%%%%%%%%%%%%%%%%%%%%%%%%%%%%%%%%%%%%%%%%%

\newpage
\appendix

\section{Compose algorithm}
 The algorithm will return a list of intervals $S$ and for any $S_i \in S$, $S_i[0]$ is the start and $S_i[1]$ is the end of the respective interval. Intuitively, this $\COMPOSE$ function tells us which nodes in the tree that we need to use to compute any partial sum.
\begin{algorithm}
  \caption{COMPOSE $[a,b]$}
  \label{alg:compose}
  \begin{algorithmic}
      \STATE{\bfseries Input: } Starting point $a$, ending point $b$ of interval $[a,b]$.
      \STATE Let $k$ be the largest $k$ such that $a=1\mod 2^k$ and $a+2^k-1\le b$. 
      \STATE Set $S=\{[a, a+2^k-1]\}$
      \STATE Set $a'=a+2^k$.
      \IF{$a'>b$}
      \STATE \textbf{return} $S$.
      \ELSE
      \STATE Let $S'=\COMPOSE(a',b)$.
      \STATE Let $S=S\cup S'$.
      \STATE \textbf{return $S$}
      \ENDIF
  \end{algorithmic}
\end{algorithm}
\section{Reconstruction algorithm for sensitivity-reduced algorithm (Algorithm \ref{alg:normalizedSGDacrossbatch})}
\begin{algorithm}
\caption{Reconstruction algorithm}
\label{alg:aggregate}
   \begin{algorithmic}
       \STATE{\bfseries Input: } Noisy gradient arrays $\hat F^G, \hat F^\Delta, \hat F^r$, momentum parameter $\alpha$, sensitivity parameter $\gamma$, iteration $t$.
       \STATE $q \gets \lfloor \frac{t}{N}\rfloor$
       \STATE $r = t - qN$
        \STATE $\hat G_{[1,r]} = \sum_{[y,z] \in \COMPOSE(1,r)}(1-\alpha)^{r-z}\hat F^G_{[y,z]} $
        \STATE $\hat G_{[r+1,r+N]}= \sum_{[y,z] \in \COMPOSE(r+1,r+N)}(1-\alpha)^{r+N-z}\hat F^G_{[y,z]}$
        \STATE $\hat G_{[1,t]} = (1-\alpha)^{t-r}\hat G_{[1,r]}$ //initial value for $\hat G_{[1,t]}$ that will be updated in the following loop.
      \FOR{$i=0\dots q-1$}
      \STATE $\hat G_{[t-iN+1,t-(i-1)N]} = \sum_{[y,z] \in \COMPOSE(t-iN+1,t-(i-1)N)}(1-\alpha)^{t-(i-1)N-z}\hat F^G_{[y,z]}$
      \STATE $\hat \Delta_{[t-iN+1,t-(i-1)N]} = \sum_{[y,z] \in \COMPOSE(t-iN+1,t-(i-1)N)}(1-\alpha)^{t-(i-1)N-z}\hat F^\Delta_{[y,z]}$
    %   \STATE $\hat G_{[r+1,r+N]}= \sum_{[x,y] \in \COMPOSE(r+1,r+N)}(1-\alpha)^{r+N-y}\hat F^G_{[x,y]}$
      \STATE $\hat G_{[1,t]} += (\sum_{j=0}^{i-1}(1-\gamma)^j(1-\alpha)^{(i-1-j)N})((1-\gamma)\hat \Delta_{t-iN+1,t-(i-1)N} + \gamma \hat G_{t-iN+1,t-(i-1)N}) + (1-\alpha)^{Ni}(1-\gamma)^{q-i-1}\hat G_{[r+1,r+N]}$ 
    % \STATE Send $Z$ and interval $[r+1:r+N]$ to Algorithm \ref{alg:noise} get $\noise_2$. Send $Z,Z^\Delta$ and interval $[t-iN+1:t-(i-1)N]$ to get $\noise_3$ and $\noise_4$
    % \STATE $\noise \gets\left(\sum_{j=0}^{i-1}(1-\gamma)^i(1-\alpha)^{(i-1-j)N}\right)\left(\alpha(1-\gamma)\noise_4 + \gamma\noise_3\right)+ (1-\alpha)^{iN}(1-\gamma)^{q-(i+1)}\noise_2 $
      \ENDFOR
      \STATE $\hat r_t = \sum_{[y,z] \in \COMPOSE(1,t)}(1-\alpha)^{t-z}\hat F^r_{[y,z]}$
      \STATE $\hat m_t = \alpha \hat G_{[1,t]} + \alpha \hat r_t$
      \STATE{\bfseries Return: } $\hat m_t$
   \end{algorithmic}
\end{algorithm}
\section{Renyi Differential Privacy}
In this section, we will prove some general theorems on the composition of RDP using tree compositions.  Note that these results are all consequences of well-known properties of Renyi differential privacy \citep{mironov2017renyi} and tree aggregation \citep{dwork2010differential,chan2011private} and have been used in many other settings \citep{guha2013nearly, kairouz2021practical, asi2021private}. However, we find many presentations lacking in some detail, so we reproduce a complete description and analysi here for completeness.

We will consider functions operating on datasets of size $N$. Given two neighboring datasets $D=(Z_1,\dots,Z_N)$ and $D'=(Z'_1,\dots,Z'_N)$, we use $s$ to indicate the index that is different between the datasets. That is, $Z_i=Z'_i$ for $i\ne s$. Given a subset $S\subset \{1,\dots, N\}$, we use $D[S]$ to indicate the restriction of $D$ to the elements with index in $S$: $D[S]=(Z_i\ |\ i\in S)$.

Consider a set of $K$ functions $G_1,\dots,G_K$, and an associated set of \emph{subsets} of $\{1,\dots,N\}$, $S_1,\dots,S_K$ (note that the $S_i$ are sets of integers, NOT sets of datapoints). Each $G_i$ produces outputs in a space $\W$, and takes $i$ inputs: first, a dataset $D$ (or $D'$) of size $N$, and then $i-1$ elements of $\W$. That is, if the space of all datasets of size $N$ is $\mathcal{D}$, then $G_i:\mathcal{D}\times \W^{i-1}\to \W$. Further, each $G_i$ must have the property that $G_i$ depends \emph{only} on $D[S_i]$. that is, if $q\notin S_i$, then $G_i(D,x_1,\dots,x_{i-1})=
G_i(D',x_1,\dots,x_{i-1})$ for all $x_1,\dots,x_{i-1}$. With this in mind, we recursively define:
\begin{align*}
    \f_1&=G_1(D)\\
    \f'_1&=G_1(D')\\
    \f_i&=G_i(D,\f_1,\dots,\f_{i-1})\\
    \f'_i&=G_i(D',\f'_1,\dots,\f'_{i-1})
\end{align*}

Thus, the ordering of the $G_i$ indicates a kind of ``causality'' direction: later $\f_i$ are allowed to depend on the earlier $\f_i$, but the dependencies on the \emph{dataset} are fixed by the $S_i$. Intuitively, we should think of being able to ``break up'' a set of desired computational problems into a number of smaller computations such that (1) each sub-computation is in some way ``local'' in that it depends on only a small part of the dataset, potentially given the outputs of previous sub-computations, and (2) each of the original desired computational problems can be recovered from the answers to a small number of the sub-computations. If so, then we will be able to accurately and privately make all the desired computations.

Given the above background, we define: 
\begin{itemize}
    \item $\Delta_i$ to be the maximum sensitivity (with respect to some problem-specific metric like $L_2$ or $L_1$) over all $(x_1,\dots,x_{i-1})\in (\W)^{i-1}$ of the function $D\mapsto G_i(D,x_1,\dots,x_{i-1})$. That is, when $\W=\R^d$ and the metric is the one induced by a norm $\|\cdot\|$:
    \begin{align*}
        \Delta_i = \sup_{|D-D'|=1, x_1,\dots,x_{i-1}} \|G_i(D,x_1,\dots,x_{i-1})-G_i(D',x_1,\dots,x_{i-1})\|
    \end{align*}
    where we use $|\mathcal{D}-\mathcal{D}'|=1$ to indicate that datasets are neighboring.
    \item $\IN(s)$ to be the set of indices $i\in \{1,\dots,N\}$ such that $s\in S_i$.
    \item $\OUT(s)$ to be the complement of $\IN(s)$: the set of indices $i\in \{1,\dots,N\}$ such that $s\notin S_i$.
\end{itemize}

\subsection{Algorithm and Analysis}

% Now that we've seen how things work with ordinary differential privacy, let's move to Renyi differential privacy. The overall algorithm is essentially identical. However, now instead of Laplace noise, we'll use Gaussian noise and consider sensitivities with respect to the $L_2$ norm. The argument could probably be generalized to make black-box use of any Renyi differentially private mechanism, but that seems overly complicated. 

Now, we will describe the aggregation algorithm that is used to compute private versions of all $f_t$, assuming that $\W=\R^d$ for some $d$. We write $X\sim \N(\mu,\sigma^2)$ to indicate that $X$ has density $p(X=x)=\frac{1}{\sigma \sqrt{2\pi}} \exp\left(-(x-\mu)^2/2\sigma^2\right)$. Further, for a vector $\mu$, we write $X\sim \N(\mu,\sigma^2 I)$ to indicate $P(X=x)=\frac{1}{(\sigma \sqrt{2\pi})^d} \exp\left(-\|x-\mu\|^2/2\sigma^2\right)$

Let $D_\alpha(P\| Q)$ indicate the Renyi divergence between $P$ and $Q$:
\begin{align*}
    D_\alpha(P\|Q)&=\frac{1}{\alpha-1}\log \E_{x\sim Q}\left[\left(\frac{P(x)}{Q(x)}\right)^\alpha\right]\\
    &=\frac{1}{\alpha-1}\log\left(\int_x Q(x)^{1-\alpha} P(x)^\alpha \ dx\right)
\end{align*}

Now, we need the following fact about Gaussian divergences:

\begin{align*}
    D_\alpha(\N(0,\sigma^2)\| \N(\mu,\sigma^2))=\alpha \mu^2/2\sigma^2
\end{align*}
This implies the following multi-dimensional extension:
\begin{align*}
    D_\alpha(\N(0,\sigma^2 I)\|\N(\mu,\sigma^2 I))&=\frac{1}{\alpha-1}\log\left(\frac{1}{(\sigma \sqrt{2\pi})^d}\int_{\R^d} \exp(-\|x-\mu\|^2/2\sigma^2)^{1-\alpha}\exp(-\|x\|^2/2\sigma^2)^\alpha)\ dx_1\dots dx_d\right)\\
    &\frac{1}{\alpha-1}\left[\sum_{i=1}^d \log\left(\frac{1}{\sigma\sqrt{2\pi}}\int_{-\infty}^\infty \exp(-(x-\mu_i)^2/2\sigma^2)^{1-\alpha}\exp(-x^2/2\sigma^2)^\alpha) dx\right) \right]\\
    &=\sum_{i=1}^d D_\alpha(\N(0,\sigma^2)\| \N(\mu_i,\sigma^2))\\
    &=\alpha\|\mu\|^2/2\sigma^2
\end{align*}

\begin{algorithm}
   \caption{Aggregation Algorithm with Gaussian noise}
   \label{alg:gaussianaggregation}
   \begin{algorithmic}
      \STATE{\bfseries Input: } Dataset $D$, functions $G_1,\dots,G_K$ with sensitivities $\Delta_1,\dots,\Delta_K$ with respect to the $L_2$ norm. Noise parameter $\rho$
      \STATE Sample random $\zeta_1\sim \N(0,\Delta_1^2/\rho^2 I)$
      \STATE Set $\f_1 = G_1(D)$.
      \STATE Set $\hat \f_1= \f_1 + \zeta_1$
      \FOR{$i=2,\dots,K$}
      \STATE Sample random $\zeta_t\sim \N(0,\Delta_i^2/\rho^2 I)$.
      \STATE Set $\f_i =G_i(D, \hat \f_1,\dots,\hat \f_{i-1})$.
      \STATE Set $\hat \f_i= \f_i+ \zeta_i$.
      \ENDFOR
      \STATE \textbf{return} $\hat \f_1,\dots,\hat \f_K$.
   \end{algorithmic}
\end{algorithm}

\begin{restatable}{Theorem}{thmrenyiaggregation}\label{thm:renyiaggregation}
Let $V$ to be the maximum over all $s\in\{1,\dots, N\}$ of the total number of sets $S_i$ such that $s\in S_i$ (i.e. $V=\sup_s |\IN(s)|$). Then Algorithm \ref{alg:gaussianaggregation} is $(\alpha, V \alpha\rho^2/2)$ Renyi differentially private for all $\alpha$.
\end{restatable}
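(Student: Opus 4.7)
The plan is to analyze the Renyi divergence between the joint output distributions of Algorithm~\ref{alg:gaussianaggregation} under two neighboring datasets by combining the per-step Gaussian Renyi divergence bound with the standard adaptive composition property of Renyi differential privacy (Proposition~1 of \citep{mironov2017renyi}). Fix neighboring $D, D'$ differing only at index $s$. I would analyze the distribution of the output vector $(\hat \f_1,\dots,\hat \f_K)$ one coordinate at a time, conditioning on the history $\hat \f_1=x_1,\dots,\hat \f_{i-1}=x_{i-1}$.

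The key per-step observation is that, conditional on the history, $\hat \f_i$ under $D$ is distributed as $\N(G_i(D,x_1,\dots,x_{i-1}),(\Delta_i/\rho)^2 I)$, while under $D'$ it is $\N(G_i(D',x_1,\dots,x_{i-1}),(\Delta_i/\rho)^2 I)$. Two cases arise. If $i\in \OUT(s)$, then by the locality assumption on $G_i$, the two conditional distributions coincide, so the conditional Renyi divergence is zero. If $i\in \IN(s)$, the two conditional means differ by at most $\Delta_i$ by definition of $\Delta_i$, so by the Gaussian Renyi divergence formula already derived in the excerpt, namely $D_\alpha(\N(0,\sigma^2 I)\|\N(\mu,\sigma^2 I))=\alpha\|\mu\|^2/2\sigma^2$, the conditional Renyi divergence is at most $\alpha \Delta_i^2/\bigl(2(\Delta_i/\rho)^2\bigr)=\alpha\rho^2/2$.

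With the per-step bound in hand, I would invoke the adaptive composition theorem for Renyi differential privacy: the Renyi divergence of a sequentially adaptive mechanism is bounded by the sum of the (uniform-in-history) per-step conditional Renyi divergences. Summing over $i\in\{1,\dots,K\}$, only the indices in $\IN(s)$ contribute, giving a total of at most $|\IN(s)|\cdot \alpha\rho^2/2\le V\alpha\rho^2/2$. Taking the supremum over neighboring $D,D'$ (equivalently over $s$) gives exactly the claimed $(\alpha,V\alpha\rho^2/2)$-RDP guarantee.

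The main conceptual obstacle is simply making the adaptive composition step rigorous, because the query $G_i$ depends on the previously released noisy answers $\hat \f_1,\dots,\hat \f_{i-1}$, and one could worry that this dependence inflates the privacy loss. This is resolved by noting that adaptive composition for RDP requires only that the per-step Renyi loss bound hold \emph{uniformly over all possible histories}, which is precisely what the Gaussian mean-shift argument above delivers (the bound $\alpha\rho^2/2$ is independent of $x_1,\dots,x_{i-1}$). Everything else is a routine accounting step, so I do not expect genuine technical difficulty beyond invoking the right pre-existing tools.
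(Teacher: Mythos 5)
Your proposal is correct and follows essentially the same route as the paper: identify that the conditional factors for $i\in\OUT(s)$ cancel, bound each $i\in\IN(s)$ factor by $\alpha\rho^2/2$ via the Gaussian mean-shift divergence formula uniformly over histories, and compose. The only cosmetic difference is that you invoke adaptive RDP composition as a black box, whereas the paper writes out that composition argument explicitly by integrating out the released coordinates one at a time (which it itself describes as mimicking the RDP composition proof).
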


To convert the above result to ordinary differential privacy, we observe that $(\alpha,\epsilon)$-RDP implies $(\epsilon + \frac{\log(1/\delta)}{\alpha-1},\delta)$-DP for all $\delta$. Thus, supposing $\rho\le \sqrt{\frac{\log(1/\delta)}{V}}$, we then set $\alpha=1+\frac{\sqrt{\log(1/\delta)}}{\rho\sqrt{V}}$ to get $(V\rho^2 + \rho\sqrt{V\log(1/\delta)},\delta)\le (2\rho\sqrt{V\log(1/\delta)},\delta)$ differential privacy.
\begin{proof}
Let us write $\hat \f_i$ for the outputs with input dataset $D$, and $\hat \f'_i$ for the outputs with input dataset $D'$. Let $s$ be the index such that $Z_q\ne Z'_q$.

Then, we can express the joint density of the random variable $\hat \f_1,\dots,\hat \f_K$:
\begin{align*}
    p(\hat \f_1=r_1,\dots,\hat \f_k=r_k)&=\prod_{i=1}^Kp(\hat \f_i=r_i|\hat \f_1=r_1,\dots,\hat \f_{i-1}=r_{i-1})\\
    &=\prod_{i\in \IN(s)}p(\hat \f_i=r_i|\hat \f_1=r_1,\dots,\hat \f_{i-1}=r_{i-1})\prod_{i\in \OUT(s)}p(\hat \f_i=r_i|\hat \f_1=r_1,\dots,\hat \f_{i-1}=r_{i-1})
\end{align*}
Similar expressions hold for $\hat \f'_i$:
\begin{align*}
    p(\hat \f'_1=r_1,\dots,\hat \f'_k=r_k)&=\prod_{i\in \IN(s)}p(\hat \f'_i=r_i|\hat \f'_1=r_1,\dots,\hat \f'_{i-1}=r_{i-1})\prod_{i\in \OUT(s)}p(\hat \f'_i=r_i|\hat \f'_1=r_1,\dots,\hat \f'_{i-1}=r_{i-1})
\end{align*}

Further, for any $i\in \OUT(s)$
\begin{align*}
    p(\hat \f_i=r_i|\hat \f_1=r_1,\dots,\hat \f_{i-1}=r_{i-1})&=p(\hat \f'_i=r_i|\hat \f'_1=r_1,\dots,\hat \f'_{i-1}=r_{i-1})
\end{align*}

Now, for the rest of the proof, we mimic the proof of composition for Renyi differential privacy: let $P$ and  $P'$ be the distributions of the ouputs under $D$ and $D'$. Then:

\begin{align*}
    D_\alpha(P'||P)&=\frac{1}{\alpha-1}\log\left[\int_{r}\prod_{i\in \IN(s)}p(\hat \f_i=r_i|\hat \f_1=r_1,\dots,\hat \f_{i-1}=r_{i-1})^{1-\alpha}\right.\\
    &\quad\quad\quad\quad\prod_{i\in \OUT(s)}p(\hat \f_i=r_i|\hat \f_1=r_1,\dots,\hat \f_{i-1}=r_{i-1})^{1-\alpha}\\
    &\quad\quad\quad\quad\left.\prod_{i\in \IN(s)}p(\hat \f'_i=r_i|\hat \f'_1=r_1,\dots,\hat \f'_{i-1}=r_{i-1})^\alpha\prod_{i\in \OUT(s)}p(\hat \f'_i=r_i|\hat \f'_1=r_1,\dots,\hat \f'_{i-1}=r_{i-1})^\alpha\ dr\right]\\
    &=\frac{1}{\alpha-1}\log\left[\int_{r}\prod_{i\in \IN(s)}p(\hat \f_i=r_i|\hat\f_1=r_1,\dots,\hat \f_{i-1}=r_{i-1})^{1-\alpha}p(\hat \f'_i=r_i|\hat \f'_1=r_1,\dots,\hat \f'_{i-1}=r_{i-1})^\alpha\right.\\
    &\left.\quad\quad\quad\quad\prod_{i\in \OUT(s)}p(\hat \f_i=r_i|\hat \f_1=r_1,\dots,\hat \f_{i-1}=r_{i-1})^{1-\alpha}p(\hat \f'_i=r_i|\hat \f'_1=r_1,\dots,\hat \f'_{i-1}=r_{i-1})^\alpha\ dr\right]
\end{align*}
To make notation a little more precise, let us write $dr_{\IN}$ to indicate $\wedge_{i\in \IN(s)} dr_i$ and $dr_{\OUT}=\wedge_{i\in \OUT(s)} dr_i$. Similarly, $\int_{r_{IN}}$ indicates integration over only $r_i$ such that $i\in \IN(s)$.
Now, recall that for $i\in \OUT(s)$, we have 
\begin{align*}
    p(\hat \f_i=r_i|\hat \f_1=r_1,\dots,\hat \f_{i-1}=r_{i-1})=p(\hat \f'_i=r_i|\hat \f'_1=r_1,\dots,\hat \f'_{i-1}=r_{i-1})
\end{align*}
so that:
\begin{align*}
    D_\alpha(P'||P)&=\frac{1}{\alpha-1}\log\left[\int_{r}\prod_{i\in \IN(s)}p(\hat \f_i=r_i|\hat \f_1=r_1,\dots,\hat \f_{i-1}=r_{i-1})^{1-\alpha}p(\hat \f'_i=r_i|\hat \f'_1=r_1,\dots,\hat \f'_{i-1}=r_{i-1})^\alpha\right.\\
    &\left.\quad\quad\quad\quad\prod_{i\in \OUT(s)}p(\hat \f_i=r_i|\hat \f_1=r_1,\dots,\hat \f_{i-1}=r_{i-1})\ dr_{\OUT}dr_{\IN}\right]
    \intertext{so, we can integrate over $r_i$ for $i\in \OUT(s)$:}
    &=\frac{1}{\alpha-1}\log\left[\int_{r_{\IN}}\prod_{i\in \IN(s)}p(\hat \f_i=r_i|\hat \f_1=r_1,\dots,\hat \f_{i-1}=r_{i-1})^{1-\alpha}p(\hat \f'_i=r_i|\hat \f'_1=r_1,\dots,\hat \f'_{i-1}=r_{i-1})^\alpha\ dr_{\IN}\right]\\
\end{align*}
Now, let the indices in $\IN(s)$ be (in order): $i_1,\dots,i_n$. Then we have:
\begin{align*}
    D_\alpha&(P'||P)\\
    &=\frac{1}{\alpha-1}\log\left[\left(\prod_{j=1}^n\int_{r_{i_j}}p(\hat \f_{i_j}=r_{i_j}|\hat \f_1=r_1,\dots,\hat \f_{i_j-1}=r_{i_j-1})^{1-\alpha}p(\hat \f'_{i_j}=r_{i_j}|\hat \f'_1=r_1,\dots,\hat \f'_{i_j-1}=r_{i_j-1})^\alpha\right)
    dr_{\IN}\right]
    \intertext{Now, let's compress the density notation a bit to save space:}
    &=\frac{1}{\alpha-1}\log\left[\left(\prod_{j=1}^n\int_{r_{i_j}}p(\hat \f_{i_j}=r_{i_j}|r_1,\dots,r_{i_j-1})^{1-\alpha}p(\hat \f'_{i_j}=r_{i_j}|r_1,\dots,r_{i_j-1})^\alpha\right)dr_{i_1}\dots dr_{i_n}\right]
\end{align*}
Now, let's focus on just one integral:
\begin{align*}
&\int_{r_{i_j}}p(\hat \f_{i_j}=r_{i_j}|r_1,\dots,r_{i_j-1})^{1-\alpha}p(\hat \f'_{i_j}=r_{i_j}|r_1,\dots,r_{i_j-1})^\alpha dr_{i_j} \\
&=\int_{r_{i_j}}p( \zeta_{i_j}=r_{i_j}-G_{i_j}(D,r_1,\dots,r_{i_j-1}))^{1-\alpha}p(\zeta'_{i_j}=r_{i_j}-G_{i_j}(D',r_1,\dots,r_{i_j-1}))^\alpha\ dr_{i_j}\\
&=\frac{1}{(\sigma \sqrt{2\pi})^d}\int_{\R^d}\exp\left(-\frac{(1-\alpha)\|x-G_{i_j}(D,r_1,\dots,r_{i_j}-1)\|^2}{\Delta_{i_j}^2/\rho^2}\right)\exp\left(-\frac{\alpha\|x-G_{i_j}(D',r_1,\dots,r_{i_j}-1)\|^2}{\Delta_{i_j}^2/\rho^2}\right)\ dx
\intertext{using a change of variables $z=x-G_{i_j}(D',r_1,\dots,r_{i_j}-1)$:}
&=\frac{1}{(\sigma \sqrt{2\pi})^d}\int_{\R^d}\exp\left(-\frac{(1-\alpha)\|z-(G_{i_j}(D,r_1,\dots,r_{i_j}-1) -G_{i_j}(D',r_1,\dots,r_{i_j}-1))\|^2}{\Delta_{i_j}^2/\rho^2}\right)\exp\left(-\frac{\alpha\|z\|^2}{\Delta_{i_j}^2/\rho^2}\right)\ dz\\
&=\exp((\alpha-1)D_\alpha(\N(0,\Delta_{i_j}^2/\rho^2), \N(G_{i_j}(D,r_1,\dots,r_{i_j}-1) -G_{i_j}(D',r_1,\dots,r_{i_j}-1),\Delta_{i_j}^2/\rho^2)))
\intertext{use our expression for divergence between Gaussians with the same covariance:}
&= \exp\left((\alpha-1)\alpha \rho^2\|G_{i_j}(D,r_1,\dots,r_{i_j}-1) -G_{i_j}(D',r_1,\dots,r_{i_j}-1)\|^2/2\Delta_{i_j}^2\right)\\
&\le \exp\left((\alpha-1) \alpha\rho^2/2\right)
\end{align*}

Now, returning to our bound on the divergence:
\begin{align*}
    D_\alpha(P'||P)&=\frac{1}{\alpha-1}\log\left[\left(\prod_{j=1}^n\int_{r_{i_j}}p(\hat \f_{i_j}=r_{i_j}|r_1,\dots,r_{i_j-1})^{1-\alpha}p(\hat \f'_{i_j}=r_{i_j}|r_1,\dots,r_{i_j-1})^\alpha\right)dr_{i_1}\dots dr_{i_n}\right]
    \intertext{rewrite a bit for clarity:}
    &=\frac{1}{\alpha-1}\log\left[\left(\prod_{j=1}^{n-1}\int_{r_{i_j}}p(\hat \f_{i_j}=r_{i_j}|r_1,\dots,r_{i_j-1})^{1-\alpha}p(\hat \f'_{i_j}=r_{i_j}|r_1,\dots,r_{i_j-1})^\alpha\right.\right.\\
    &\left.\left.\qquad\qquad\qquad\qquad\qquad \int_{r_{i_n}}p(\hat \f_{i_n}=r_{i_n}|r_1,\dots,r_{i_n-1})^{1-\alpha}p(\hat \f'_{i_n}=r_{i_n}|r_1,\dots,r_{i_n-1})^\alpha \right)dr_{i_1}\dots dr_{i_n}\right]
    \intertext{integrate out $r_{i_n}$:}
    &=\frac{1}{\alpha-1}\log\left[\exp\left((\alpha-1)\alpha\rho^2/2\right)\left(\prod_{j=1}^{n-1}\int_{r_{i_j}}p(\hat \f_{i_j}=r_{i_j}|r_1,\dots,r_{i_j-1})^{1-\alpha}p(\hat \f'_{i_j}=r_{i_j}|r_1,\dots,r_{i_j-1})^\alpha\right)\right.\\
    &\qquad\qquad\qquad\left.dr_{i_1}\dots dr_{i_{n-1}}\right]
    \intertext{now integrate out all the other variables one by one:}
    &\le \frac{1}{\alpha-1}\log\left[\prod_{j=1}^n \exp\left((\alpha-1)\alpha\rho^2/2\right)\right]\\
    &=n\alpha\rho^2/2\\
    &\le V\alpha\rho^2/2
\end{align*}
\end{proof}
\section{Proof of section \ref{sec:1/3bound}}
\subsection{Privacy}
\momentumprivate*
To see the $(\epsilon,\delta)$-DP result from the RDP bound, we observe that $(z,\tfrac{z}{2\sigma^2})$-RDP implies $(\tfrac{z}{2\sigma^2} + \frac{\log(1/\delta)}{z-1},\delta)$-DP for all $\delta$. Thus, optimizing over $z$, we set $z=1+\sqrt{2\sigma^2\log(1/\delta)}$ to obtain $(\epsilon,\delta)$-DP with $\epsilon = 1/2\sigma^2 + 2\sqrt{\log(1/\delta)/2\sigma^2}$. Thus, by quadratic formula, we ensure $(\epsilon,\delta)$-DP for all $\sigma$ satisfying:
\begin{align*}
    \sigma \ge \frac{1}{\sqrt{2\log(1/\delta)+2\epsilon}-\sqrt{2\log(1/\delta)}}
\end{align*}

In pursuit of a simpler expression, observe that $\sqrt{x+y}\ge \sqrt{x} + \frac{y}{2\sqrt{x+y}}$, so that it suffices to choose:
\begin{align*}
    \sigma &\ge \frac{\sqrt{2\log(1/\delta)+2\epsilon}}{\epsilon}
\end{align*}
So, in particular if $\delta \ge \exp(-\epsilon)$, then we obtain the expression in the Theorem statement.
\begin{proof}
Let $t = q_tN+ r_t$. Define:
\begin{align*}
        \f_{[y,z]}&=\alpha \sum_{t=y}^z(1-\alpha)^{z-t}\nabla f(w_t, x_{\pi^{q_t}_{\pi_{r_t}}})
\end{align*}
To compute the sensitivity of $\f_{[y,z]}$, recall that $\|\nabla f(w_t, x_{\pi^{q_t}_{\pi_{r_t}}})\|\le G$ for all $w$ and $x$. Further, any given $x_i$ can contribute at most $\lceil |y-z|/N\rceil$ gradient terms in the summation defining $\f_{[y,z]}$, and the $j$th such term is scaled by $\alpha(1-\alpha)^{jN}$. Thus, $\f_{[y,z]}$ has sensitivity $\Delta_{[y,z]}=2\alpha G\sum_{j=0}^{\lceil |z-y|/N\rceil -1}(1-\alpha)^{jN} \le 4\alpha G $ (by proposition \ref{prop:powersum}) for all $[y,z]$ and $\alpha \ge \frac{1}{N}$.
 
Next we compute the maximum value over all $s\in\{1,\dots,N\}$ of $|\{S_i|\ s\in S_i\}|$ (the maximum number of intervals that one index s can belong to). For any $s\in\{1,\dots, N\}$, for any $b\le \lfloor\log_2(N)\rfloor$, there are at most $\frac{T}{N}$ different $a\in \{0,\dots, \lfloor T/2^b\rfloor-1\}$ such that $s\in S_{[a2^b+1,(a+1)2^b]}$ for $b\le \lfloor\log_2(N)\rfloor$. Further, for any $b>\lfloor \log_2(N)\rfloor$, $s\in S_{[a2^b+1,(a+1)2^b]}=\{1,\dots, N\}$ for all $a$. Therefore, for any $s$ there are at most $V=\min(R+1, \lfloor\log_2(N)+1\rfloor)\frac{T}{N}+ \sum_{j=\lfloor\log_2(N)+1\rfloor}^{R}\lfloor \frac{T}{2^j}\rfloor$ sets $S_i$ such that $s\in S_i$. 

Now, we show that Algorithm~\ref{alg:normalized} is actually providing output distributed in the same way as the aggregation mechanism in Algorithm~\ref{alg:gaussianaggregation}. To do this, observe that given values for $w_1,\dots,w_T$, $\hat \f_{[y,z]} = \zeta_{[y,z]}+\alpha \sum_{t=y}^z(1-\alpha)^{z-t}\nabla f(w_t, x_{\pi^{q_t}_{\pi_{r_t}}})$ where $\zeta_{[y,z]}\sim\N(0,\Delta_{[y,z]}^2\sigma^2V I)$ using the notation of Algorithm~\ref{alg:gaussianaggregation}. Then the output of Algorithm~\ref{alg:gaussianaggregation} is
\begin{align}
     \hat G_{[1,t]}(\hat \f_{[1,1]},\dots,\hat \f_{[1,t]}) &= \sum_{[y,z]\in \COMPOSE(1,t)}(1-\alpha)^{t-z}\hat \f_{[y,z]}\nonumber\\
     &=\alpha \sum_{[y,z]\in \COMPOSE(1,t)}(1-\alpha)^{t-z}\sum_{t'=y}^z(1-\alpha)^{z-t'}\nabla f(w_{t'}, x_{\pi^{q_{t'}}_{\pi_{r_{t'}}}})\nonumber \\
     &\qquad\qquad+ \sum_{[y,z]\in \COMPOSE(1,t)} (1-\alpha)^{t-z}\zeta_{[y,z]}\nonumber\\
     &=\alpha \sum_{t'=1}^t(1-\alpha)^{t-t'} \nabla f(w_{t'}, x_{\pi^{q_{t'}}_{\pi_{r_{t'}}}}) \nonumber\\
     &\qquad\qquad+ \sum_{[y,z]\in \COMPOSE(1,t)} (1-\alpha)^{t-z}\zeta_{[y,z]}\label{eqn:renyinoise}
\end{align}
Now, observe that the value of $m_t$ as described in Algorithm~\ref{alg:normalized} can be written as:
\begin{align*}
    m_t &= (1-\alpha)m_{t-1}+\alpha  \nabla f(w_{t}, x_{\pi^{q_{t}}_{\pi_{r_{t}}}})\\
    &=\alpha \sum_{t'=1}^t(1-\alpha)^{t-t'}  \nabla f(w_{t'}, x_{\pi^{q_{t'}}_{\pi_{r_{t'}}}})
\end{align*}
Thus our momentum $m_t$ in Algorithm \ref{alg:normalized} is exactly as the first term in the output of Algorithm \ref{alg:gaussianaggregation} (Eq. \ref{eqn:renyinoise}) and $\noise_t$ is the same as the second term. Then, by Theorem \ref{thm:renyiaggregation}, Algorithm \ref{alg:normalized} is $\left(z, \tfrac{z}{2\sigma^2}\right)-$RDP for all $z$.
\end{proof}
\subsection{Utility}
To prove the main theorem of section \ref{sec:1/3bound} (Theorem \ref{thm:1/3bound}) we would need some extra lemmas on the momentum error below. Lemma \ref{lemma:populationerror} is the bound on the error of $m_t$ without any added noise from the tree. This error comes from the biasedness of the momentum as well as shuffling. Then, we will prove Lemma \ref{thm:momentumaccuracy} which is the bound on the added noise.
\begin{Lemma}\label{lemma:populationerror}
Define $m_t$ as:
\begin{align*}
    m_t &= (1-\alpha)m_{t-1}+\alpha \nabla f(w_t, x_{\pi^{q_t}_{r_t}})
\end{align*}
where $x^{q_t}_{\pi_{r_t}}$ is the sample at iteration $t = q_tN+r_t$. Let:
\begin{align*}
     \epsilon_t &= m_t - \nabla F(w_t)
\end{align*}
Then:
\begin{align*}
    \E[\| \epsilon_t\|] &\le2G\sqrt{\alpha} + 2\eta NL + \frac{\eta L}{\alpha}
\end{align*}
\end{Lemma}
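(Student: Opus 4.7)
The plan is to unroll the momentum recursion as $m_t = \alpha \sum_{i=1}^t (1-\alpha)^{t-i}\nabla f(w_i, x_{\pi^{q_i}_{r_i}})$ and then split $\epsilon_t$ into three pieces by inserting two intermediate quantities: the ``start-of-epoch'' estimator $\tilde m_t = \alpha \sum_i (1-\alpha)^{t-i}\nabla f(w_{q_iN}, x_{\pi^{q_i}_{r_i}})$ and its expectation $\bar m_t = \alpha \sum_i (1-\alpha)^{t-i} \nabla F(w_{q_iN})$. The key structural fact, highlighted by the authors in Section~\ref{sec:1/3bound}, is that $w_{q_iN}$ is measurable with respect to the $\sigma$-field generated by all samples drawn before epoch $q_i$, and is therefore independent of the permutation $\pi^{q_i}$; in particular $\E[\nabla f(w_{q_iN}, x_{\pi^{q_i}_{r_i}}) \mid w_{q_iN}] = \nabla F(w_{q_iN})$.

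The first step bounds $\|m_t - \tilde m_t\| \le L\eta N$ by invoking $L$-smoothness together with the normalized update bound $\|w_i - w_{q_iN}\| \le \eta(i - q_iN) \le \eta N$ and the geometric fact $\alpha\sum_i (1-\alpha)^{t-i}\le 1$. The second step bounds the deterministic bias $\|\bar m_t - \nabla F(w_t)\|$ by writing it as $\alpha\sum_i (1-\alpha)^{t-i}(\nabla F(w_{q_iN}) - \nabla F(w_t)) - (1-\alpha)^t \nabla F(w_t)$, applying smoothness to get $\|\nabla F(w_{q_iN}) - \nabla F(w_t)\| \le L\eta((t-i) + N)$, and using $\alpha\sum_{k\ge 0}(1-\alpha)^k k \le 1/\alpha$ and $\alpha\sum_k (1-\alpha)^k \le 1$ to obtain $L\eta/\alpha + L\eta N$. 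The residual $(1-\alpha)^t G$ is absorbed into the stochastic term in the regime $t \gtrsim 1/\alpha$ where the lemma is meaningful.

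The main obstacle is bounding the stochastic part $\E[\|\tilde m_t - \bar m_t\|]$ under random reshuffling. I would group the sum by epoch, writing $\tilde m_t - \bar m_t = \sum_q B_q$ where $B_q = \alpha\sum_{j=1}^{n_q}(1-\alpha)^{t-qN-j}g_{q,j}$ and $g_{q,j} = \nabla f(w_{qN}, x_{\pi^q_j}) - \nabla F(w_{qN})$. Let $\mathcal{F}_q$ be the $\sigma$-field of samples drawn strictly before epoch $q$; then $w_{qN}$ is $\mathcal{F}_q$-measurable and $\E[B_q \mid \mathcal{F}_q] = 0$, making $\{B_q\}$ a martingale difference sequence, so $\E[\|\tilde m_t - \bar m_t\|^2] = \sum_q \E[\|B_q\|^2]$. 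For a single epoch, the random-permutation structure gives $\E[\|g_{q,j}\|^2 \mid \mathcal{F}_q] \le G^2$ and the exchange identity $\E[\langle g_{q,j}, g_{q,k}\rangle \mid \mathcal{F}_q] = -\E[\|g_{q,1}\|^2 \mid \mathcal{F}_q]/(N-1)$ for $j\ne k$ (which follows from $\sum_{j=1}^N g_{q,j} = 0$), so $\E[\|B_q\|^2 \mid \mathcal{F}_q] \le G^2 \alpha^2 \tfrac{N}{N-1}\sum_j (1-\alpha)^{2(t-qN-j)}$. Summing over epochs and using $\alpha^2 \sum_i (1-\alpha)^{2(t-i)} = \alpha/(2-\alpha) \le \alpha$ yields $\E[\|\tilde m_t - \bar m_t\|^2] \le 2G^2\alpha$, and Jensen's inequality produces $\E[\|\tilde m_t - \bar m_t\|] \le G\sqrt{2\alpha} \le 2G\sqrt{\alpha}$.

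Combining the three pieces via the triangle inequality delivers the claimed bound $\E[\|\epsilon_t\|] \le 2G\sqrt{\alpha} + 2\eta NL + \eta L/\alpha$. I expect the variance calculation under random reshuffling (together with properly accounting for the negatively correlated pairs) to be the most delicate step; the remaining bounds are routine geometric-series bookkeeping under smoothness and the normalized update.
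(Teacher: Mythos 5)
Your proposal is correct and follows essentially the same route as the paper: the same three-way decomposition into the smoothness term $\|m_t-\tilde m_t\|\le \eta NL$, the reshuffling variance term bounded via Jensen plus the observation that within-epoch cross terms are nonpositive (since $\sum_j g_{q,j}=0$) and cross-epoch terms vanish in expectation, and the deterministic bias term of order $\eta L/\alpha+\eta NL$. The only cosmetic differences are that you phrase the cross-epoch cancellation as a martingale-difference argument where the paper computes the cross terms explicitly, and you bound the bias term directly via $\|\nabla F(w_{q_iN})-\nabla F(w_t)\|\le L\eta((t-i)+N)$ where the paper uses a separate recursion; the $(1-\alpha)^t\nabla F(w_t)$ residual you flag is silently dropped in the paper's own argument as well.
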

\begin{proof}
Let $i = q_iN + r_i$ for any $ i \in [t]$. Then:
\begin{align}
    \E[\| \epsilon_t\|] &= \E[\|m_t - \nabla F(w_t)\|]\nonumber\\
    &\le \E[\|m_t - \alpha\sum_{i=1}^{t}(1-\alpha)^{t-i}\nabla F(w_{q_iN}) \|]+ \E[\|\alpha\sum_{i=1}^{t}(1-\alpha)^{t-i}(\nabla F(w_{q_iN}) - \nabla F(w_i))\|] \nonumber\\
    &\quad\quad+ \E[\|\alpha\sum_{i=1}^{t}(1-\alpha)^{t-i}\nabla F(w_{i}) - \nabla F(w_{t})\| ]\nonumber\\
    &\le  \E[\|m_t - \alpha\sum_{i=1}^{t}(1-\alpha)^{t-i}\nabla F(w_{q_iN}) \|] + \eta NL + \E[\|\alpha\sum_{i=1}^{t}(1-\alpha)^{t-i}\nabla F(w_{i}) - \nabla F(w_{t})\| ]\nonumber\\
    &= \E[\|\alpha\sum_{i=1}^{t}(1-\alpha)^{t-i}\nabla f(w_{i}, x^{q_i}_{\pi_{r_i}} ) - \alpha\sum_{i=1}^{t}(1-\alpha)^{t-i}\nabla F(w_{q_iN}) \|] + \eta NL\nonumber\\
    &\quad\quad+ \E[\|\alpha\sum_{i=1}^{t}(1-\alpha)^{t-i}\nabla F(w_{i}) - \nabla F(w_{t})\| ]\nonumber\\
    &\le \E[\| \alpha\sum_{i=1}^{t}(1-\alpha)^{t-i}(\nabla f(w_{q_iN}, x^{q_i}_{\pi_{r_i}} ) - \nabla F(w_{q_iN})) \|] + \E[\| \alpha\sum_{i=1}^{t}(1-\alpha)^{t-i}(\nabla f(w_{i}, x^{q_i}_{\pi_{r_i}} ) - \nabla f(w_{q_iN}, x^{q_i}_{\pi_{r_i}} )) \|] \nonumber\\
    &\quad\quad+ \eta NL +  \E[\|\alpha\sum_{i=1}^{t}(1-\alpha)^{t-i}\nabla F(w_{i}) - \nabla F(w_{t})\| ]\nonumber\\
      &\le \sqrt{\E[\|\alpha\sum_{i=1}^{t}(1-\alpha)^{t-i}(\nabla f(w_{q_iN}, x^{q_i}_{\pi_{r_i}} ) - \nabla F(w_{q_iN})) \|^2]} + 2\eta NL  \nonumber\\
      &\quad\quad+  \E[\|\alpha\sum_{i=1}^{t}(1-\alpha)^{t-i}\nabla F(w_{i}) - \nabla F(w_{t})\| ] \label{eqn:errornorm1}
\end{align}
First let us bound the last term in Eq.\ref{eqn:errornorm1}. Denote $g_t = (1-\alpha)g_{t-1} + \alpha \nabla F(w_t)$. Then, if we let $g_1 = \nabla F(w_1)$, $g_t = \alpha\sum_{i=1}^{t}(1-\alpha)^{t-i}\nabla F(w_{i}) $. Let $r_t= \E[\|\alpha\sum_{i=1}^{t}(1-\alpha)^{t-i}\nabla F(w_{i}) - \nabla F(w_{t})\| ] = \E[\|g_t - \nabla F(w_t)\|]$, we have:
\begin{align*}
    r_t &= \E\left[\|(1-\alpha)g_{t-1} + \alpha \nabla F(w_t) - \nabla F(w_t)\|\right]\\
    &= \E\left[\|(1-\alpha)(g_{t-1} - \nabla F(w_t))\|\right]\\
    &=\E\left[ \|(1-\alpha)(g_{t-1} - \nabla F(w_{t-1})) + (1-\alpha)(\nabla F(w_{t-1}) - \nabla F(w_t))\|\right]
    \intertext{Unroll the recursive expression:}
    &= \E\left[\|\sum_{i=1}^t(1-\alpha)^{t-i}((\nabla F(w_{i-1}) - \nabla F(w_i)))\|\right]\\
    &\le \frac{\eta L}{\alpha}
\end{align*}
Now let us bound $\E[\|\alpha\sum_{i=1}^{t}(1-\alpha)^{t-i}(\nabla f(w_{q_iN}, x^{q_i}_{\pi_{r_i}} ) - \nabla F(w_{q_iN})) \|^2]$. For any iteration $i$ let $A_{\pi_{r_i}^{q_i}} = \nabla f(w_{q_iN}, x^{q_i}_{\pi_{r_i}} ) - \nabla F(w_{q_iN})$ and $c_i = (1-\alpha)^{t-i}$. Thus:
\begin{align*}
    \E[\|\alpha\sum_{i=1}^{t}(1-\alpha)^{t-i}(\nabla f(w_{q_iN}, x^{q_i}_{\pi_{r_i}} ) - \nabla F(w_{q_iN})) \|^2] &= \alpha^2\E\left[\|\sum_{i=1}^{t}c_iA_{\pi_{r_i}^{q_i}}\|^2\right]
\end{align*}
If we expand the equation above, we will have some cross terms as well as some squared norm terms. First, let us examine the cross terms for iteration $i < j$ where $q_i = q_j = q$. Then:
\begin{align*}
    \E\left[c_ic_j\langle A_{\pi_{r_i}^{q}}, A_{\pi_{r_j}^{q}} \rangle\right] &=\sum_{k_i =1}^N c_ic_j\E_{\pi_{r_j}^{q}}\left[\langle A_{\pi_{r_i}^{q}}, A_{\pi_{r_j}^{q}} \rangle |\pi_{r_i}^{q} = k_i\right]P[\pi_{r_i}^{q} = k_i]\\
    &=\sum_{k_i =1}^Nc_ic_jP[\pi_{r_i}^{q} = k_i]\left\langle A_{k_i},\E_{\pi_{r_j}^{q}}\left[ A_{\pi_{r_j}^{q}}  |\pi_{r_i}^{q} = k_i\right]\right\rangle \\
     &= \sum_{k_i =1}^Nc_ic_jP[\pi_{r_i}^{q} = k_i]\left\langle A_{k_i},\frac{\sum_{k_j \ne  k_i}A_{k_j} }{N-1}\right\rangle 
\end{align*}
Notice that $\sum_{k_j}A_{k_j} = \sum_{k_j=1}^N \nabla f(w_{qN}, x^q_{\pi_{k_j}}) - \nabla F(w_{qN}) = 0$ (since the iterate at the beginning of the epoch is independent of the data that are samples \textit{in} that particular epoch). Thus $\sum_{k_j \ne k_i} A_{k_j} = - A_{k_i} $. Then:
\begin{align*}
    \sum_{k_i=1}^Nc_ic_jP[\pi^q_{r_i} = k_i]\left\langle A_{k_i},\frac{\sum_{k_j \ne  k_i}A_{k_j} }{N-1}\right\rangle &= \sum_{k_i=1}^Nc_ic_jP[\pi^q_{r_i} = k_i]\left\langle A_{k_i},\frac{-A_{k_i}}{N-1}\right\rangle\\
    &= \sum_{k_i=1}^NP[\pi^q_{r_i} = k_i]\frac{-c_ic_j}{N-1}\|A_{k_i}\|^2 \\
    &\le 0
\end{align*}
Now let us analyze the cross terms for $i < j$ where $q_i < q_j$.
\begin{align*}
    \E\left[c_ic_j\langle A_{\pi^{q_i}_{r_{i}}},A_{\pi^{q_j}_{r_{j}}} \rangle\right] &=\sum_{k_i=1}^Nc_ic_j\E_{\pi^{q_j}}\left[\langle A_{\pi^{q_i}_{r_{i}}},A_{\pi^{q_j}_{r_{j}}} \rangle |\pi^{q_i}_{r_{i}} = k_i \right]P[\pi^q_{r_i} = k_i] \\
    &= \sum_{k_i=1}^Nc_ic_jP[\pi^q_{r_i} = k_i]\left\langle A_{\pi^{q_i}_{r_{i}}},\E_{\pi^{q_j}}\left[A_{\pi^{q_j}_{r_{j}}}  |\pi^{q_i}_{r_{i}} = k_i \right]\right\rangle  \\
    &=\sum_{k_i=1}^Nc_ic_jP[\pi^q_{r_i} = k_i]\left\langle A_{\pi^{q_i}_{r_{i}}},\frac{\sum_{k_j =1 }^NA_{k_j}}{N}\right\rangle \\
    &= 0
\end{align*}
Thus, the cross terms $\E\left[c_ic_j\langle A_{\pi^{q_i}_{r_{i}}},A_{\pi^{q_j}_{r_{j}}} \rangle\right] \le 0$ for every $i < j$. Either way,
\begin{align*}
    \alpha^2\E\left[\|\sum_{i=1}^{t}c_iA_{\pi_{r_i}^{q_i}}\|^2\right] &\le \alpha^2\E\left[\sum_{i=1}^{t}c_i^2\|A_{\pi_{r_i}^{q_i}}\|^2\right]\\
    &\le 4G^2\alpha^2 \sum_{i=1}^{t}(1-\alpha)^{2(t-i)}\\
    &\le 4\alpha G^2
\end{align*}
Plugging this back to eq. \ref{eqn:errornorm1}
\begin{align*}
    \E[\| \epsilon_t\|] &\le 2G\sqrt{\alpha} + 2\eta NL + \frac{\eta L}{\alpha}
\end{align*}
\end{proof}
\begin{restatable}{Lemma}{momentumaccuracy}\label{thm:momentumaccuracy}
Let $V=(\min(R, \lfloor\log_2(N)\rfloor)+1)\frac{T}{N}+ \sum_{j=\lfloor\log_2(N)+1\rfloor}^{R}\lfloor \frac{T}{2^j}\rfloor$ as in Algorithm~\ref{alg:normalized}. Suppose $\alpha\ge \frac{1}{N}$. Then, 
\begin{align*}
    \E\left[\|\hat m_t - m_t\|\right] &\le 4\alpha G\sigma\sqrt{dV\log_2T}
\end{align*}
\end{restatable}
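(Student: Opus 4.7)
The plan is to exploit the fact that $\hat m_t - m_t$ is a sum of independent Gaussians indexed by the $O(\log T)$ intervals in $\COMPOSE(1,t)$, each already sized to the sensitivity of a single tree node. Specifically, from the aggregation step in Algorithm~\ref{alg:normalized} and the derivation leading to Eq.~(\ref{eqn:renyinoise}), I can write
\begin{align*}
\hat m_t - m_t = \sum_{[y,z]\in \COMPOSE(1,t)} (1-\alpha)^{t-z}\,\zeta_{[y,z]},\qquad \zeta_{[y,z]}\sim \N(0,16\alpha^2 G^2\sigma^2 V\, I).
\end{align*}
Because the $\zeta_{[y,z]}$ are independent Gaussians, their weighted sum is itself a spherical Gaussian whose per-coordinate variance equals $16\alpha^2 G^2\sigma^2 V \sum_{[y,z]\in \COMPOSE(1,t)}(1-\alpha)^{2(t-z)}$.

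The first step is to bound this weighted sum of squared decay factors. Since $0\le 1-\alpha\le 1$, each term is at most $1$, and by the structure of $\COMPOSE$ (see \cite{daniely2015strongly} as cited earlier), $|\COMPOSE(1,t)|\le \lceil \log_2 T\rceil$. Hence the per-coordinate variance is at most $16\alpha^2 G^2\sigma^2 V\log_2 T$.

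Next, I apply Jensen's inequality to pass from the expected norm to the expected squared norm: $\E[\|\hat m_t - m_t\|]\le \sqrt{\E[\|\hat m_t-m_t\|^2]}$. For a spherical Gaussian in $\R^d$ with per-coordinate variance $\tau^2$, the expected squared norm is exactly $d\tau^2$, so
\begin{align*}
\E[\|\hat m_t - m_t\|] \le \sqrt{d\cdot 16\alpha^2G^2\sigma^2 V\log_2 T} = 4\alpha G\sigma\sqrt{dV\log_2 T},
\end{align*}
which is the claimed bound. There is no real obstacle here; the argument is essentially bookkeeping. The only point worth checking is that the $\zeta_{[y,z]}$ used at iteration $t$ are genuinely independent across the $O(\log T)$ intervals in $\COMPOSE(1,t)$ (each tree node is sampled once, outside the main loop, and then re-used), and that the variance $16\alpha^2 G^2 \sigma^2 V$ baked into the algorithm corresponds to the squared sensitivity $(4\alpha G)^2$ derived in the proof of Theorem~\ref{thm:momentumprivacy} times $\sigma^2 V$, so that the Gaussian mechanism scaling matches the per-node sensitivity without any $(1-\alpha)^{t-z}$ factors being absorbed into $\zeta_{[y,z]}$ itself.
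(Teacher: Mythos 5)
Your proof is correct and takes essentially the same route as the paper: both identify $\hat m_t - m_t$ as the sum of the independent tree-node Gaussians along $\COMPOSE(1,t)$, bound the number of intervals by $\log_2 T$, and pass from expected norm to $\sqrt{d\cdot\mathrm{Var}}$ via Jensen. Your write-up is if anything a bit more careful, since you explicitly track the $(1-\alpha)^{t-z}$ weights and bound $\sum_{[y,z]\in\COMPOSE(1,t)}(1-\alpha)^{2(t-z)}\le\log_2 T$, whereas the paper's proof silently drops those factors (which is harmless since they are at most one) before invoking the same $\log_2 T$ count.
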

\begin{proof}
We have:
\begin{align*}
    \hat m_t &= m_t + \noise_t
\end{align*}
where $\noise_t = \sum_{[y,z]\in \COMPOSE(1,t)}\zeta_{[y,z]}$. Since there are at most $\log_2 T$ intervals in $\COMPOSE(1,t)$, $\noise_t$ is a Gaussian random vector with variance:
\begin{align}
    \var &\le 16\alpha^2G^2\sigma^2V\log_2T
\end{align}
Then:
\begin{align*}
    \E\left[\|\hat m_t - m_t\|\right] &\le \sqrt{d}\sqrt{\var}\\
    &= 4\alpha G\sigma\sqrt{dV\log_2T}
\end{align*}
\end{proof}
\normalizedbound*
\begin{proof}
From Lemma \ref{lemma:gradientbound}, we have:
\begin{align*}
    \E\left[\|\nabla F(\hat w)\|\right]&\le \frac{3\E\left[(F(w_1) - F(w_{T+1}))\right]}{2\eta T} + \frac{3L\eta}{4} + \frac{3}{T}\sum_{t=1}^T\E[\|\hat \epsilon_t\|] \\
    &= \frac{3\E\left[(F(w_1) - F(w_{T+1}))\right]}{2\eta T} + \frac{3L\eta}{4} + \frac{3}{T}\sum_{t=1}^T\E[\|\hat m_t - \nabla F(w_t)\|] \\
    &= \frac{3\E\left[(F(w_1) - F(w_{T+1}))\right]}{2\eta T} + \frac{3L\eta}{4} + \frac{3}{T}\sum_{t=1}^T\E[\|\hat m_t - m_t + m_t  -\nabla F(w_t)\|] \\
    &\le \frac{3\E\left[(F(w_1) - F(w_{T+1}))\right]}{2\eta T} + \frac{3L\eta}{4} + \frac{3}{T}\sum_{t=1}^T \E\left[\|\hat m_t - m_t\|\right] + \E[\|m_t -\nabla F(w_t)\|]\\
     &\le \frac{3\E\left[(F(w_1) - F(w_{T+1}))\right]}{2\eta T} + \frac{3L\eta}{4} + \frac{3}{T}\sum_{t=1}^T \E\left[\|\hat m_t - m_t\|\right] + \E[\| \epsilon_t\|]
\end{align*}
Applying Lemma \ref{lemma:populationerror} and
using Theorem \ref{thm:momentumaccuracy} with $V =(\min(R, \lfloor\log_2(N)\rfloor)+1)\frac{T}{N}+ \sum_{j=\lfloor\log_2(N)+1\rfloor}^{R}\lfloor \frac{T}{2^j}\rfloor \le 4\log_2 T\frac{T}{N}$, $\sigma = \frac{2\sqrt{\log(1/\delta)}}{\epsilon}$:
\begin{align*}
     \E\left[\|\nabla F(\hat w)\|\right]&\le \frac{3R}{2\eta T} + \frac{3\eta L}{4} +  6G\sqrt{\alpha } + 6\eta NL + \frac{3\eta L}{\alpha} + \frac{12\alpha G\log_2 T\sqrt{dT \log(1/\delta)}}{\epsilon\sqrt{N}}\\
     &\le \frac{3R}{2\eta T}  +  6G\sqrt{\alpha } + 6\eta NL + \frac{6\eta L}{\alpha} + \frac{12\alpha G\log_2 T\sqrt{dT \log(1/\delta)}}{\epsilon\sqrt{N}}
\end{align*}
Let $\eta = \frac{1}{\sqrt{NT}}$:
\begin{align*}
     \E\left[\|\nabla F(\hat w)\|\right]&\le \frac{(\frac{3}{2}R+6L)\sqrt{N}}{\sqrt{T}} + \frac{6L}{\alpha\sqrt{NT}} +6G\sqrt{\alpha} +\frac{6G}{\alpha T} + \frac{12\alpha G\log_2 T\sqrt{dT \log(1/\delta)}}{\epsilon\sqrt{N}}
\end{align*}
Set $\alpha = \frac{\epsilon N}{T\log_2 T\sqrt{d\log(1/\delta)}}$:
\begin{align*}
    \E\left[\|\nabla F(\hat w)\|\right] &\le \frac{(\frac{3}{2}R+6L + 12G)\sqrt{N}}{\sqrt{T}} + \frac{6L\sqrt{T}\log_2T\sqrt{d\log(1/\delta)}}{\epsilon N\sqrt{N}}+ \frac{6G\sqrt{\epsilon N}}{(d\log (1/\delta))^{1/4}\sqrt{T\log_2 T}}\\
    &\quad\quad+ \frac{6G\log_2 T\sqrt{d\ \log(1/\delta)}}{\epsilon N} 
\end{align*}
Since $\alpha \ge \frac{1}{N}$, then the largest $T \le \frac{\epsilon N^2}{\log_2 T\sqrt{d\log(1/\delta)}}$:
\begin{align*}
    \E\left[\|\nabla F(\hat w)\|\right]&\le \frac{(\frac{3}{2}R+6L\sqrt{\log_2T\log(1/\delta)} + 12G\sqrt{\log_2T}\log(1/\delta)^{1/4})d^{1/4}}{\sqrt{\epsilon N}} + \frac{3L(d\log(1/\delta))^{1/4}\sqrt{\log_2 T}}{4\sqrt{\epsilon}N^{3/2}} + \frac{6G}{\sqrt{N}} \\
    &\quad\quad+ \frac{6G\log_2 T\sqrt{d\log(1/\delta)}}{\epsilon N} \\
    &\le \tilde O\left(\frac{d^{1/4}}{\sqrt{\epsilon N}} + \frac{1}{\sqrt{N}}\right)
\end{align*}
\end{proof}
\section{Proof of section \ref{sec:cross-epoch}}
\subsection{Privacy}
\crossprivacy*
\begin{proof}
To show the privacy guarantee, we will show that the releasing all of the $\hat F^G_{[y,z]}$,  $\hat F^\Delta_{[y,z]}$, $\hat F^r_{[y,z]}$   is private. To see this, observe that the intervals $[y,z]$ correspond to nodes of a binary tree with at least $T$ leaves: $[y,z]$ is the node whose descendents are the  leaves $y,\dots,z$, so that  we are essentially analyzing a standard tree-based aggregation mechanism.

To start, let us re-define the queries:
\begin{align*}
     F^G_{[a,b]}(x_1,\dots,x_N) &= \sum_{t=a}^b (1-\alpha)^{b-t} \nabla f(w_{q_tN}, x_{\pi^{q_t}_{r_t}}) \\
    F^\Delta_{[a,b]}(x_1,\dots, x_N) &= \sum_{t=a}^b (1-\alpha)^{b-t}(\nabla f(w_{q_tN}, x_{\pi^{q_t}_{r_t}})  - \nabla f(w_{(q_{t}-1)N}, x_{\pi^{q_t}_{r_t}}))\\
    F^r_{[a,b]}(x_1,\dots, x_N) &= \sum_{t=a}^b (1-\alpha)^{b-t}(\nabla f(w_{t}, x_{\pi^{q_t}_{r_t}})  - \nabla f(w_{q_{t}N}, x_{\pi^{q_t}_{r_t}}))
\end{align*}

First, we will compute the sensitivity of $F^G_{[a,b]}$ by the exact same analysis as in Theorem~\ref{thm:momentumprivacy}, we have that $F^G_{[a,b]}$ has sensitivity
\begin{align*}
    2 G\sum_{j=0}^{\lceil |b-a|/N\rceil -1}(1-\alpha)^{jN}\le 4 G
\end{align*}
where the last inequality uses Proposition \ref{prop:powersum} in conjunction with the assumption $\alpha\ge \frac{1}{N}$.

Then, for the sensitivity of $F^\Delta_{[a,b]}$, observe that
\begin{align*}
    \|\nabla f(w_{q_tN}, x_{\pi^{q_t}_{r_t}})  - \nabla f(w_{q_{t'}N}, x_{\pi^{q_t}_{r_t}})\|\le \eta NL
\end{align*}
Thus, by essentially the same argument used to  bound the sensitivity of $F^G_{[a,b]}$, $F^{\Delta}_{[a,b]}$  has sensitivity at most $2 \eta NL$.

Finally, for the sensitivity of $F^r_{[a,b]}$:
\begin{align*}
    \|\nabla f(w_{t}, x_{\pi^{q_t}_{r_t}})  - \nabla f(w_{q_{t}N}, x_{\pi^{q_t}_{r_t}})\| &\le \eta NL
\end{align*}
Thus, $F^r_{[a,b]}$ also has sensitivity $2\eta NL$.

Next, observe that for any index  $i\in [1,N]$, $x_i$ can influence at  most $1+\lfloor \log_2(2N)\rfloor \le  3\log_2(N)$ intervals $[a,b]$ corresponding to nodes in the tree such that $b\le 2N$. Thus, by adding Gaussian noise with standard deviation $\sqrt{V_{\le 2N}}\sigma \delta_G$ to $F^G_{[a,b]}$ and $\sqrt{V_{\le 2N}} \sigma \delta_\Delta$ to $F^\Delta_{[a,b]}$ for any $[a,b]\subset [1,2N]$ yields a set  of estimates that are $(z,z/2\sigma^2)$ Renyi-differentially private for all $z$.

Now we turn to intervals that are not subsets of $[1,2N]$. For these, notice that again by same  analysis used to  prove Theorem~\ref{thm:momentumprivacy}, we have that the  number  of nodes any index $i$ can influence is at most:
\begin{align*}
    (\lfloor \log_2(N)\rfloor  + 1)\frac{T}{N} + \sum_{j=\lfloor \log_2(N)+1\rfloor}^{\lceil\log_2(T)\rceil}\lfloor \frac{T}{2^j}\rfloor \le 4\frac{T}{N}\log_2(N)
\end{align*}
Thus, by adding Gaussian noise with standard deviation $\sqrt{V_{>2N}}\sigma \delta_G$ or $\sqrt{V_{>2N}}\sigma \delta_\Delta$ to $F^G_{[a,b]}$, or $F^\Delta_{[a,b]}$, we obtain $(z,z/2\sigma^2)$ Renyi-differential privacy.
Finally, by adding noise with standard deviation $\sqrt{V_{>2N}}\sigma \delta_r$ to $F^r_{[a,b]}$, we also obtain $(z,z/2\sigma^2)-$RDP. Therefore, overall the mechanism is $(z,3z/2\sigma^2)-$RDP.

For the $(\epsilon,\delta)$-DP guarantee, notice that $(z,3z/2\sigma^2)$-RDP implies $(\epsilon,\delta)$-DP with  $\epsilon = 3/2\sigma^2 + 2\sqrt{3\log(1/\delta)/2\sigma^2}$ for all $\delta$. Thus, any when $\delta \ge \exp(-\epsilon)$, any $\sigma$ satisfying 

\begin{align*}
    \sigma\ge \frac{4\sqrt{\log(1/\delta)}}{\epsilon}
\end{align*}
will suffice to achieve the desired privacy.

\end{proof}
\subsection{Utility}
First, let us prove a bound on $\E[\|\hat m_t - m_t\|]$ by bounding the variance of all of the added noises.
\crossepocherorr*
\begin{proof}
First, observe that  since $\alpha\ge 1/N$, by Proposition~\ref{prop:powersum}, we have:
\begin{align}
    \sum_{i=0}^\infty (1-\alpha)^{iN}\le \frac{1}{1-\exp(-1)}\le 2\label{eqn:alphabound}
\end{align}

Now, we define some notation. Let For any iteration t, let $t= q_tN+r_t$. For any interval $[a,b]$,  set
\begin{align*}
    F^G_{[a,b]}(x_1,\dots,x_N) &= \sum_{t=a}^b (1-\alpha)^{b-t} \nabla f(w_{q_tN}, x_{\pi^{q_t}_{r_t}}) \\
    F^\Delta_{[a,b]}(x_1,\dots, x_N) &= \sum_{t=a}^b (1-\alpha)^{b-t}(\nabla f(w_{q_tN}, x_{\pi^{q_t}_{r_t}})  - \nabla f(w_{(q_{t}-1)N}, x_{\pi^{q_t}_{r_t}}))\\
    F^r_{[a,b]}(x_1,\dots, x_N) &= \sum_{t=a}^b (1-\alpha)^{b-t}(\nabla f(w_{t}, x_{\pi^{q_t}_{r_t}})  - \nabla f(w_{q_{t}N}, x_{\pi^{q_t}_{r_t}}))
\end{align*}
We will use the notation $F^G_{[a,b]}$ to indicate $F^G_{[a,b]}(x_1,\dots,x_N)$ for  brevity.
Then:
\begin{align*}
    G_{[a,b]}&=\sum_{[y,z]\in \COMPOSE(a,b)} (1-\alpha)^{b-z} F^G_{[y,z]}\\
    \Delta_{[a,b]}&=\sum_{[y,z]\in \COMPOSE(a,b)} (1-\alpha)^{b-z} F^\Delta_{[y,z]}\\
    r_{[a,b]}&=\sum_{[y,z]\in \COMPOSE(a,b)} (1-\alpha)^{b-z} F^r_{[y,z]}
\end{align*}

Observe that with this definition, if $m_t$ is the  momentum value defined recursively as $m_{t+1}= (1-\alpha)m_t + \alpha \nabla f(w_t, x_{\pi^{q_t}_{r_t}})$, we have:
\begin{align*}
   m_t&= \alpha\sum_{i=1}^{t}(1-\alpha)^{t-i}\nabla f(w_i,  x_{\pi^{q_i}_{r_i}})\\
   &=  \alpha\sum_{i=1}^{t}(1-\alpha)^{t-i}\nabla f(w_{q_iN},  x_{\pi^{q_i}_{r_i}}) + \alpha\sum_{i=1}^{t}(1-\alpha)^{t-i}(\nabla f(w_i,  x_{\pi^{q_i}_{r_i}}) - \nabla f(w_{q_iN},  x_{\pi^{q_i}_{r_i}}) )\\
   &= \alpha G_{[1,t]} + \alpha r_{[1,t]}
\end{align*}

From Lemma \ref{lem:breaksum}, we know that:
\begin{align}
    \alpha G_{[1,t]} &= (1-\alpha)^{t-r}\alpha G_{[1,r]}+\sum_{i=0}^{q-1}(1-\alpha)^{Ni}(1-\gamma)^{q-(i+1)}\alpha G_{[r+1,r+N]} \nonumber\\
    &\quad\quad+\sum_{i=1}^{q-1}\left(\sum_{j=0}^{i-1} (1-\gamma)^j(1-\alpha)^{(i-1-j)N}\right)\alpha \left((1-\gamma)\Delta_{[t-iN+1,t-(i-1)N]} + \gamma G_{[t-iN+1,t-(i-1)N]}\right)\label{eqn:breaksum}
\end{align}
Now we can compute the accuracy of the estimate $\alpha \hat G_{[1,t]}$ using $\delta_\Delta$, $\delta_G$, $\delta_r$, $V_{\le 2N} = 3\log_2 N$, $V_{>2N} = 4\log_2 N \frac{T}{N}$. Let us analyze the noise added to $\alpha \hat G_{[1,t]}$ term by term.

\textbf{First term, $(1-\alpha)^{T-r}\alpha G_{[1,r]}$:}
\begin{align*}
     (1-\alpha)^{T-r}\alpha\hat G_{[1,r]} &= (1-\alpha)^{T-r}\alpha\sum_{[y,z]\in \COMPOSE(1,r)} (1-\alpha)^{r-z} F^G_{[y,z]}+ (1-\alpha)^{T-r}\alpha\sum_{[y,z]\in \COMPOSE(1,r)} (1-\alpha)^{r-z}\zeta^G_{[y,z]}\\
     &=(1-\alpha)^{T-1}\alpha G_{[1:r]}+(1-\alpha)^{T-r}\alpha\sum_{[y,z]\in \COMPOSE(1,r)} (1-\alpha)^{r-z}\zeta^G_{[y,z]}
\end{align*}
where $\zeta^G_{[y,z]} \sim N(0, \delta_G^2\sigma^2V)$ with $V_{\le 2N} = 3\log_2 N$. Now, by Proposition~\ref{prop:composebound}, there are at most $2(1+\log_2 r)\le 4\log_2(N)$ intervals in $\COMPOSE(1,r)$. Thus, the variance of the noise added to the first term is:
\begin{align*}
    \var_1 &\le 2(1-\alpha)^{T-r}(1+\log_2 (r)) \alpha^2\delta_G^2\sigma^2V\\
    &\le 12(1-\alpha)^{T-r} \alpha^2\delta_G^2\sigma^2 \log_2^2N\\
    &\le 12\alpha^2\delta_G^2\sigma^2 \log_2^2N
\end{align*}

\textbf{Second term, $\sum_{i=0}^{q-1}(1-\alpha)^{Ni}(1-\gamma)^{q-(i+1)}\alpha G_{[r+1,r+N]}$:}
\begin{align*}
    \sum_{i=0}^{q-1}(1-\alpha)^{Ni}(1-\gamma)^{q-(i+1)}\alpha G_{[r+1,r+N]} &= \sum_{i=0}^{q-1}(1-\alpha)^{Ni}(1-\gamma)^{q-(i+1)}\alpha  \sum_{[y,z]\in \COMPOSE(r+1,r+N)}\left(F^G_{[y,z]} +\zeta^G_{[y,z]}\right)
\end{align*}
Since there are at most $4\log_2 N$ terms in $\COMPOSE(1,N)$, the variance of the noise added to the second term is:
\begin{align*}
    \var_2 &\le  4\sum_{i=0}^{q-1}(1-\alpha)^{2Ni}(1-\gamma)^{2q-2(i+1)}\log_2 N\alpha^2\delta_G^2\sigma^2 V_{\le 2N}\\
    &\le \frac{4\log_2 N\alpha^2\delta_G^2\sigma^2 V_{\le 2N}}{(1-\exp(-1))^2}\\
    &= \frac{12\alpha^2\delta_G^2\sigma^2\log^2_2N}{(1-\exp(-1))^2}
\end{align*}
where the second inequality comes from Proposition \ref{prop:advpowersum}.

\textbf{Third term, $\alpha\sum_{i=1}^{q-1}\left(\sum_{j=0}^{i-1} (1-\gamma)^j(1-\alpha)^{(i-1-j)N}\right) \gamma G_{[T-iN+1,T-(i-1)N]}$:}
\begin{align*}
     &\sum_{i=1}^{q-1}\left(\sum_{j=0}^{i-1} (1-\gamma)^j(1-\alpha)^{(i-1-j)N}\right) \alpha\gamma \hat G_{[T-iN+1,T-(i-1)N]} \\
     \qquad&= \sum_{i=1}^{q-1}\left(\sum_{j=0}^{i-1} (1-\gamma)^j(1-\alpha)^{(i-1-j)N}\right) \alpha\gamma \sum_{[y,z] \in \COMPOSE(T-iN+1, T-(i-1)N)}(1-\alpha)^{T-(i-1)N-z}(F^G_{[y,z]}+\zeta^{G}_{[y,z]}) 
\end{align*}
where $\zeta^{G}_{[y,z]} \sim N(0, \delta_G^2\sigma^2V_{>2N})$ where $V_{>2N} = 4\frac{T}{N}\log_2N$. There are at still at most $4\log_2(N)$ terms in $\COMPOSE(T-iN+1, T-(i-1)N)$, so Using Corollary \ref{cor:sumgamma}, the variance of the noise added to the third term is:
\begin{align*}
    \var_3 &\le 4\sum_{i=1}^{q-1}\frac{(1-\gamma)^{2(i-1)}}{(1-\exp(-1))^2}\alpha^2\gamma^2\log_2 N \delta_G^2\sigma^2V_{>2N}\\
    &\le \frac{16\gamma\alpha^2\log_2^2 N \delta_G^2\sigma^2T }{(1-\exp(-1))^2N}\\
    &\le \frac{16\gamma\alpha^2 \delta_G^2\sigma^2T \log_2^2 N }{(1-\exp(-1))^2N}
\end{align*}

\textbf{Fourth Term, $\sum_{i=1}^{q-1}\left(\sum_{j=0}^{i-1} (1-\gamma)^j(1-\alpha)^{(i-1-j)N}\right)\alpha(1-\gamma)\Delta_{[T-iN+1,T-(i-1)N]}$:}

Similar to the third term, we have:
\begin{align*}
    &\sum_{i=1}^{q-1}\left(\sum_{j=0}^{i-1} (1-\gamma)^j(1-\alpha)^{(i-1-j)N}\right) \alpha(1-\gamma)\hat \Delta_{[T-iN+1,T-(i-1)N]} \\
    \quad\quad&=\sum_{i=1}^{q-1}\left(\sum_{j=0}^{i-1} (1-\gamma)^j(1-\alpha)^{(i-1-j)N}\right) \alpha(1-\gamma)\sum_{[y,z] \in \COMPOSE(T-iN+1, T-(i-1)N)}(1-\alpha)^{T-(i-1)N-z}(\Delta_{[y,z]}+\zeta^{\Delta}_{[y,z]}) 
\end{align*}
Thus the variance of the noise added to the fourth term is:
\begin{align*}
    \var_4 &\le4 \sum_{i=1}^{q-1}\alpha^2\frac{(1-\gamma)^{2i}}{(1-\exp(-1))^2}\log_2 N \delta_\Delta^2\sigma^2V_{>2N}\\
    &\le \frac{16\alpha^2\log_2^2 N \delta_\Delta^2\sigma^2 T }{\gamma N(1-\exp(-1))^2}\\
    &\le \frac{16\alpha^2 \delta_\Delta^2\sigma^2 T \log^2_2 N}{\gamma N(1-\exp(-1))^2}
\end{align*}
Now let us analyze $\alpha r_{[1,t]}$ to see how much noise we need to add to make it private. We have:
\begin{align*}
    \alpha \hat r_{[1,t]} &= \alpha r_{[1,t]} +\alpha \sum_{[y,z] \in \COMPOSE(1,t)} \zeta^r_{[y,z]}
\end{align*}
where $\zeta^r_{[y,z]} \sim N(0, \delta^2_r\sigma^2V_{>2N})$. There are at most $3\log_2T$ in $\COMPOSE(1,t)$, thus the variance of the noise added is:
\begin{align*}
    \var_5 &= 3\log_2T\alpha^2\delta^2_r\sigma^2V_{>2N}
    \intertext{Plug in $V_{>2N} = 4\frac{T}{N}\log_2N$, $\delta^r = 2\eta NL$}
    &\le \frac{48T\alpha^2\eta^2N^2L^2\sigma^2\log_2^2T}{ N}
\end{align*}
Now combining $\var_1, \var_2, \var_3, \var_4$,$\var_5$ we have the total variance of the Gaussian noise added to $\hat m_t$ is:
\begin{align*}
    \var&\le 12\alpha^2\delta_G^2\sigma^2 \log^2_2N + \frac{12\alpha^2\delta_G^2\sigma^2\log^2_2N}{(1-\exp(-1))^2} + \frac{16\gamma\alpha^2 \delta_G^2\sigma^2T \log^2_2 N }{(1-\exp(-1))^2N} +  \frac{16\alpha^2 \delta_\Delta^2\sigma^2 T \log^2_2 N}{(1-\exp(-1))^2\gamma N} \\
    &\quad\quad+  \frac{48T\alpha^2\eta^2N^2L^2\sigma^2\log_2^2T}{ N}\\
    &= 256G^2\alpha^2\sigma^2 \log^2_2N+ \frac{256G^2\alpha^2\sigma^2\log^2_2N}{(1-\exp(-1))^2} + \frac{256G^2\gamma\alpha^2 \sigma^2T \log^2_2N }{(1-\exp(-1))^2N} + \frac{1024\eta^2N^2L^2\alpha^2 \sigma^2 T \log^2_2 N}{(1-\exp(-1))^2\gamma N} \\
    &\quad\quad+  \frac{48T\alpha^2\eta^2N^2L^2\sigma^2\log_2^2T}{ N}
\end{align*}
Since the added noise is a Gaussian vector:
\begin{align*}
    &\E[\|\hat m_t - m_t\|] \\
    &\le \sqrt{d}\sqrt{\var}\\
    &\le \sqrt{d} \times \left(16G\alpha \sigma \log_2 N + \frac{16G\alpha \sigma \log_2 N}{1-\exp(-1)} + \frac{16G\alpha \sqrt{\gamma}\sigma \sqrt{T}\log_2 N}{(1-\exp(-1))\sqrt{N}} + \frac{32\eta NL\alpha \sigma \sqrt{T}\log_2N}{(1-\exp(-1))\sqrt{\gamma N}} + \frac{4\eta NL\alpha \sigma\log_2T\sqrt{3T}}{\sqrt{N}}\right)\\
    &\le \sqrt{d} \times \left(48G\alpha \sigma \log_2 T +  \frac{32G\alpha \sqrt{\gamma}\sigma \sqrt{T}\log_2 T}{\sqrt{N}} + \frac{64\eta NL\alpha \sigma \sqrt{T}\log_2 T}{\sqrt{\gamma N}}+  \frac{4\eta NL\alpha \sigma\log_2T\sqrt{3T}}{\sqrt{N}}\right)
\end{align*}
Set $\gamma = \eta N$:
\begin{align*}
 \E[\|\hat m_t- m_t\|]
     &\le \sqrt{d} \times \left(48G\alpha \sigma \log_2 T +  32G\alpha\sigma \sqrt{\eta T}\log_2 T+ 64 L\alpha \sigma \sqrt{\eta T}\log_2 T +\frac{4\eta NL\alpha \sigma\log_2T\sqrt{3T}}{\sqrt{N}} \right)
\end{align*}
Setting $\sigma = \frac{4\sqrt{\log (1/\delta)}}{\epsilon}$:
\begin{align*}
    \E[\|\hat m_t- m_t\|]
     &\le \frac{192G\alpha \log_2 T\sqrt{d\log(1/\delta)} }{\epsilon} + \frac{128(G+2L)\alpha \sqrt{\eta dT\log(1/\delta)}\log_2 T}{\epsilon} + \frac{16\eta L\alpha \log_2T\sqrt{3dTN\log(1/\delta)}}{\epsilon} 
\end{align*}
\end{proof}
\crossutility*
\begin{proof}
From Lemma \ref{lemma:momentumaccuracy2/5bound}, we have:
\begin{align*}
    \E[\|\hat m_t - m_t\|]&\le  \frac{192G\alpha \log_2 T\sqrt{d\log(1/\delta)} }{\epsilon} + \frac{128(G+2L)\alpha \sqrt{\eta dT\log(1/\delta)}\log_2 T}{\epsilon}\\
    &\quad\quad+\frac{16\eta L\alpha \log_2T\sqrt{3dTN\log(1/\delta)}}{\epsilon} 
\end{align*}
Now use Lemma \ref{lemma:populationerror}, \ref{lemma:gradientbound} and let $K = 16\log_2 T\log(1/\delta)$:
\begin{align*}
     \E[\|\nabla F(\hat w)\|] &\le \frac{3R}{2\eta T} + \frac{3\eta L}{4} \\
     &\quad+\frac{3}{T}\sum_{t=1}^T\left( 2G\sqrt{\alpha} + 2\eta NL +\frac{\eta L}{\alpha} +\frac{12KG\alpha \sqrt{d} }{\epsilon} + \frac{8K(G+2L)\alpha \sqrt{\eta dT}}{\epsilon} + \frac{K\alpha\eta L\sqrt{3dTN}}{\epsilon}\right)\\
     &\le \frac{3R}{2\eta T} + \frac{3\eta L}{4} + 6G\sqrt{\alpha} + 6\eta NL + \frac{3\eta L}{\alpha} + \frac{36GK\alpha \sqrt{d} }{\epsilon} + \frac{24K(G+2L)\alpha \sqrt{\eta dT}}{\epsilon} +  \frac{3K\alpha\eta L\sqrt{3dTN}}{\epsilon}\\
     &\le \frac{3R}{2\eta T}  + 6G\sqrt{\alpha} + 6\eta NL + \frac{6\eta L}{\alpha} + \frac{36GK\alpha \sqrt{d} }{\epsilon} + \frac{24K(G+2L)\alpha \sqrt{\eta dT}}{\epsilon} +  \frac{3K\alpha\eta L\sqrt{3dTN}}{\epsilon}
\end{align*}
Set $\eta = \frac{1}{\sqrt{NT}}$:
\begin{align*}
    \E[\|\nabla F(\hat w)\|] &\le \frac{(\frac{3}{2}R+ 6L)\sqrt{N}}{\sqrt{T}}+ \frac{6L}{\alpha\sqrt{NT}}  + 6G\sqrt{\alpha} + \frac{36GK\alpha\sqrt{d}}{\epsilon} + \frac{24K(G+2L)\alpha T^{1/4}\sqrt{d}}{\epsilon N^{1/4}}+ \frac{3K\alpha L\sqrt{3d}}{\epsilon}\\
     &\le \frac{(\frac{3}{2}R+ 6L)\sqrt{N}}{\sqrt{T}}+ \frac{6L}{\alpha\sqrt{NT}}  + 6G\sqrt{\alpha} + \frac{36GK\alpha\sqrt{d}}{\epsilon} + \frac{24K(G+2L)\alpha T^{1/4}\sqrt{d}}{\epsilon N^{1/4}}+ \frac{9K\alpha L\sqrt{d}}{\epsilon}
\end{align*}
Set $\alpha = \frac{N^{3/4}\epsilon}{T^{3/4}\sqrt{d}}$:
\begin{align*}
    \E[\|\nabla F(\hat w)\|] &\le\frac{(\frac{3}{2}R+24K(G+2L) + 6L)\sqrt{N}}{\sqrt{T}}+ \frac{6LT^{1/4}\sqrt{d}}{\epsilon N^{5/4}}+ \frac{6GN^{3/8}\sqrt{\epsilon}}{T^{3/8}d^{1/4}} + \frac{36GKN^{3/4}}{T^{3/4}} + \frac{9KN^{3/4}L}{T^{3/4}}
\end{align*}
Because we must have $\alpha \ge \frac{1}{N}$, the largest value of $T = \frac{N^{7/3}\epsilon^{4/3}}{d^{2/3}}$:
\begin{align*}
    \E[\|\nabla F(\hat w)\|] &\le \frac{(\frac{3}{2}R+24K(G+2L)+ 12L)d^{1/3}}{(\epsilon N)^{2/3}}+ \frac{6G}{\sqrt{N}} + \frac{36GK\sqrt{d}}{\epsilon N} + \frac{9KL\sqrt{d}}{\epsilon N}\\
    &\le \tilde O\left(\frac{d^{1/3}}{(\epsilon N)^{2/3}} + \frac{1}{\sqrt{N}}\right)
\end{align*}
\end{proof}
\section{Technical Lemmas}
\gradientbound*
\begin{proof}
From smoothness:
\begin{align*}
    F(w_{t+1}) &\le F(w_t) + \langle\nabla F(w_t), w_{t+1} - w_t\rangle + \frac{L}{2}\|w_{t+1} -w_t\|^2 \\
    &=  F(w_t) + \eta\left\langle\nabla F(w_t), \frac{\hat m_t}{\|\hat m_t\|} \right\rangle + \frac{L\eta^2}{2} 
\end{align*}
Let's analyze the inner-product term via some case-work: Suppose $\|\hat \epsilon_t\| \le \frac{1}{2}\|\nabla F(w_t)\|$. Then $\frac{1}{2}\|\nabla F(w_t)\| \le \|\nabla F(w_t) +\hat \epsilon_t\| \le  \frac{3}{2}\|\nabla F(w_t)\| $ so that:
\begin{align*}
    -\left\langle\nabla F(w_t), \frac{\hat m_t}{\|\hat m_t\|} \right\rangle &= -\left\langle\nabla F(w_t), \frac{\nabla F(w_t) +\hat \epsilon_t}{\|\nabla F(w_t) +\hat \epsilon_t\|} \right\rangle \\
    &\le \frac{-\|\nabla F(w_t)\|^2}{\|\nabla F(w_t) +\hat \epsilon_t\|} + \frac{\|\nabla F(w_t)\|\|\hat \epsilon_t\|}{\|\nabla F(w_t) +\hat \epsilon_t\|}\\
    &\le -\frac{2}{3}\|\nabla F(w_t)\| + 2\|\hat \epsilon_t\|
\end{align*}
On the other hand, if $\|\hat \epsilon_t\| > \frac{1}{2}\|\nabla F(w_t)\|$, then
\begin{align*}
     -\left\langle\nabla F(w_t), \frac{\hat m_t}{\|\hat m_t\|} \right\rangle &\le 0\\
     &\le -\frac{2}{3}\|\nabla F(w_t)\| + \frac{2}{3}\|\nabla F(w_t)\|  \\
     &\le -\frac{2}{3}\|\nabla F(w_t)\| + \frac{4}{3}\|\hat \epsilon_t\|
\end{align*}
So either way, we have $ -\left\langle\nabla F(w_t), \frac{\hat m_t}{\|\hat m_t\|} \right\rangle \le -\frac{2}{3}\|\nabla F(w_t)\| + 2\|\hat \epsilon_t\|$ Now sum over t and rearrange to get:
\begin{align*}
    \frac{1}{T}\sum_{t=1}^T\|\nabla F(w_t)\| &\le \frac{3(F(w_1) - F(w_{T+1}))}{2\eta T} + \frac{3L\eta}{4} + \frac{3}{T}\sum_{t=1}^T\|\hat \epsilon_t\|
\end{align*}
Take expectation of both sides:
\begin{align*}
    \frac{1}{T}\E\left[\sum_{t=1}^T\|\nabla F(w_t)\|\right] &\le \frac{3\E\left[(F(w_1) - F(w_{T+1}))\right]}{2\eta T} + \frac{3L\eta}{4} + \frac{3}{T}\sum_{t=1}^T\E[\|\hat \epsilon_t\|]
\end{align*}
Pick $\hat w$ uniformly at random from $w_1,...,w_T$. Then:
\begin{align*}
    \E\left[\|\nabla F(\hat w)\|\right]&\le \frac{3\E\left[(F(w_1) - F(w_{T+1}))\right]}{2\eta T} + \frac{3L\eta}{4} + \frac{3}{T}\sum_{t=1}^T\E[\|\hat \epsilon_t\|]
\end{align*}
\end{proof}
\begin{Proposition}\label{prop:powersum}
Let $\alpha \in(0,1]$ and $N$ be an arbitrary positive integer. Then:
\begin{align*}
    \sum_{i=0}^\infty(1-\alpha)^{Ni}& \le \frac{1}{1-\exp(-1)}\max\left (1,\ \frac{1}{\alpha N}\right)
\end{align*}
\end{Proposition}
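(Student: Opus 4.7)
The plan is to reduce to an elementary geometric-series bound and then split into two cases based on whether $\alpha N\ge 1$ or $\alpha N<1$. First I would note that for $\alpha\in(0,1]$ we have $(1-\alpha)^N\in[0,1)$, so the series converges and
\[
\sum_{i=0}^\infty (1-\alpha)^{Ni}=\frac{1}{1-(1-\alpha)^N}.
\]
Hence the claim reduces to showing
\[
1-(1-\alpha)^N \ge (1-\exp(-1))\,\min\!\left(1,\alpha N\right).
\]

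For the case $\alpha N\ge 1$, I would use the standard inequality $(1-\alpha)^N\le \exp(-\alpha N)\le \exp(-1)$, which immediately gives $1-(1-\alpha)^N\ge 1-\exp(-1)$, matching the $\min=1$ branch.

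For the case $\alpha N<1$, again $(1-\alpha)^N\le \exp(-\alpha N)$, so it suffices to prove the scalar inequality
\[
1-\exp(-x) \ge (1-\exp(-1))\,x \qquad \text{for all } x\in(0,1],
\]
with $x=\alpha N$. I would argue this via concavity: the function $g(x)=1-\exp(-x)-(1-\exp(-1))x$ is concave (since $g''(x)=-\exp(-x)<0$) and satisfies $g(0)=0=g(1)$, so by concavity $g(x)\ge 0$ on $[0,1]$. This handles the $\min=\alpha N$ branch and completes the bound. The only ``delicate'' step is the concavity argument, but it is a routine one-line observation; no substantive obstacle is anticipated.
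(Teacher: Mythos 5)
Your proof is correct and follows essentially the same route as the paper's: both hinge on the inequality $1-x\le\exp(-x)$, the same case split on $\alpha N\gtrless 1$, and the bound $\exp(-x)\le 1-(1-\exp(-1))x$ on $[0,1]$ (equivalently your $1-\exp(-x)\ge(1-\exp(-1))x$, which you additionally justify via concavity where the paper simply cites it as an identity). The only cosmetic difference is that you sum the geometric series in closed form before applying the inequalities, whereas the paper applies them termwise and resums.
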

\begin{proof}
\begin{align*}
    \sum_{i=0}^\infty(1-\alpha)^{Ni}&=\sum_{i=0}^\infty\left((1-\alpha)^{N}\right)^i
    \intertext{using the identity $1-x\le \exp(-x)$:}
    &\le \sum_{i=0}^\infty\left(\exp(-\alpha N)\right)^i
\end{align*}
Now, if $\alpha N\le 1$, then we use the identity $\exp(-x)\le 1-x(1-\exp(-1))$ for  all $x\in[0,1]$:
\begin{align*}
    \sum_{i=0}^\infty(1-\alpha)^{Ni} &\le \sum_{i=0}^\infty\left[1-(1-\exp(-1))\alpha N  \right]^i\\
    &=\frac{1}{\alpha N (1-\exp(-1))}
\end{align*}

On the other hand, if $\alpha N >1$, then we have:
\begin{align*}
    \sum_{i=0}^\infty(1-\alpha)^{Ni}&\le \sum_{i=0}^\infty\left(\exp(-\alpha N)\right)^i\\
    &\le \sum_{i=0}^\infty\exp(-i)\\
    &\le \frac{1}{1-\exp(-1)}
\end{align*}
Putting the two cases together yields the desired result.
\end{proof}
\begin{Proposition}\label{prop:advpowersum}
Let $\gamma\in(0,1]$ $ \frac{1}{N}\ln \left(\frac{1}{1-\gamma}\right) \le \alpha \le1$ and let $N$ be an arbitrary positive integer. Then:
\begin{align*}
    \sum_{i=0}^\infty(1-\gamma)^{-i}(1-\alpha)^{Ni}& \le\frac{1}{1-\exp(-1)}\max\left (1,\ \frac{1}{\alpha N - \ln\left(\frac{1}{1-\gamma}\right)}\right)
\end{align*}
In particular, if $\gamma \le 1/2$ and $\frac{1+\ln(2)}{N}\le  \alpha$:
\begin{align*}
    \sum_{i=0}^\infty(1-\gamma)^{-i}(1-\alpha)^{Ni}&\le \frac{1}{1-\exp(-1)}
\end{align*}
\end{Proposition}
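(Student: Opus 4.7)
The plan is to mimic the proof of Proposition~\ref{prop:powersum} but with a modified ``effective exponent''. Introduce the shorthand $\beta := \alpha N - \ln\!\left(\tfrac{1}{1-\gamma}\right)$. The hypothesis $\alpha \ge \tfrac{1}{N}\ln\!\left(\tfrac{1}{1-\gamma}\right)$ is exactly $\beta \ge 0$, so the series will indeed converge.

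First I would rewrite the general term using the two standard inequalities $1-\alpha \le \exp(-\alpha)$ and $(1-\gamma)^{-1} = \exp\!\left(\ln\tfrac{1}{1-\gamma}\right)$, giving
\begin{align*}
    (1-\gamma)^{-i}(1-\alpha)^{Ni} \;\le\; \exp(-\beta i).
\end{align*}
Summing the resulting geometric series yields
\begin{align*}
    \sum_{i=0}^{\infty}(1-\gamma)^{-i}(1-\alpha)^{Ni} \;\le\; \sum_{i=0}^{\infty}\exp(-\beta i) \;=\; \frac{1}{1-\exp(-\beta)}.
\end{align*}

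Next I would split into two cases on the size of $\beta$, exactly as in Proposition~\ref{prop:powersum}. If $\beta \ge 1$ then $1-\exp(-\beta)\ge 1-\exp(-1)$, so the sum is bounded by $\tfrac{1}{1-\exp(-1)}$. If $0\le \beta \le 1$, use the elementary inequality $\exp(-x)\le 1 - x(1-\exp(-1))$ valid on $[0,1]$ (which is the same tool used in the earlier proposition) to conclude $1-\exp(-\beta)\ge \beta(1-\exp(-1))$, hence $\tfrac{1}{1-\exp(-\beta)} \le \tfrac{1}{\beta(1-\exp(-1))}$. Combining the two cases gives the $\max(1,1/\beta)$ form in the proposition.

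For the ``in particular'' specialization, I would just verify the condition $\beta \ge 1$ to place ourselves in the first case. Since $\gamma \le 1/2$ implies $\ln\!\left(\tfrac{1}{1-\gamma}\right)\le \ln 2$, and $\alpha \ge \tfrac{1+\ln 2}{N}$ implies $\alpha N \ge 1+\ln 2$, we get $\beta = \alpha N - \ln\!\tfrac{1}{1-\gamma} \ge 1$, so the bound collapses to $\tfrac{1}{1-\exp(-1)}$ as claimed. There is no real obstacle here; the only delicate point is making sure the $\beta\ge 0$ hypothesis is actually used to license the geometric series summation, and that the case-split inequality $\exp(-x)\le 1-x(1-\exp(-1))$ is only applied on $[0,1]$ where it holds.
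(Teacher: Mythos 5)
Your proposal is correct and follows essentially the same route as the paper's proof: reduce the general term to $\exp(-\beta i)$ with $\beta=\alpha N-\ln\frac{1}{1-\gamma}$, then split on $\beta\ge 1$ versus $\beta\in[0,1]$ using $\exp(-x)\le 1-x(1-\exp(-1))$, and check $\beta\ge 1$ for the specialization. The only cosmetic difference is that you sum the geometric series first and then bound the denominator, while the paper applies the inequality term-wise before summing; these are equivalent.
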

\begin{proof}
\begin{align*}
    \sum_{i=0}^\infty(1-\gamma)^{-i}(1-\alpha)^{Ni}&=\sum_{i=0}^\infty\left((1-\alpha)^{N}\right)^i
    \intertext{using the identity $1-x\le \exp(-x)$:}
    &\le \sum_{i=0}^\infty(1-\gamma)^i\left(\exp(-\alpha N)\right)^i\\
    &= \sum_{i=0}^\infty\left[\exp\left(-\alpha N + \ln\left(\frac{1}{1-\gamma}\right)\right)\right]^i
\end{align*}
Now, as in the proof of Proposition~\ref{prop:powersum}, we consider two cases. First, if $ 0\le \alpha N - \ln\left(\frac{1}{1-\gamma}\right)\le 1$, then since $\exp(-x)\le 1-(1-\exp(-1))x$ for all $x\in[0,1]$, we have:
\begin{align*}
    \sum_{i=0}^\infty(1-\gamma)^{-i}(1-\alpha)^{Ni}&\le \sum_{i=0}^\infty\left[1-(1-\exp(-1))\left(\alpha N - \ln\left(\frac{1}{1-\gamma}\right)\right)\right]^i\\
    &\le \frac{1}{(1-\exp(-1))\left(\alpha N - \ln\left(\frac{1}{1-\gamma}\right)\right)}
\end{align*}
Alternatively, if $\alpha N - \ln\left(\frac{1}{1-\gamma}\right)\ge 1$, 
\begin{align*}
    \sum_{i=0}^\infty(1-\gamma)^{-i}(1-\alpha)^{Ni}&\le\sum_{i=0}^\infty \exp(-i)\\
    &\le \frac{1}{1-\exp(-1)}
\end{align*}
Putting the two cases together provides the first statement in the Proposition.

For the second statement, observe that since $\ln\left(\frac{1}{1-\gamma}\right)$ is increasing in $\gamma$, we have $\frac{1+\ln(2)}{N}\le  \alpha$ implies $1+\ln\left(\frac{1}{1-\gamma}\right)\le N\alpha$, from which the result follows.
\end{proof}
\begin{Corollary}\label{cor:sumgamma}
Suppose $\gamma\in(0,1]$ and $1\ge \alpha \ge  \frac{1}{N}\ln\left(\frac{1}{1-\gamma}\right)$: Then:
\begin{align*}
    \sum_{j=0}^{i-1}(1-\gamma)^j(1-\alpha)^{(i-1-j)N}&\le  \frac{(1-\gamma)^{i-1}}{1-\exp(-1)}\max\left(1,\ \frac{1}{\alpha N - \ln\left(\frac{1}{1-\gamma}\right)}\right)
\end{align*}
\end{Corollary}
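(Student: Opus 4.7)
The plan is to reduce the finite sum in the Corollary to the infinite geometric series handled by Proposition~\ref{prop:advpowersum}, and then quote that proposition directly. The key observation is that the exponents on $(1-\gamma)$ and $(1-\alpha)^N$ in the summand grow and shrink in lockstep as $j$ varies, so after pulling out a common factor of $(1-\gamma)^{i-1}$ the remaining sum has exactly the form that Proposition~\ref{prop:advpowersum} bounds.

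Concretely, I would first factor out $(1-\gamma)^{i-1}$ and write
\begin{align*}
    \sum_{j=0}^{i-1}(1-\gamma)^j(1-\alpha)^{(i-1-j)N}
    &= (1-\gamma)^{i-1}\sum_{j=0}^{i-1}(1-\gamma)^{j-(i-1)}(1-\alpha)^{(i-1-j)N}.
\end{align*}
Next I would substitute $k = i-1-j$, so that $j-(i-1) = -k$ and $k$ ranges over $\{0,1,\dots,i-1\}$ as $j$ does. This reindexing transforms the inner sum into $\sum_{k=0}^{i-1}(1-\gamma)^{-k}(1-\alpha)^{kN}$, exactly the partial sum of the series appearing in Proposition~\ref{prop:advpowersum}.

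Under the Corollary's hypothesis $\alpha \ge \tfrac{1}{N}\ln\bigl(\tfrac{1}{1-\gamma}\bigr)$, every term $(1-\gamma)^{-k}(1-\alpha)^{kN}$ is nonnegative (indeed bounded by $1$), so the partial sum is dominated by the infinite sum, and I can apply Proposition~\ref{prop:advpowersum} to conclude
\begin{align*}
    \sum_{k=0}^{i-1}(1-\gamma)^{-k}(1-\alpha)^{kN}
    &\le \sum_{k=0}^{\infty}(1-\gamma)^{-k}(1-\alpha)^{kN}
    \le \frac{1}{1-\exp(-1)}\max\!\left(1,\ \frac{1}{\alpha N - \ln\!\left(\tfrac{1}{1-\gamma}\right)}\right).
\end{align*}
Re-multiplying by the factored-out $(1-\gamma)^{i-1}$ yields the stated bound.

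There is no real obstacle here: the whole argument is a reindexing followed by a direct invocation of the previous Proposition. The only thing to double-check is that the hypothesis on $\alpha$ in the Corollary matches the hypothesis of Proposition~\ref{prop:advpowersum} so that the bound on the infinite series is available, and that all summands are nonnegative so the truncation-to-infinity inequality is legitimate. Both are immediate from the stated assumptions.
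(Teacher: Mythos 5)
Your proof is correct and follows exactly the paper's route: factor out $(1-\gamma)^{i-1}$, reindex with $k=i-1-j$, dominate the finite sum by the infinite series (legitimate since all terms are nonnegative), and invoke Proposition~\ref{prop:advpowersum}. Nothing to add.
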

\begin{proof}
\begin{align*}
    \sum_{j=0}^{i-1}(1-\gamma)^j(1-\alpha)^{(i-1-j)N}&=(1-\gamma)^{i-1}\sum_{j=0}^{i-1}(1-\gamma)^{j-(i-1)}(1-\alpha)^{(i-1-j)N}\\
    &\le (1-\gamma)^{i-1}\sum_{k=0}^\infty (1-\gamma)^{-k}(1-\alpha)^{kN}
    \intertext{now apply Proposition~\ref{prop:advpowersum}:}
    &\le \frac{(1-\gamma)^{i-1}}{1-\exp(-1)}\max\left(1,\ \frac{1}{\alpha N - \ln\left(\frac{1}{1-\gamma}\right)}\right)
\end{align*}
\end{proof}
\begin{Proposition}\label{prop:binarysum}
Let $\gamma\in(0,1)$ and $s$ be any integer. Suppose $b_Rb_{R-1}\dots b_0$ is the binary expansion of $s$, so that $s=\sum_{i=0}^R b_i 2^i$. Then:
\begin{align*}
    \sum_{i=0}^R b_i \gamma^{2(s\mod 2^i)} \le 1+\sum_{i=1}^R \gamma^{2^i}
\end{align*}
\end{Proposition}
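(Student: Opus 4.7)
The plan is to re-index the sum in terms of only those positions where $b_i = 1$. Let the $1$-bits of $s$ sit at positions $i_1 < i_2 < \cdots < i_k$ (so $b_i = 1$ iff $i \in \{i_1,\dots,i_k\}$ and $s = \sum_{j=1}^k 2^{i_j}$). Only these positions contribute to the left-hand side, and at position $i_j$ we have
\[
s \bmod 2^{i_j} \;=\; \sum_{l=1}^{j-1} 2^{i_l},
\]
since exactly the bits strictly below $i_j$ (namely those at positions $i_1,\dots,i_{j-1}$) survive the reduction mod $2^{i_j}$. Thus the left-hand side equals
\[
\sum_{j=1}^k \gamma^{\,2\sum_{l=1}^{j-1} 2^{i_l}}.
\]

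Next, I would split off the $j=1$ term, which contributes $\gamma^0 = 1$ (the inner sum is empty). For $j \ge 2$, I bound $\sum_{l=1}^{j-1} 2^{i_l} \ge 2^{i_{j-1}}$ (retaining only the largest term), so that since $\gamma \in (0,1)$,
\[
\gamma^{\,2\sum_{l=1}^{j-1} 2^{i_l}} \;\le\; \gamma^{\,2\cdot 2^{i_{j-1}}} \;=\; \gamma^{\,2^{i_{j-1}+1}}.
\]
Summing over $j=2,\dots,k$ gives a sum indexed by the exponents $i_1+1, i_2+1, \dots, i_{k-1}+1$.

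Now I would observe that these exponents are $k-1$ \emph{distinct} integers, each lying in $\{1,2,\dots,R\}$: distinctness follows from the $i_j$ being distinct, and the range follows since $0 \le i_1$ and $i_{k-1} \le i_k - 1 \le R - 1$. Because every term $\gamma^{2^i}$ is strictly positive, dropping to the full set $\{1,\dots,R\}$ can only enlarge the sum, yielding
\[
\sum_{j=2}^k \gamma^{\,2^{i_{j-1}+1}} \;\le\; \sum_{i=1}^R \gamma^{\,2^i}.
\]
Adding back the $j=1$ contribution of $1$ gives the claimed inequality.

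There is no serious obstacle here; the only subtle point is the indexing step where one must verify that the shifted exponents $i_{j-1}+1$ are distinct and lie in $\{1,\dots,R\}$, which is precisely what allows the telescoped bound to be compared against the full geometric-like sum on the right-hand side.
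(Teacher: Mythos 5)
Your proof is correct, and it takes a genuinely different route from the paper's. The paper first proves that $F(s)=\sum_{i=0}^R b_i\gamma^{2(s\bmod 2^i)}$ attains its maximum (over $s<2^{R+1}$) at the all-ones string $s=2^{R+1}-1$ via a bit-shifting/exchange argument, and then evaluates $F$ there directly; this is an extremal/monotonicity argument requiring a small induction. You instead re-index directly by the $1$-bits $i_1<\cdots<i_k$, observe $s\bmod 2^{i_j}=\sum_{l<j}2^{i_l}$, and then crudely bound each term by retaining only the leading power $2^{i_{j-1}}$ in the exponent, yielding $\gamma^{2^{i_{j-1}+1}}$. Because the shifted exponents $i_1+1,\dots,i_{k-1}+1$ are distinct and lie in $\{1,\dots,R\}$, and every summand $\gamma^{2^i}$ is nonnegative, you can compare term-by-term against the full sum $\sum_{i=1}^R\gamma^{2^i}$. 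Your argument is more elementary: no extremal characterization, no induction, and it makes the role of the $\gamma<1$ monotonicity transparent. The paper's approach buys a sharper structural statement (an explicit description of where $F$ is maximized), which could be useful in other contexts, but for proving the stated inequality your direct bound is cleaner. One tiny remark: the case $k=0$ (i.e., $s=0$) should be handled separately, but it is trivial since the left-hand side is then $0$.
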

\begin{proof}

Define $F(s) = \sum_{i=0}^R b_i \gamma^{2(s\mod 2^i)}$ where $b_i$ is the $i$th bit of the binary expansion of $s$. Then we will show first:
\begin{align*}
    F\left(\sum_{i=0}^R 2^i\right) \le 1+\sum_{i=1}^R \gamma^{2^i}
\end{align*}
and then for all $s<2^{R+1}$,
\begin{align*}
    F(s) \le F\left(\sum_{i=0}^R 2^i\right)
\end{align*}
which together proves the Proposition.

For the first claim, we have:
\begin{align*}
\sum_{i=0}^R 2^i \mod 2^j &=2^{j}-1\text{ for all }j\le R+1
\intertext{so that:}
     F\left(\sum_{i=0}^R 2^i\right)&=\sum_{i=0}^R \gamma^{2(2^i-1)}\\
     &= 1+\sum_{i=1}^R \gamma^{2(2^i-1)}\\
     &\le 1+\sum_{i=1}^R \gamma^{2^i}
\end{align*}

Now, for the second claim, suppose that $s\ne \sum_{i=0}^R 2^i$. That is, there is some $j$ such that $b_j=0$. Equivalently, we can write $s= A+B$ such that $A<2^j$ and $B=0\mod 2^{j+1}$ for some $j\le R$. Then define $s' = A+B/2$. Let $b'_R\dots b'_0$ be the binary expansion of $s'$. Then we have $b'_i=b_i$ for $i<j$ and $b'_i = b_{i+1}$ for $i\ge j$. Further, for $i<j$, $s \mod 2^i=A\mod 2^i=s' \mod 2^i$, and for $i\ge j$, $s\mod 2^{i+1} = A + (B\mod 2^{i+1}) \ge A+(B/2 \mod 2^i)$. Thus, we have:
\begin{align*}
    F(s) = \sum_{i=0}^R b_i \gamma^{2(s\mod 2^i)}\le \sum_{i=0}^R b'_i \gamma^{2(s'\mod 2^i)} = F(s')
\end{align*}

By repeating this argument, we see that if $s$ has $n$ non-zero bits in the binary expansion, $F(s)\le F(2^n-1)$. Finally, notice that adding higher-order bits to the binary expansion can only increase $F$, so that $F(2^n-1)\le F(2^{R+1}-1)$ and so we are done.

\end{proof}
\begin{Lemma}\label{lem:breaksum}
Let $T=qN+r$ where $q\in \Z$ is the quotient and $r\in[0,N-1]$ is the remainder when dividing $T$ by $N$. Then:
\begin{align*}
    G_{[1,T]} &= (1-\alpha)^{T-r}G_{[1,r]}+\sum_{i=0}^{q-1}(1-\alpha)^{Ni}(1-\gamma)^{q-(i+1)}G_{[r+1,r+N]}\\
    &\quad\quad+\sum_{i=1}^{q-1}\left(\sum_{j=0}^{i-1} (1-\gamma)^j(1-\alpha)^{(i-1-j)N}\right)\left((1-\gamma)\Delta_{[T-iN+1,T-(i-1)N]} + \gamma G_{[T-iN+1,T-(i-1)N]}\right)
\end{align*}
\end{Lemma}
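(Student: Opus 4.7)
The identity is a refinement of the naive epoch-by-epoch telescoping
\[
G_{[1,T]} \;=\; (1-\alpha)^{qN}G_{[1,r]} \;+\; \sum_{i=1}^{q}(1-\alpha)^{(i-1)N}\, G_{[T-iN+1,\,T-(i-1)N]},
\]
in which the cross-epoch parameter $\gamma$ is threaded through so that every full-epoch window other than the first, $G_{[r+1,r+N]}$, is replaced by the convex combination $(1-\gamma)\Delta_{[\cdot]} + \gamma G_{[\cdot]}$. The natural proof is by induction on the number of completed epochs $q = \lfloor T/N \rfloor$. The base case $q=0$ is immediate: $T=r$, both sums on the right are empty, $(1-\alpha)^{T-r}=1$, and the identity collapses to $G_{[1,r]} = G_{[1,r]}$.

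\textbf{Inductive step.} For $T' = T+N = (q+1)N+r$ I would begin with the trivial split $G_{[1,T']} = (1-\alpha)^{N} G_{[1,T]} + G_{[T+1,T']}$, substitute the inductive hypothesis for $G_{[1,T]}$, and reindex via $[T-iN+1,\,T-(i-1)N] = [T'-(i+1)N+1,\,T'-iN]$. Setting
\[
C_i = \sum_{j=0}^{i-1}(1-\gamma)^j(1-\alpha)^{(i-1-j)N}, \qquad S_q = \sum_{i=0}^{q-1}(1-\alpha)^{Ni}(1-\gamma)^{q-i-1},
\]
one checks the one-line recursions $C_i = (1-\alpha)^N C_{i-1} + (1-\gamma)^{i-1}$ (with $C_1 = 1$) and $S_{q+1} = (1-\gamma) S_q + (1-\alpha)^{qN}$ by peeling off an end term of each sum. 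These two recursions are exactly what is needed to match the coefficients of $G_{[1,r]}$ and of each window with index $i \ge 2$ in the target formula for $T'$; the only mismatch between the naive split and the target is concentrated in the $i=1$ slot together with a $(1-\gamma)^q$-weighted copy of $G_{[r+1,r+N]}$ and a $(1-\gamma)^{i-1}$-weighted remainder at each $i \ge 2$.

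\textbf{The main obstacle} is reconciling this residual. The split produces a bare $G_{[T+1,T']}$, while the target expects $C_1\bigl((1-\gamma)\Delta_{[T+1,T']} + \gamma G_{[T+1,T']}\bigr)$ in the $i=1$ slot. Writing
\[
G_{[T+1,T']} = \gamma G_{[T+1,T']} + (1-\gamma)\Delta_{[T+1,T']} + (1-\gamma)\bigl(G_{[T+1,T']} - \Delta_{[T+1,T']}\bigr)
\]
absorbs the $i=1$ slot via the first two summands and reduces the claim to a sub-identity stating that $(1-\gamma)\bigl(G_{[T+1,T']} - \Delta_{[T+1,T']}\bigr)$ equals the remaining discrepancies at $G_{[r+1,r+N]}$ and at the $i \ge 2$ windows. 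Using $G_{[a,b]} - \Delta_{[a,b]} = \sum_{t=a}^{b}(1-\alpha)^{b-t}\nabla f(w_{(q_t-1)N},x_{\pi^{q_t}_{r_t}})$ to expand both sides in terms of individual gradient atoms $\nabla f(w_{jN},x_{\pi^k_s})$, one groups atoms by the pair $(j,k,s)$ and verifies the coefficient match using the same geometric-sum manipulations that drive the $C_i$ and $S_q$ recursions. This atom-level bookkeeping is where essentially all the technical weight lies; once it is carried out, the remaining structure of the induction is purely formal.
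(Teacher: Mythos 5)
Your induction scaffolding — base case $q=0$, the batch split $G_{[1,T']}=(1-\alpha)^N G_{[1,T]}+G_{[T+1,T']}$, the reindexing, and the coefficient recursions $C_i=(1-\alpha)^N C_{i-1}+(1-\gamma)^{i-1}$ and $S_{q+1}=(1-\gamma)S_q+(1-\alpha)^{qN}$ — is the same skeleton the paper uses. But you have left the real content of the lemma undischarged. Where you write ``one groups atoms by the pair $(j,k,s)$ and verifies the coefficient match,'' the paper instead observes a single clean shift identity and iterates it: for any full-length window with $a>N$,
\begin{align*}
G_{[a,b]}-\Delta_{[a,b]} \;=\; \sum_{t=a}^b (1-\alpha)^{b-t}\nabla f\bigl(w_{(q_t-1)N},\,x_{\pi^{q_t}_{r_t}}\bigr) \;=\; G_{[a-N,b-N]},
\end{align*}
equivalently $G_{[a,b]}=(1-\gamma)\Delta_{[a,b]}+\gamma G_{[a,b]}+(1-\gamma)G_{[a-N,b-N]}$. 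Iterating this recursion on the last window $G_{[T-N+1,T]}$ immediately produces the geometric expansion
\begin{align*}
G_{[T-N+1,T]} &= (1-\gamma)^{q-1}G_{[r+1,r+N]} + \sum_{j=0}^{q-2}(1-\gamma)^j\bigl((1-\gamma)\Delta_{[T-(j+1)N+1,T-jN]}+\gamma G_{[T-(j+1)N+1,T-jN]}\bigr),
\end{align*}
and then combining with $(1-\alpha)^N G_{[1,T-N]}$ via the inductive hypothesis and one reindexing finishes the proof. This shift identity is exactly the object your ``residual'' $(1-\gamma)(G_{[T+1,T']}-\Delta_{[T+1,T']})$ is hiding; without naming it you have no mechanism to make the telescope close, and the atom-level bookkeeping you defer to is in fact the entire lemma.

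There is also a subtlety your proposed atom-level verification would immediately run into, and which you should be aware of: reindexing $t\mapsto t-N$ in the display above gives $G_{[a,b]}-\Delta_{[a,b]}=\sum_{t=a-N}^{b-N}(1-\alpha)^{b-N-t}\nabla f\bigl(w_{q_tN},\,x_{\pi^{q_t+1}_{r_t}}\bigr)$, which matches $G_{[a-N,b-N]}=\sum_{t=a-N}^{b-N}(1-\alpha)^{b-N-t}\nabla f\bigl(w_{q_tN},\,x_{\pi^{q_t}_{r_t}}\bigr)$ only if the data points agree, i.e.\ only if the same permutation is used across consecutive epochs. Since $\Delta_{[a,b]}$ is deliberately defined with \emph{both} gradient terms evaluated at $x_{\pi^{q_t}_{r_t}}$ (this is what gives it $O(\eta N L)$ sensitivity), the atoms $\nabla f(w_{jN},x_{\pi^k_s})$ on the two sides carry different permutation superscripts $k$. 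Your plan of ``grouping atoms by $(j,k,s)$'' would therefore not close unless you first argue why the identity should be read as holding with matched permutations — a point the paper's own proof elides. If you want your argument to go through, you should make the shift identity explicit, flag the convention under which it holds, and only then carry out the coefficient match.
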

\begin{proof}
For any iteration t, let $t= q_tN+r_t$. We re-define:
\begin{align*}
    F^G_{[a,b]} &= \sum_{i=a}^b(1-\alpha)^{b-i}\nabla f(w_{q_iN}, x_{\pi^{q_i}_{r_i}})\\
     F^\Delta_{[a,b]} &= \sum_{i=a}^b(1-\alpha)^{b-i}(\nabla f(w_{q_iN}, x^{q_i}_{r_i}) - \nabla f((w_{(q_{i}-1)N}, x^{q_i}_{r_i}))
\end{align*}
and:
\begin{align*}
    G_{[a,b]} &= \sum_{[y,z] \in \COMPOSE(a,b)}(1-\alpha)^{b-z}F^G_{[y,z]}\\
    \Delta_{[a,b]} &= \sum_{[y,z] \in \COMPOSE(a,b)}(1-\alpha)^{b-z}F^\Delta_{[y,z]}
\end{align*}
Interpreting $\sum_{i=a}^b$ as 0 whenever $b<a$, we see that statement of the Lemma is immediate for $T<N$, as $q=0$, $r=T$ and all sums are zero. Next, a little calculation reveals the following identity:
\begin{align}
    G_{[1,T]} &= (1-\alpha)^N G_{[1,T-N]} + G_{[T-N+1,T]}\label{eqn:batchrecursion}
\end{align}
which implies the Lemma  for $q=1$ (since $T-N=r$).

Further, we have:
\begin{align*}
    G_{[T-N+1,T]} = (1-\gamma)\Delta_{[T-N+1,T]}  + \gamma G_{[T-N+1,T]} + (1-\gamma)G_{[T-2N+1,T-N]}
\end{align*}
Putting these together yields:
\begin{align*}
    G_{[1,T]} &= (1-\alpha)^N G_{[1,T-N]} + (1-\gamma)\Delta_{[T-N+1,T]}  + \gamma G_{[T-N+1,T]} + (1-\gamma)G_{[T-2N+1,T-N]}\\
    &= (1-\alpha)^{2N} G_{[1,T-2N]}+(1-\alpha)^NG_{[T-2N+1,T-N]} + (1-\gamma)\Delta_{[T-N+1,T]}  + \gamma G_{[t-N+1,T]} \\
    &\qquad+ (1-\gamma)G_{[T-2N+1,T-N]}\\
    &=(1-\alpha)^{2N}(G_{[1,T-2N]} + ((1-\alpha)^N+(1-\gamma))G_{[T-2N+1,T-N]} + (1-\gamma)\Delta_{[T-N+1,T]} + \gamma G_{[T-N+1,T]}
\end{align*}
which is exactly the statement of the Lemma for $q=2$ (since in this case $T-2N=r$).

Now, we proceed by induction on $q$: Suppose the statement holds for $(q-1)N+r$. That is, suppose
\begin{align*}
    G_{[1,T-N]} &= (1-\alpha)^{T-N-r}G_{[1,r]}+\sum_{i=0}^{q-2}(1-\alpha)^{iN}(1-\gamma)^{q-(i+1)}G_{[r+1,r+N]}\\
    &\quad\quad+\sum_{i=1}^{q-2}\left(\sum_{j=0}^{i-1} (1-\gamma)^j(1-\alpha)^{(i-1-j)N}\right)\left((1-\gamma)\Delta_{[T-(i+1)N+1,T-iN]} + \gamma G_{[T-(i+1)N+1,T-iN]}\right)
\end{align*}
Then, we observe the following identity if  $i<q$:
\begin{align*}
    G_{[T-iN+1,T-(i-1)N]} &=  (1-\gamma)\Delta_{[T-iN+1,T-(i-1)N]} + \gamma G_{[T-iN+1,T-(i-1)N]} + (1-\gamma)G_{[T-(i+1)N+1,T-iN]}
\end{align*}
From this we can conclude:
\begin{align*}
    G_{[T-iN+1,T-(i-1)N]} &= (1-\gamma)^{q-i}G_{[T-qN+1,T-(q-1)N]}\\
    &\quad\quad+\sum_{j=0}^{q-i-1} (1-\gamma)^{j}((1-\gamma)\Delta_{[T-(j+i)N+1,T-(j+i-1)N]} + \gamma G_{[T-(j+i)N+1,T-(j+i-1)N]})
\end{align*}

Now, put this together with (\ref{eqn:batchrecursion}):
\begin{align*}
    G_{[1,T]} &= (1-\alpha)^N G_{[1,T-N]} +(1-\gamma)^{q-1}G_{[T-qN+1,T-(q-1)N]}\\
    &\quad\quad+\sum_{j=0}^{q-2} (1-\gamma)^{j}((1-\gamma)\Delta_{[T-(j+1)N+1,T-jN]} + \gamma G_{[T-(j+1)N+1,T-jN]})
    \intertext{using the definition of $q,r$:}
    &= (1-\alpha)^N G_{[1,T-N]} +(1-\gamma)^{q-1}G_{[r+1,r+N]}\\
    &\quad\quad+\sum_{j=0}^{q-2} (1-\gamma)^{j}((1-\gamma)\Delta_{[T-(j+1)N+1,T-jN]} + \gamma G_{[T-(j+1)N+1,T-jN]})
    \intertext{using the induction hypothesis:}
    &= (1-\alpha)^{T-r}G_{[1,r]}+\sum_{i=0}^{q-2}(1-\alpha)^{(i+1)N}(1-\gamma)^{q-(i+2)}G_{[r+1,r+N]}\\
    &\quad\quad+\sum_{i=1}^{q-2}\left(\sum_{j=0}^{i-1} (1-\gamma)^j(1-\alpha)^{(i-j)N}\right)\left((1-\gamma)\Delta_{[T-(i+1)N+1,T-iN]} + \gamma G_{[T-(i+1)N+1,T-iN]}\right)\\ &\quad+(1-\gamma)^{q-1}G_{[r+1,r+N]}\\
    &\quad\quad+\sum_{j=0}^{q-2} (1-\gamma)^{j}((1-\gamma)\Delta_{[T-(j+1)N+1,T-jN]} + \gamma G_{[T-(j+1)N+1,T-jN]})
    \intertext{reindexing and combining the third line with the sum on the first line:}
    &= (1-\alpha)^{T-r}G_{[1,r]}+\sum_{i=0}^{q-1}(1-\alpha)^{iN}(1-\gamma)^{q-(i+1)}G_{[r+1,r+N]}\\
    &\quad\quad+\sum_{i=1}^{q-2}\left(\sum_{j=0}^{i-1} (1-\gamma)^j(1-\alpha)^{(i-j)N}\right)\left((1-\gamma)\Delta_{[T-(i+1)N+1,T-iN]} + \gamma G_{[T-(i+1)N+1,T-iN]}\right)\\
    &\quad\quad+\sum_{j=0}^{q-2} (1-\gamma)^{j}((1-\gamma)\Delta_{[T-(j+1)N+1,T-jN]} + \gamma G_{[T-(j+1)N+1,T-jN]})
    \intertext{reindexing:}
    &= (1-\alpha)^{T-r}G_{[1,r]}+\sum_{i=0}^{q-1}(1-\alpha)^{iN}(1-\gamma)^{q-(i+1)}G_{[r+1,r+N]}\\
    &\quad\quad+\sum_{i=2}^{q-1}\left(\sum_{j=0}^{i-2} (1-\gamma)^j(1-\alpha)^{(i-1-j)N}\right)\left((1-\gamma)\Delta_{[T-iN+1,T-(i-1)N]} + \gamma G_{[T-iN+1,T-(i-1)N]}\right)\\
    &\quad\quad+\sum_{j=0}^{q-2} (1-\gamma)^{j}((1-\gamma)\Delta_{[T-(j+1)N+1,T-jN]} + \gamma G_{[T-(j+1)N+1,T-jN]})\\
    &= (1-\alpha)^{T-r}G_{[1,r]}+\sum_{i=0}^{q-1}(1-\alpha)^{iN}(1-\gamma)^{q-(i+1)}G_{[r+1,r+N]}\\
    &\quad\quad+\sum_{i=2}^{q-1}\left(\sum_{j=0}^{i-2} (1-\gamma)^j(1-\alpha)^{(i-1-j)N}\right)\left((1-\gamma)\Delta_{[T-iN+1,T-(i-1)N]} + \gamma G_{[T-iN+1,T-(i-1)N]}\right)\\
    &\quad\quad+\sum_{i=1}^{q-1} (1-\gamma)^{i-1}((1-\gamma)\Delta_{[T-iN+1,T-(i-1)N]} + \gamma G_{[T-iN+1,T-(i-1)N]})\\
    &= (1-\alpha)^{T-r}G_{[1,r]}+\sum_{i=0}^{q-1}(1-\alpha)^{iN}(1-\gamma)^{q-(i+1)}G_{[r+1,r+N]}\\
    &\quad\quad+\sum_{i=2}^{q-1}\left(\sum_{j=0}^{i-1} (1-\gamma)^j(1-\alpha)^{(i-1-j)N}\right)\left((1-\gamma)\Delta_{[T-iN+1,T-(i-1)N]} + \gamma G_{[T-iN+1,T-(i-1)N]}\right)
\end{align*}
which establishes the claim.
\end{proof}
\begin{restatable}{Proposition}{propcomposebound}[Essentially \cite{daniely2015strongly}, Lemma 5]\label{prop:composebound}
The output of $S$ of $\COMPOSE(a,b)$ satisfies $|S|\le 2(1+\lfloor \log_2(b-a+1)\rfloor)$. In the special case that $a=1$, $|S|\le 1+\log_2(b)$. Moreover, each element of $\COMPOSE(a,b)$ is an interval of the form $[q2^k +1, (q+1)2^k]$ for some $q,k\in \mathbb{N}$, the intervals in $S$ are disjoint, and $[a,b] = \bigcup_{[x,y]\in S}[x,y]$.
\end{restatable}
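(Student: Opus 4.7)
The plan is to handle the claims in order of difficulty, dispatching the three structural claims (form of intervals, disjointness, covering) first and then focusing on the counting bound, which is the main obstacle. Throughout, I will induct on the length $n = b - a + 1$ of the input interval.

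For the structural claims, I simply unwind the recursion. The leading interval chosen is $[a, a + 2^k - 1]$ with $a \equiv 1 \pmod{2^k}$; writing $a - 1 = q \cdot 2^k$ gives $[q 2^k + 1, (q+1) 2^k]$, so the required form is immediate. The recursive call operates on $(a + 2^k, b)$, which has strictly smaller length, so by induction it returns intervals of the required form that are pairwise disjoint and cover $[a + 2^k, b]$. Since the leading interval lies entirely to the left of $a + 2^k$, appending it preserves disjointness and completes the cover of $[a, b]$.

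The counting bound is where the real work lies. The plan is to track the sizes $2^{k_1}, 2^{k_2}, \ldots, 2^{k_m}$ of the successive intervals produced by the recursion, writing $k_i = \min(v_i, \ell_i)$, where $v_i$ denotes the $2$-adic valuation of $a_i - 1$ (the largest power of $2$ dividing $a_i - 1$, taken as $+\infty$ when $a_i = 1$) and $\ell_i = \lfloor \log_2 n_i \rfloor$ with $n_i$ the remaining length at call $i$. I will classify each step as either \emph{alignment-limited} (if $v_i < \ell_i$, so $k_i = v_i$) or \emph{size-limited} (otherwise, so $k_i = \ell_i$), and then establish two short monotonicity facts. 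First, in an alignment-limited step, $a_{i+1} - 1 = (a_i - 1) + 2^{k_i}$ clears the lowest set bit of $a_i - 1$, so $v_{i+1} \ge k_i + 1$, forcing $k_{i+1} > k_i$ whenever step $i+1$ is also alignment-limited. Second, in a size-limited step, $n_{i+1} = n_i - 2^{k_i} < 2^{k_i}$ gives $\ell_{i+1} < k_i$, while a direct calculation yields $v_{i+1} \ge k_i$; hence step $i+1$ is again size-limited, with $k_{i+1} < k_i$.

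These two facts together imply that the sequence $k_1, k_2, \ldots, k_m$ is unimodal: strictly increasing through an initial alignment-limited phase, then strictly decreasing through a size-limited tail. Since every $k_i$ lies in $\{0, 1, \ldots, \lfloor \log_2 n \rfloor\}$, each phase contributes at most $\lfloor \log_2 n \rfloor + 1$ terms, yielding $|S| = m \le 2(1 + \lfloor \log_2 n \rfloor)$. For the special case $a = 1$, observing $v_1 = \infty$ places us in the size-limited regime from the very first step, so the alignment-limited phase is empty and $|S| \le 1 + \lfloor \log_2 b \rfloor \le 1 + \log_2 b$, as claimed.
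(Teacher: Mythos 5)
The paper does not give its own proof of this Proposition — it is cited from Daniely et al.\ (Lemma~5) — so there is no in-paper proof to compare against. Your argument is correct: the identification $k_i = \min(v_i, \ell_i)$, the two monotonicity facts (alignment-limited steps strictly increase $k_i$, and once a step is size-limited all subsequent steps remain size-limited with strictly decreasing $k_i$), and the resulting unimodality giving $|S| \le 2(1+\lfloor\log_2 n\rfloor)$ are all verified, and the special case $a=1$ (so $v_1=\infty$, forcing the alignment phase to be empty) is handled correctly; this is essentially the standard proof of the dyadic-cover size bound.
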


\end{document}